\def\wrt{\mbox{w.r.t.}}
\def\ssupporting{\mbox{self-supporting}}
\def\cfree{\mbox{conflict-free}}
\def\A{\mathbf{A}}
\def\PF{\mathbf{P}}
\def\R{\mathbf{R}}
\def\EBAF{\mathbf{E\hspace{-1pt}F}}
\def\EBAFF{\EBAF\hspace{-1pt}=\hspace{-1pt}\tuple{\A\hspace{-1pt},\hspace{-1pt}\R_a\hspace{-1pt},\hspace{-1pt}\R_s\hspace{-1pt},\PF}}
\def\AF{\mathbf{A\hspace{-1pt}F}}
\def\AFF{\AF\hspace{-1pt}=\hspace{-1pt} \tuple{\A\hspace{-1pt},\hspace{-1pt}\R}}
\def\SF{\mathbf{S\hspace{-1pt}F}}
\def\SFF{\SF\hspace{-1pt}=\hspace{-1pt} \tuple{\A\hspace{-1pt},\hspace{-1pt}\R_a}}
\def\aA{E}
\newcommand{\Defeated}[1]{\mathit{Def}(#1)}
\newcommand{\Acceptable}[1]{\mathit{Acc}(#1)}
\newcommand{\UnAcceptable}[1]{\mathit{Un}\hspace{-2pt}\mathit{Acc}(#1)}
\newcommand{\Supported}[1]{\mathit{Sup}(#1)}
\newcommand{\CanSupported}[1]{\mathit{CSup}(#1)}
\newcommand\black{\color{black}}
\def\attacks{\ensuremath{\leadsto}}
\def\att{\ensuremath{\leadsto}}
\def\sup{\ensuremath{\Rightarrow}}
\def\supl{\ensuremath{\Leftarrow}}
\def\Fs{\ensuremath{\mathcal{F}}}
\def\Fsht{\ensuremath{\mathcal{F}_{HT}}}
\def\modelsp{\models^+}
\def\modelsn{\models^-}
\def\modelspm{\models^\pm}
\def\sep{,}
\def\bH{\ensuremath{\mathbf{H}}}
\def\bT{\ensuremath{\mathbf{T}}}
\def\sI{\ensuremath{\mathcal{I}}}
\def\sJ{\ensuremath{\mathcal{J}}}
\def\sK{\ensuremath{\mathcal{K}}}
\def\SI{\ensuremath{E_\sI}}
\def\SJ{\ensuremath{E_\sJ}}
\def\SK{\ensuremath{E_\sK}}
\def\Ninterpretation{\mbox{N-interpretation}}
\def\Ninterpretations{\mbox{N-interpretations}}
\def\HTinterpretation{\mbox{HT-interpretation}}
\def\HTinterpretations{\mbox{HT-interpretations}}
\def\Nmodel{\mbox{N-model}}
\def\HTmodel{\mbox{HT-model}}
\def\fF{\ensuremath{\varphi}}
\def\fG{\ensuremath{\psi}}
\def\larrow{\leftarrow}
\newcommand\citep[1]{\citeN{#1}}
\newcommand\citen[1]{\citeN{#1}}
\newcommand\citeauthor[1]{\citeANP{#1}}
\newcommand\Np{\textbf{N3}}
\newcommand\Nn{\textbf{N4}}
\def\Lan{\mathcal{C}}
\def\LanLP{\Lan_{\hspace{-1pt}L\hspace{-1pt}P}}
\def\LanAF{\Lan_{\hspace{-1pt}A\hspace{-1pt}F}}
\def\LanSF{\Lan_{\hspace{-1pt}S\hspace{-1pt}F}}
\def\LanEF{\Lan_{\hspace{-1pt}E\hspace{-1pt}F}}
\def\LanN{\Lan_{N}}
\newcommand{\LN}[1]{\LanN\hspace{-1pt}(#1)}
\newcommand{\LNp}[1]{\delta#1}
\newcommand{\LNpp}[1]{\tau#1}
\newcommand{\LAF}[1]{\LanAF\hspace{-1pt}(#1)}
\newcommand{\LSF}[1]{\LanSF\hspace{-1pt}(#1)}
\newcommand{\LEF}[1]{\LanEF\hspace{-1pt}(#1)}
\newcommand*{\attl}{\mathrel{\mathpalette\math@reflect@box\leadsto}}
\newcommand*{\math@reflect@box}[2]{\reflectbox{$#1#2\m@th$}}
\def\piff{\phantom{iff }}
\DeclareMathOperator{\sneg}{\sim\!}
\DeclareMathOperator{\Not}{\mathit{not}\, }
\newcommand{\set}[1]{\ensuremath{\{#1\}}}
\newcommand{\setm}[2]{\ensuremath{\{\ #1\ \big|\ #2\ \}}}
\newcommand{\setBm}[2]{\ensuremath{\Big\{\ #1\ \ \Big|\ \ #2\ \Big\}}}
\newcommand{\tuple}[1]{\ensuremath{\langle #1 \rangle}}
\newcommand{\eqdef}{\mathrel{\vbox{\offinterlineskip\ialign{\hfil##\hfil\cr $\scriptscriptstyle\mathrm{def}$\cr \noalign{\kern1pt}$=$\cr \noalign{\kern-0.1pt}}}}}
\DeclareMathOperator{\veeh}{\mathrel{\vbox{\offinterlineskip\ialign{\hfil##\hfil\cr $\scriptscriptstyle\mathrm{h}$\cr \noalign{\kern-1pt}$\vee$\cr \noalign{\kern-1pt}}}}}
\def\at{At}
\newcommand{\punctfootnote}[1]{\kern-.06em\footnote{#1}}
\newcommand{\punctfootcite}[1]{\kern-.06em\footcite{#1}}
\newcommand{\iiff}{iff}
\tikzstyle{arg}        = [circle, thick, minimum size=0.2cm, draw=black, font=\scriptsize, fill=white, preaction={
\tikzstyle{narg}        = [circle, dashed, thick, minimum size=0.2cm, draw=black, font=\scriptsize, fill=white, preaction={
\tikzstyle{vide}        = [thick, minimum size=0.2cm, draw=black, font=\scriptsize, fill=white, preaction={
\tikzstyle{nvide}        = [thick, dashed, minimum size=0.2cm, draw=black, font=\scriptsize, fill=white, preaction={
\tikzstyle{videphantom}        = [thick, minimum size=0.2cm, draw=white, font=\scriptsize, fill=white, preaction={
\tikzstyle{ttvide}        = [thick, minimum size=0.01cm, draw=white, font=\normalsize, fill=white
\tikzstyle{tikzpict}   = [>=latex,text depth=0.25ex]
\newcounter{programcount}
\newcommand{\newprogram}{\refstepcounter{programcount}\ensuremath{P_{\arabic{programcount}}}}
\newcommand{\program}[2]{\ensuremath{P_{#1{#2}}}}
\newcommand{\newtheory}{\refstepcounter{programcount}\ensuremath{\Gamma_{\arabic{programcount}}}}
\newcommand{\theory}[2]{\ensuremath{\Gamma_{#1{#2}}}}
\newcommand{\newaf}{\refstepcounter{programcount}\ensuremath{\AF_{\arabic{programcount}}}}
\newcommand{\af}[2]{\ensuremath{\AF_{#1{#2}}}}
\newcommand{\newef}{\refstepcounter{programcount}\ensuremath{\EBAF_{\arabic{programcount}}}}
\newcommand{\ef}[2]{\ensuremath{\EBAF_{#1{#2}}}}
\newcommand{\SigmaP}[1]{\ensuremath{\mathrm{\Sigma}^{\mathrm{P}}_{#1}}}
\newenvironment{examplecont}[1]{\begin{example}[Ex.~\ref{#1} continued]}{\end{example}}
\newenvironment{proofs}{\trivlist\item\ignorespaces\noindent\itshape Proof sketch.\normalfont}{\qed\endtrivlist}
\newenvironment{proofof}[1]{\normalfont\rmfamily \trivlist
  \pagebreak[3]\item[\hskip \labelsep {\normalfont\bf Proof of #1}.]}
  {\hspace*{1em}\qed\endtrivlist}
\providecommand{\qed}{\hfill\proofbox}
\newtheorem{theorem}{Theorem}
\newtheorem{fact}{Fact}
\newtheorem{lemma}[fact]{Lemma}
\newtheorem{corollary}{Corollary}
\newtheorem{proposition}{Proposition}
\newtheorem{definition}{Definition}
\newtheorem{example}{Example}
\title[AFs and ASP are two faces of Nelson’s logic]{Abstract argumentation and answer set programming: two faces of Nelson’s logic\thanks{This work is an extended version of a paper presented at the  Sixteenth International Conference on Principles of Knowledge Representation and Reasoning and entitled \emph{Constructive Logic Covers Argumentation and Logic Programming}~\protect\cite{fanfar18a}.}}
\author[Jorge Fandinno and Luis Fari{\~n}as del Cerro]{
  \MakeUppercase{Jorge Fandinno}\\
  University of Nebraska Omaha, USA,\\
  \email{jfandinno@unomaha.edu}
  \and
  \MakeUppercase{Luis Fari{\~n}as del Cerro}\\
  IRIT, Universit{\'e} de Toulouse, CNRS, Toulouse, France,\\
  \email{luis@irit.fr}
}
\begin{document}

\maketitle

\begin{abstract}
In this work, we show that both logic programming and abstract argumentation frameworks can be interpreted in terms of Nelson's constructive logic N4. We do so by formalising, in this logic, two principles that we call non-contradictory inference and strengthened closed world assumption: the first states that no belief can be held based on contradictory evidence while the latter forces both unknown and contradictory evidence to be regarded as false. Using these principles, both logic programming and abstract argumentation frameworks are translated into constructive logic in a modular way and using the object language. Logic programming implication and abstract argumentation supports become, in the translation, a new implication connective following the non-contradictory inference principle. Attacks are then represented by combining this new implication with strong negation.
\emph{Under consideration in Theory and Practice of Logic Programming (TPLP)}
\end{abstract}
 
\section{Introduction}\label{sec:introduction}
 
Logic programming~(LP) and Abstract Argumentation Frameworks~(AFs) are two well-established formalisms for
Knowledge Representation and Reasoning~(KR) 
whose close relation is well-known since the introduction of the latter: besides introducing AFs, \citen{Dung95} studied how logic programs under the \emph{stable models}~\cite{GL88} and the \emph{well-founded semantics}~\cite{van1991well} can be translated into
abstract argumentation frameworks.
Since then, this initial connection has been further studied and extended,
providing relations between other semantics and ways to translate argumentation frameworks into logic programs~\cite{NievesCO08,Caminada2009,Wu2010ALJ,Toni2011,DvorakGWW11,CaminadaSAD15}.

On the other hand, Nelson's \emph{constructive logic}~\cite{nelson1949} is a conservative extension of \emph{intuitionistic logic}, which introduces the notion of \emph{strong negation} as a means to deal with constructive falsity, in an analogous way as intuitionism deals with constructive truth.
\citeauthor{Pearce96}~\citeyear{Pearce96,Pearce06}
showed that a particular selection of models of constructive logic, called \emph{equilibrium logic}, precisely characterize the stable models of a logic program.
This characterization was later extended to the \emph{three\nobreakdash-valued stable model}~\cite{przymusinski91a} and the well-founded semantics by~\citeN{caodpeva07a}.
Versions of constructive logic without the ``explosive'' axiom \mbox{$\varphi \to (\sneg\varphi \to \psi)$} have been extensively studied in the literature~\cite{nelson1959negation,escobar1972,thomason1969semantical,nelson1984,Odintsov2005,odintsov2015inference,kamide2015proof}
and can be considered a kind of \emph{paraconsistent} logics, in the sense, that some formulas may be constructively true and false at the same time.
The notion of equilibrium has been extended to one of these logics by \citen{OdintsovP05}, who also showed that this precise characterize the \emph{paraconsistent stable semantics}~\cite{SakamaI95}.

In this paper, we formalize in Nelson's constructive logic 
a reasoning principle, to be called \emph{non-contradictory inference} (denoted~\ref{p:contradictory}), which states that 
\begin{enumerate}[ itemindent=-4pt, leftmargin=29pt, rightmargin=15pt, label=\textbf{NC} ]
\item ``\emph{no belief can be held based on contradictory evidence.}''
  \label{p:contradictory}
\end{enumerate}
Interestingly, though different from the logic studied by~\citeauthor{OdintsovP05},
the logic presented here is also
a conservative extension of equilibrium logic (and, thus, also of LP under the stable models semantics) that allows us to deal with inconsistent information in LP.
The interesting feature of this new logic is that, besides LP, it also captures several classes of AFs, under the stable semantics.
It is worth to mention that the representation of AFs in this new logic is modular and it is done
 using an \emph{object language level}.
Recall that by object language level, we mean that AFs and its logical translation \emph{share the same language} (each argument in the AF becomes an atom in its corresponding logical theory)
and the relation between arguments in the AF (attacks or supports) are expressed by means of logical connectives.
This contrast  with \emph{meta level approaches}, which talk about the AFs from ``above,'' using another language and relegating logic to talk about this new language.
It is important to note that, as highlighted by~\citen{Gabbay2015TheAA}, 
the object language oriented approaches
have the remarkable property of providing alternative intuitive meaning to the translated concepts through their interpretation\ in logic.
In this sense,
from the viewpoint of constructive logic, AFs can be understood as a 
\emph{strengthened closed world assumption}~\cite{reiter1980logic} that we denote as~\mbox{\ref{p:cwa}}:
\begin{enumerate}[ itemindent=-4pt, leftmargin=30pt, rightmargin=15pt, label=\textbf{C\hspace{-1.5pt}W}, start=2 ]
\item ``\emph{everything for which we do not have evidence of being true or for which we have contradictory evidence, should be regarded as false}''
\label{p:cwa}
\end{enumerate}

The relation between AFs and logic has been extensively studied in the literature
and, as mentioned above, can be divided in two categories:
those that follow an object language approach~\cite{Caminada2009,Gabbay2015TheAA,gabbay2016attack}
and those that follow a meta level approach~\cite{BesnardD04,Caminada2009,Grossi2011,DvorakSW12,Ariel2013,DoutreHP14,BesnardDH14,DDvorakGLW14}.
In particular, the approach we take here shares with the work by~\citeN{Gabbay2015TheAA} the use of strong negation to capture attacks, but differs in the underlying logic: constructive logic in our case and classical logic in the case of~\citeauthor{Gabbay2015TheAA}'s work.
On the intuitive level, under the constructive logic point of view, \emph{attacks} can be understood as
\begin{enumerate}[ itemindent=-4pt, leftmargin=26pt, rightmargin=18pt, label=\textbf{AT}, start=2 ]
\item ``\emph{means to construct a proof of the falsity of the attacked argument based on the acceptability of the attacker}''
  \label{p:attack}
\end{enumerate}
On the practical level, the use of constructive logic allows for a more \emph{compact} and \emph{modular translation}: each attack becomes a (rule-like) formula with the attacker -- or a conjunction of attackers in the case of set attacking arguments~\cite{Nielsen2007} -- as the antecedent and the attacked argument as the consequent.
Moreover, when attacks are combined with LP implication, we show that the latter captures the notion of \emph{support} in Evidential-Based Argumentation Frameworks (EBAFs;~\citeNP{OrenN08}):
for accepting an argument, these frameworks require, not only its \emph{acceptability} as in Dung's sense, but also that it is supported by some chain of supports rooted in a kind of special arguments called \emph{prima-facie}.

\section{Background}

In this section we recall the needed background regarding Nelson's constructive logic, logic programming and argumentation frameworks.

\subsection{Nelson's Constructive Logic}

The concept of constructive falsity was introduced into logic by~\citen{nelson1949} and it is often denoted
as $\Np$.
It was first axiomatized by \citen{vorob1952constructive}, and later studied by \citen{markov1953constructive}, who related intuitionistic and strong negation, and by~\citen{rasiowa1969n}, who provided an algebraic characterization.
Versions of constructive logic without the ``explosive'' axiom
\mbox{$\varphi \to (\sneg\varphi \to \psi)$}
are usually denoted as~$\Nn$
and they are based on a four valued assignment for each world corresponding to the values \emph{unknown}, \emph{(constructively) true}, \emph{(constructively) false} and \emph{inconsistent} (or \emph{overdetermined}).
The logic $\Np$ can be obtained by adding back the ``explosive'' axiom.
We describe next a Kripke semantics for a version of $\Nn$~\cite{thomason1969semantical,gurevich1977intuitionistic}
with the falsity constant~$\bot$,
which is denoted as $\Nn^\bot$ by~\citeN{odintsov2015inference}.
We follow here an approach with two forcing relations in the style of the work by~\citeN{akama1987}.
An alternative characterization using $2$-valued assignments plus an involution has been described by~\citeN{routley1974semantical}.

Syntactically, we assume a logical language with a \emph{strong negation} connective~``$\sneg$''.
That is, given some (possibly infinite) set of atoms $\at$,
a \emph{formula} $\fF$ is defined using the grammar:
\[
\fF \quad ::= \quad \bot \ \mid \ a \ \mid \ \sneg \fF \ \mid \ \fF \wedge \fF \ \mid \ \fF \vee \fF \ \mid \ \fF \to \fF
\]
with \mbox{$a \in \at$}.
We use Greek letters $\fF$ and $\fG$ and their variants to stand for propositional formulas.
\emph{Intuitionistic negation} is defined as ${\neg \fF \eqdef (\fF \to \bot)}$.
We also define the derived operators
${\fF \leftrightarrow \fG \eqdef (\fF \to \fG) \wedge (\fG \to \fF)}$
and
${\top \eqdef \sneg \bot}$.

A Kripke frame $\Fs = \tuple{W,\leq}$ is a pair
where $W$ is a
non-empty
set of worlds and $\leq$ is a partial order on $W$.
A valuation ${V : W \longrightarrow  2^{\at}}$ is a function mapping each world to a subset of atoms.
A Nelson's interpretation (\Ninterpretation) is a
3-tuple ${\sI = \tuple{\Fs,V^+,V^-}}$
where ${\Fs = \tuple{W,\leq}}$ is a Kripke frame and
where both $V^+$ and $V^-$ are valuations
satisfying, for every pair of worlds
$w,w' \in W$ with $w \leq w'$ and every atom $a \in \at$, the following preservation properties:
\begin{enumerate}[ label=\roman*) , leftmargin=20pt ]
\item $V^+(w) \subseteq V^+(w')$, and
\item $V^-(w) \subseteq V^-(w')$.
\end{enumerate}
Intuitively, $V^+$ represents our knowledge about constructive truth while
$V^-$ represents our knowledge about constructive falsity.
We say that $\sI$ is \emph{consistent} if, in addition, it satisfies:
\begin{enumerate}[ label=\roman*), start=3 , leftmargin=20pt]
\item $V^+(w) \cap V^-(w) = \varnothing$ for every world $w \in W$.
\end{enumerate}
Two forcing relations $\modelsp$ and $\modelsn$ are defined
with respect to any \Ninterpretation\ ${\sI = \tuple{\Fs,V^+,V^-}}$,
world $w \in W$ and atom $a \in \at$ as follows:
\begin{IEEEeqnarray*}{l ?C? l }
\sI\sep w \modelsp a &\text{ iff }& a \in V^+(w)
\\
\sI\sep w \modelsn a &\text{ iff }& a \in V^-(w)
\end{IEEEeqnarray*}
These two relations are extended to compounded formulas as follows:
\begin{IEEEeqnarray*}{l ,C, l }
\sI\sep w  \not\modelsp \bot
\\
\sI\sep w \modelsp \varphi_1 \wedge \varphi_2
    &\text{ iff }& \sI\sep w \modelsp \varphi_1 \text{ and } \sI\sep w \modelsp  \varphi_2
\\
\sI\sep w \modelsp \varphi_1 \vee \varphi_2
    &\text{ iff }& \sI\sep w \modelsp \varphi_1 \text{ or } \sI\sep w \modelsp  \varphi_2
\\
\sI\sep w \modelsp \varphi_1 \!\to\! \varphi_2
    &\text{ iff }& \forall w'\!\geq\! w \ \sI\sep w' \!\not\modelsp\! \varphi_1 \hspace{-0.5pt}\text{ or } \sI\sep w' \!\modelsp\!  \varphi_2
\\
\sI\sep w \modelsp \sneg\varphi
     &\text{ iff }& \sI\sep w \modelsn\varphi
\\
\sI\sep w  \modelsn \bot
\\
\sI\sep w \modelsn \varphi_1 \wedge \varphi_2
    &\text{ iff }& \sI\sep w \modelsn \varphi_1 \text{ or } \sI\sep w \modelsn  \varphi_2
\\
\sI\sep w \modelsn \varphi_1 \vee \varphi_2
    &\text{ iff }& \sI\sep w \modelsn \varphi_1 \text{ and } I\sep w \modelsn  \varphi_2
\\
\sI\sep w \modelsn \varphi_1 \!\to\! \varphi_2
    &\text{ iff }&  \sI\sep w \modelsp \varphi_1  \text{ and } \sI\sep w \modelsn  \varphi_2
\\
\sI\sep w \modelsn \sneg\varphi
     &\text{ iff }& \sI\sep w \modelsp\varphi
\end{IEEEeqnarray*}
An \Ninterpretation\ is said to be an \emph{\Nmodel} of a formula~$\varphi$,
in symbols $\sI \modelsp \varphi$, iff $\sI \sep w \modelsp \varphi$ for every $w \in W$.
It is said to be \emph{\Nmodel} of a theory~$\Gamma$,
in symbols also ${\sI \modelsp \Gamma}$, iff it is an \Nmodel\ of all its formulas
${\sI \modelsp \varphi}$.
A formula $\varphi$ is said to be a \emph{consequence} of a theory~$\Gamma$
iff every model of $\Gamma$ is also a model of $\varphi$,
that is $\sI \modelsp \varphi$ for every $\sI \modelsp \Gamma$.
This formalization characterizes $\Nn$ while a restriction to consistent \Ninterpretation s would characterize $\Np$.
As mentioned above, $\Nn$ is ``somehow'' paraconsistent in the sense that a formula $\varphi$ and its strongly negated counterpart~$\sneg\varphi$ may simultaneously be consequences of some theory:
for instance,
we have that $\set{a, \sneg a} \modelsp a$ and $\set{a, \sneg a} \modelsp \sneg a$.
Intuitively, these two forcing relations determine the four values above mentioned:
a formula $\varphi$ satisfying $\sI \not\modelsp \varphi$ and $\sI \not\modelsn \varphi$
is understood as \emph{unknown}.
If it satisfies
$\sI \modelsp \varphi$ and $\sI \not\modelsn \varphi$,
is understood as \emph{true}.
\emph{False} if $\sI \not\modelsp \varphi$ and $\sI \modelsn \varphi$,
and \emph{inconsistent} if $\sI \modelsp \varphi$ and $\sI \modelsn \varphi$.

\subsection{Logic Programming, Equilibrium Logic and\\ Here-and-There Nelson's Models}

In order to accommodate logic programming conventions,
we will indistinctly write ${\varphi \larrow \psi}$ instead of ${\psi \to \varphi}$ when describing logic programs.
An \emph{explicit literal} is either an atom~\mbox{$a \in \at$} or an atom preceded by strong negation~$\sneg a$.
A \emph{literal} is either an explicit literal~$l$ or an explicit literal preceded by intuitionistic negation~$\neg l$.
A literal that contains intuitionistic negation is called \emph{negative}.
Otherwise, it is called \emph{positive}.
A \emph{rule} is a formula of the form \mbox{$H \larrow B$}
where $H$ is a disjunction of atoms and $B$ is a conjunction of literals.
A logic program~$\Pi$ is a set of rules.

Given some set of explicit literals $\bT$ and some formula $\varphi$,
we write ${\bT \modelsp \varphi}$ when \mbox{$\tuple{\Fs,V^+,V^-} \modelsp \varphi$} holds
for the Kripke frame~$\Fs$ with a unique world~$w$ and valuations:
\mbox{$V^+(w) = \bT \cap \at$}
and
\mbox{$V^-(w) = \setm{ a }{ \sneg a \in \bT}$}.
A set of explicit literals~$\bT$ is said to be \emph{closed} under $\Pi$ if $\bT \modelsp H \larrow B$ for every rule $H \larrow B$
in~$\Pi$.

Next, we recall the notions of reduct and answer set~\cite{gelfondL91}:

\begin{definition}[Reduct and Answer Set]\label{def:answer.set}
The \emph{reduct} of program $\Pi$ \wrt~some set of explicit literals~$\bT$ is defined as follows
\begin{enumerate}[ label=\it\roman*) , leftmargin=20pt ]
\item Remove all rules with $\neg l$ in the body s.t. $l \in \bT$,
\item Remove all negative literals for the remaining rules.
\end{enumerate}
Set~$\bT$ is a \emph{stable model} of $\Pi$ if $\bT$ is a $\subseteq$-minimal closed set under~$\Pi$.
\end{definition}

For characterizing logic programs in constructive logic,
\black
we are only interested in a particular kind of {\Ninterpretation s} over \mbox{\emph{Here-and-There}}~(HT) frames.
These frames are of the form \mbox{$\Fsht = \tuple{\set{h,t}, \leq }$}
where $\leq$ is a partial order satisfying \mbox{$h \leq t$}.
We refer to \Ninterpretation s\ with an HT-frame as \emph{\HTinterpretation s}.
A \emph{\HTmodel} is an \Nmodel\ which is also a \HTinterpretation.
We use the generic terms \emph{interpretation} (resp. \emph{model}) for both HT and \Ninterpretations\ (resp. models) when it is clear by the context.
At first sight, it may look that restricting ourselves to HT frames is an oversimplification.
However, once the closed world assumption is added to intuitionistic logic, this logic can be replaced without loss of generality by any proper intermediate logic~\cite{OsorioPA05,CabalarFC0V17}.

Given any \HTinterpretation,
\mbox{$\sI = \tuple{\Fsht,V^+,V^-}$}
we define
four sets of atoms as follows:
\begin{gather*}
\begin{IEEEeqnarraybox}{ l ,C, l }
H_\sI^+ &\eqdef& V^+(h)\\
H_\sI^- &\eqdef& V^-(h)
\end{IEEEeqnarraybox}
\hspace{2cm}
\begin{IEEEeqnarraybox}{ l ,C, l }
T_\sI^+ &\eqdef& V^+(t)\\
T_\sI^- &\eqdef& V^-(t)
\end{IEEEeqnarraybox}
\end{gather*}
These sets of atoms correspond to the atoms verified at each corresponding world and valuation.
Every \HTinterpretation~$\sI$ is fully determined by these four sets.
We will omit the subscript and write, for instance, $H^+$ instead of $H^+_\sI$ when $\sI$ is clear from the context.
Furthermore,
any \HTinterpretations\ can be succinctly rewritten as a pair
\mbox{$\sI = \tuple{\bH,\bT}$}
where 
\mbox{$\bH = H^+ \cup \sneg H^-$}
and
\mbox{$\bT = T^+ \cup \sneg T^-$}
are sets of literals.\punctfootnote{We denote by $\sneg S \eqdef \setm{ \sneg \varphi }{ \varphi \in S}$ the of set strongly negated formulas of a given set~$S$.
Similarly, we also define
$\neg S \eqdef \setm{ \neg \varphi }{ \varphi \in S}$.}
Note that, by the preservation properties of \Ninterpretations, we have that ${\bH \subseteq \bT}$.
We say that an \HTinterpretation~\mbox{$\sI = \tuple{\bH,\bT}$}
is \emph{total} iff $\bH = \bT$.
Given \HTinterpretations\ $\sI = \tuple{\bH,\bT}$ and $\sI' = \tuple{\bH',\bT'}$,
we write $\sI \leq \sI'$ iff $\bH \subseteq \bH'$ and $\bT = \bT'$.
As usual, we write $\sI < \sI'$ iff $\sI \leq \sI'$ and $\sI \neq \sI'$.

Next, we introduce the definition of equilibrium model~\cite{Pearce96}.

\begin{definition}[Equilibrium model]\label{def:equilibrium}
A \HTmodel~$\sI$ of a theory $\Gamma$ is said to be an \emph{equilibrium model} iff it is total and there is no other \HTmodel~$\sI'$ of $\Gamma$ s.t. $\sI' < \sI$.
\end{definition}

Interestingly, consistent equilibrium models precisely capture the answer set of a logic program.
The following is a rephrase of Proposition~2 by~\citeN{Pearce96} using our notation.

\begin{proposition}\label{prop:per96}
Let~$\Pi$ be a logic program.
A consistent set~$\bT$ of explicit literals is a stable model of~$\Pi$ if and only if~$\bT$ is the set of explicit literals true in some consistent equilibrium model of~$\Pi$.
\end{proposition}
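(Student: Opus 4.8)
The plan is to reduce the statement to the classical Pearce-style correspondence by establishing a single bridging lemma that re-expresses HT-satisfaction of $\Pi$ in terms of the reduct of Definition~\ref{def:answer.set}, and then reading off the equilibrium condition as $\subseteq$-minimality of the reduct's models. Throughout I work with the pair representation $\sI = \tuple{\bH,\bT}$, recalling that $\bH \subseteq \bT$ by persistence, that a total model has $\bH = \bT$, and that in a total model $\tuple{\bT,\bT}$ the explicit literals $l$ with $\sI \modelsp l$ are exactly the members of $\bT$ (an atom $a$ holds iff $a \in T^+$, and $\sneg a$ holds iff $a \in T^-$); hence ``the set of explicit literals true in the model'' is just $\bT$. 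I also keep track of consistency: a model is consistent iff $T^+ \cap T^- = \varnothing$, which holds iff the literal set $\bT$ contains no complementary pair $a,\sneg a$, i.e.\ iff $\bT$ is a consistent set of explicit literals. This makes $\tuple{\bT,\bT} \mapsto \bT$ a bijection between consistent total HT-interpretations and consistent sets of explicit literals.

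The bridging lemma I would prove is: for $\bH \subseteq \bT$, the interpretation $\tuple{\bH,\bT}$ is an \HTmodel\ of $\Pi$ iff $\bT$ is closed under $\Pi$ and $\bH$ is closed under the reduct $\Pi^{\bT}$. The proof is a world-by-world analysis of a single rule $r : H \larrow B$. Being a model means $r$ holds at both $h$ and $t$; unfolding the clause for $\to$ shows the condition at $t$ is the classical one, while the condition at $h$ adds that $\sI,h \modelsp B$ implies $\sI,h \modelsp H$. The key simplification comes from persistence: for an intuitionistically negated literal $\neg l = (l \to \bot)$ in the body, $\sI,w \modelsp \neg l$ at either world reduces to $l \notin \bT$ (since $\bH \subseteq \bT$, failure at $t$ already forces failure at $h$). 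Thus a rule whose body contains some $\neg l$ with $l \in \bT$ is vacuously satisfied at both worlds — exactly the rules deleted in step \emph{(i)} of the reduct — and for a surviving rule the negative literals are always true and may be dropped, leaving the positive body $\bodyp{r}$, which is step \emph{(ii)}. The $t$-condition then becomes closure of $\bT$ under $\Pi$ (equivalently under $\Pi^{\bT}$) and the $h$-condition becomes closure of $\bH$ under $\Pi^{\bT}$; strong-negation literals $\sneg a$ in bodies and disjunctive heads are handled uniformly, since $\sI,w \modelsp \sneg a$ is just $a \in V^-(w)$.

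With the lemma in hand the proposition follows. Specialising to a total interpretation, $\tuple{\bT,\bT} \modelsp \Pi$ iff $\bT$ is closed under $\Pi^{\bT}$. By Definition~\ref{def:equilibrium}, such a total model is an equilibrium model iff there is no $\tuple{\bH,\bT} \modelsp \Pi$ with $\bH \subsetneq \bT$; by the lemma, and since $\bT$ is already closed under $\Pi^{\bT}$, this is exactly the non-existence of a proper subset $\bH \subsetneq \bT$ closed under $\Pi^{\bT}$. Hence $\tuple{\bT,\bT}$ is an equilibrium model iff $\bT$ is a $\subseteq$-minimal set closed under the reduct, which is precisely the stable-model condition of Definition~\ref{def:answer.set}. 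Restricting both sides to consistent objects via the bijection above yields that the consistent stable models of $\Pi$ are exactly the literal sets $\bT$ of consistent equilibrium models.

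The main obstacle I anticipate is the bridging lemma, and within it the bookkeeping of strong negation. Unlike the plain here-and-there setting, bodies and heads may contain explicit literals $\sneg a$, so I must verify that the rule analysis goes through with $\modelsn$ treated on equal footing with $\modelsp$ and that the reduct's deletion of negative literals interacts correctly with persistence; the consistency restriction is what guarantees that $\sneg$ behaves like genuine explicit negation (collapsing $\Nn$ to $\Np$ on the relevant interpretations), so that minimal reduct models coincide with answer sets rather than with a paraconsistent variant.
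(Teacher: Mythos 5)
Your proof is correct, but note that the paper does not actually prove this statement: Proposition~\ref{prop:per96} is presented as ``a rephrase of Proposition~2 by \citeN{Pearce96}'' and no argument for it appears in the appendix, which only proves the paper's new results. So you are not mirroring a proof in the paper; you are reconstructing the imported result, and you do so along the standard lines of the equilibrium-logic literature. Your bridging lemma — $\tuple{\bH,\bT} \modelsp \Pi$ iff $\bT$ is closed under $\Pi$ (equivalently under $\Pi^{\bT}$) and $\bH$ is closed under $\Pi^{\bT}$ — is the right pivot, and the world-by-world analysis is sound: by item~\ref{item:1:prop:negation} of Proposition~\ref{prop:negation}, $\neg l$ is evaluated at $t$ only, so rules containing $\neg l$ with $l \in \bT$ are vacuously satisfied at both worlds (matching step~\emph{i} of the reduct), the surviving negative literals are true and can be dropped (step~\emph{ii}), and explicit literals, including $\sneg a$, are evaluated pointwise by $V^+$ and $V^-$, so strong negation and disjunctive heads cause no difficulty. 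Given the lemma, equilibrium at fixed $\bT$ is literally the non-existence of $\bH \subsetneq \bT$ closed under $\Pi^{\bT}$, i.e.\ $\subseteq$-minimality among closed sets of the reduct, which is the stable-model condition.

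Two small remarks. First, you silently read the clause ``$\subseteq$-minimal closed set under~$\Pi$'' in Definition~\ref{def:answer.set} as minimality among sets closed under the reduct $\Pi^{\bT}$. That is certainly the intended reading — the literal one would make the reduct construction pointless and already fails for the program $\set{a \larrow \neg b}$ — but since your argument hinges on it, it deserves an explicit sentence. Second, your closing worry that consistency is needed so that ``minimal reduct models coincide with answer sets rather than a paraconsistent variant'' is a mild misdiagnosis: your bridging lemma and the equilibrium-equals-minimality equivalence hold for arbitrary, possibly inconsistent, interpretations — which is precisely why the paper can also cite \citeN{OdintsovP05} for the paraconsistent case. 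Consistency plays no role in the argument itself; it enters only because the statement restricts both sides to consistent objects, and that restriction transports along your bijection $\tuple{\bT,\bT} \mapsto \bT$, since $V^+(t) \cap V^-(t) = \varnothing$ holds iff $\bT$ contains no complementary pair $a, \sneg a$.
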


More in general, it has been shown by~\citeN{OdintsovP05} that the 
(possible non-consistent) equilibrium models of a logic program capture its paraconsistent answer sets~\cite{SakamaI95}.

The following propositions characterizes some interesting properties of HT and strong negation that will be useful through the paper\footnote{For the sake of clarity, proofs of formal results are moving to an appendix.}
:

\begin{proposition}[Persistence]\label{prop:preservation}
Any \HTinterpretation~$\sI$, formula~$\varphi$ and world \mbox{$w \in \set{h,t}$} satisfy:
\begin{enumerate}
\item $I\sep w \modelsp \varphi$ implies $I\sep t \modelsp \varphi$, and
\item $I\sep w \modelsn \varphi$ implies $I\sep t \modelsn \varphi$.
\end{enumerate}
\end{proposition}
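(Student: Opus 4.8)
The plan is to prove both statements simultaneously by structural induction on the formula~$\varphi$, exploiting that in an HT-frame the only nontrivial instance is $w = h$, where $h \leq t$. Indeed, when $w = t$ both implications are trivially of the form ``$\sI\sep t \modelspm \varphi$ implies $\sI\sep t \modelspm \varphi$'', so from now on I assume $w = h$. The base cases are immediate: for $\varphi = \bot$, statement~(1) holds vacuously since $\sI\sep h \not\modelsp \bot$, while statement~(2) holds because $\sI\sep t \modelsn \bot$ by definition; for $\varphi = a$ an atom, the two statements are exactly the preservation properties~(i) and~(ii) of an \Ninterpretation\ applied to $h \leq t$, namely $V^+(h) \subseteq V^+(t)$ and $V^-(h) \subseteq V^-(t)$.

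For the inductive step I would treat the connectives in the obvious order. The cases $\varphi = \varphi_1 \wedge \varphi_2$ and $\varphi = \varphi_1 \vee \varphi_2$ are routine: each positive (resp. negative) clause is a conjunction or disjunction of the corresponding clauses for $\varphi_1$ and $\varphi_2$, so the claim follows by applying the appropriate half of the induction hypothesis to each subformula and reassembling. The positive implication case is settled directly from the semantics rather than from the induction hypothesis: since $\sI\sep h \modelsp \varphi_1 \to \varphi_2$ quantifies over all $w' \geq h$, and every $w' \geq t$ also satisfies $w' \geq h$, the condition defining $\sI\sep t \modelsp \varphi_1 \to \varphi_2$ is a restriction of the one already assumed and therefore holds.

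The two places where the interaction between the forcing relations becomes essential are the negative implication clause and strong negation, and this is precisely why the induction must be carried out for $\modelsp$ and $\modelsn$ together. For $\varphi = \varphi_1 \to \varphi_2$, statement~(2) unfolds to $\sI\sep h \modelsp \varphi_1$ and $\sI\sep h \modelsn \varphi_2$; applying statement~(1) of the induction hypothesis to $\varphi_1$ and statement~(2) to $\varphi_2$ yields $\sI\sep t \modelsp \varphi_1$ and $\sI\sep t \modelsn \varphi_2$, i.e.\ $\sI\sep t \modelsn \varphi_1 \to \varphi_2$. For $\varphi = \sneg\fG$ the clauses swap: $\sI\sep h \modelsp \sneg\fG$ means $\sI\sep h \modelsn \fG$, so statement~(2) of the induction hypothesis on $\fG$ gives $\sI\sep t \modelsn \fG$, that is $\sI\sep t \modelsp \sneg\fG$; symmetrically, statement~(2) for $\sneg\fG$ follows from statement~(1) on $\fG$.

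I do not expect any serious obstacle beyond getting this mutual dependency right. Proving statement~(1) in isolation would break at exactly the $\sneg$ and negative $\to$ cases, where positive forcing of a compound is determined by negative forcing of a subformula; the strengthening to a joint induction over both relations is the single indispensable idea, and everything else is bookkeeping over the two-world frame.
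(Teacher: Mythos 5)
Your proof is correct and follows essentially the same route as the paper's: a simultaneous structural induction on $\modelsp$ and $\modelsn$, with the atomic case settled by the preservation properties of the valuations and the positive implication case settled directly from the semantics (the universal quantification over $w' \geq w$). If anything, you are more careful than the paper, which asserts that both implication cases ``follow directly by the definition'' whereas, as you correctly note, the negative clause $\sI\sep w \modelsn \varphi_1 \to \varphi_2$ genuinely requires the induction hypothesis on both subformulas.
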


\begin{proposition}[HT-negation]{\label{prop:negation}}
Any \HTinterpretation~$\sI$, formula~$\varphi$ and world \mbox{$w \in \set{h,t}$} satisfy:
\begin{enumerate}[ label=\roman*) , leftmargin=20pt ]
\item $\sI\sep w \modelsp \neg\varphi$ iff $\sI\sep t \not\modelsp \varphi$, and
\label{item:1:prop:negation}

\item $\sI\sep w \modelsp \neg\neg\varphi$ iff $\sI\sep t \modelsp \varphi$, and
\label{item:5:prop:negation}

\item $\sI\sep w \modelsp \neg\neg\neg\varphi$
iff $\sI\sep w \modelsp \neg\varphi$, and
\label{item:7:prop:negation}

\item $\sI\sep w \modelsn \neg\varphi$ 
iff $\sI\sep w \modelsn \sneg\varphi$.
\label{item:2:prop:negation}
\end{enumerate}
\end{proposition}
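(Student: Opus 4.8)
The plan is to prove all four items by directly unfolding the Kripke semantics, using the abbreviation $\neg\varphi \eqdef (\varphi \to \bot)$ together with the facts that $\sI\sep w \not\modelsp \bot$ and $\sI\sep w \modelsn \bot$ for every world, and leaning on the Persistence property (Proposition~\ref{prop:preservation}) to collapse the universal quantifier over successor worlds that appears in the clause for~$\to$. Items~\ref{item:5:prop:negation} and~\ref{item:7:prop:negation} will then be obtained by simply iterating item~\ref{item:1:prop:negation}, so the only genuine computation lies in items~\ref{item:1:prop:negation} and~\ref{item:2:prop:negation}.

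For item~\ref{item:1:prop:negation}, I would unfold $\sI\sep w \modelsp \neg\varphi$ as $\sI\sep w \modelsp \varphi \to \bot$, i.e.\ for every $w' \geq w$ either $\sI\sep w' \not\modelsp \varphi$ or $\sI\sep w' \modelsp \bot$. Since $\bot$ is never positively forced, this reduces to ``$\sI\sep w' \not\modelsp \varphi$ for every $w' \geq w$''. In an HT-frame the only worlds are $h$ and $t$ with $h \leq t$, so for $w = t$ the condition is exactly $\sI\sep t \not\modelsp \varphi$, while for $w = h$ it reads $\sI\sep h \not\modelsp \varphi$ and $\sI\sep t \not\modelsp \varphi$. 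The contrapositive of Persistence states that $\sI\sep t \not\modelsp \varphi$ implies $\sI\sep h \not\modelsp \varphi$, so the two conjuncts collapse to the single condition $\sI\sep t \not\modelsp \varphi$. This settles item~\ref{item:1:prop:negation} uniformly for both $w = h$ and $w = t$.

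Items~\ref{item:5:prop:negation} and~\ref{item:7:prop:negation} would then follow by bootstrapping. Applying item~\ref{item:1:prop:negation} to the formula $\neg\varphi$ gives $\sI\sep w \modelsp \neg\neg\varphi$ iff $\sI\sep t \not\modelsp \neg\varphi$; applying item~\ref{item:1:prop:negation} once more at the world $t$ rewrites $\sI\sep t \modelsp \neg\varphi$ as $\sI\sep t \not\modelsp \varphi$, and negating yields $\sI\sep t \not\modelsp \neg\varphi$ iff $\sI\sep t \modelsp \varphi$, which is item~\ref{item:5:prop:negation}. Feeding this in turn into item~\ref{item:5:prop:negation} applied to $\neg\varphi$, and again using item~\ref{item:1:prop:negation} at $t$, shows that both $\sI\sep w \modelsp \neg\neg\neg\varphi$ and $\sI\sep w \modelsp \neg\varphi$ are equivalent to $\sI\sep t \not\modelsp \varphi$, giving item~\ref{item:7:prop:negation}. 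Finally, item~\ref{item:2:prop:negation} is the most direct: unfolding the negative-forcing clause for implication, $\sI\sep w \modelsn (\varphi \to \bot)$ holds iff $\sI\sep w \modelsp \varphi$ and $\sI\sep w \modelsn \bot$; since $\bot$ is always negatively forced this is just $\sI\sep w \modelsp \varphi$, which by the clause for $\sneg$ is exactly $\sI\sep w \modelsn \sneg\varphi$.

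There is no real obstacle here; the proof is a routine unfolding. The only points requiring care are keeping track of the quantifier ``$\forall w' \geq w$'' in the implication clause and invoking Persistence in the correct (contrapositive) direction in item~\ref{item:1:prop:negation}, and remembering the slightly asymmetric treatment of the constant $\bot$, which is never positively forced yet always negatively forced---the latter being precisely what makes item~\ref{item:2:prop:negation} go through.
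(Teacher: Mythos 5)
Your proposal is correct and follows essentially the same route as the paper: both establish item~i) by unfolding $\neg\varphi$ as $\varphi \to \bot$ and collapsing the quantifier over successor worlds via the contrapositive of Persistence (Proposition~\ref{prop:preservation}), obtain items~ii) and~iii) by iterating item~i), and settle item~iv) by the negative-forcing clause for implication together with the fact that $\bot$ is always negatively forced. The only difference is cosmetic---you apply item~i) to $\neg\varphi$ directly, whereas the paper rewrites inside the residual quantifier before discharging it---so nothing of substance separates the two arguments.
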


\subsection{Abstract Argumentation Frameworks}

Since their introduction, the syntax of AFs have been extended in different ways.
One of these extensions, usually called SETAFs, consists in generalizing the notion of binary attacks to collective attacks such that a set of arguments~$B$ attacks some argument $a$~\cite{Nielsen2007}.
Another such extension, usually called Bipolar~AFs~(BAFs), consists in frameworks with a second positive relation called \emph{support}~\cite{karacapilidis2001computer,verheij2003deflog,AmgoudCL04}.
In particular,~\citen{Verheij03}
introduced the idea that, in AFs, arguments are considered as \emph{prima-facie} justified statements, which can be considered true until proved otherwise, that is, until they are defeated.
This allows introducing a second class of \emph{ordinary arguments}, which cannot be considered true unless get supported by the prima-facie ones.
Later, 
\citen{Oren2014}
developed this idea by introducing Evidence-Based AFs (EBAFs),
an extension of SETAFs (and, this, of AFs) which incorporates the notions of support and prima-facie arguments.
Next we introduce an equivalent definition by~\citeN{Fandinno2018foiks},
which is closer to the logic formulation we pursue here.

\begin{definition}[Evidence-Based Argumentation framework]\label{def:EBA}
An Evidence-Based Argumentation framework $\EBAFF$ is a \mbox{$4$-tuple}
where
$\A$ represents a (possibly infinite) set of arguments,
\mbox{$\R_a \subseteq 2^{\A} \times {\A}$}
is an attack relation,
\mbox{$\R_s \subseteq 2^{\A} \times \A$} is a support relation
and $\PF \subseteq \A$ is a set of distinguished \emph{prima-facie} arguments.
We say that an $\EBAF$ is \emph{finitary} iff $B$ is finite for every attack or support
\mbox{$(B,a) \in \R_a \cup \R_s$}. 
\end{definition}

The notion of acceptability is extended by requiring not only defense against all attacking arguments, but also support from some prima-facie arguments.
Furthermore, the defense can be provided not only by defeating all attacking sets of arguments, but also by denying the necessary support for some of the non-prima-facie arguments of these attacks.

\begin{definition}[Defeat/Acceptability]\label{def-inh-ASAF}
Given some argument \mbox{$a \in \A$} and set of arguments $\aA \subseteq \A$, we say
\begin{enumerate}[ topsep=3pt, itemsep=0pt, parsep=1pt, start=1, leftmargin=15pt ]
\item $a$ is \emph{defeated} \wrt\ $\aA$ iff
there is some $B \subseteq \aA$ s.t. $(B,a) \in \R_a$,
\end{enumerate}
$\Defeated{\aA}$ 
will denote the set of arguments that are defeated \wrt~$\aA$.
\begin{enumerate}[ topsep=3pt, itemsep=0pt, parsep=1pt, start=2, leftmargin=15pt ]
\item $a$ is \emph{supported} \wrt\ $\aA$ iff either \mbox{$a \in \PF$} or there is some 
\mbox{$B \subseteq \aA \setminus\set{a}$} whose elements are supported \wrt\ $\aA \setminus\set{a}$ and such that \mbox{$(B,a) \in \R_s$},

\item $a$ is \emph{supportable} \wrt~$\aA$ iff it is supported \wrt~$\A \setminus \Defeated{\aA}$,

\item $a$ is \emph{unacceptable} \wrt\ $\aA$ iff it is either defeated or not supportable,

\item $a$ is \emph{acceptable} \wrt\ $\aA$ iff it is supported and,
  for every $(B,a) \in \R_a$, there is $b \in B$ such that $b$ is unacceptable \wrt\ $\aA$
\end{enumerate}
$\Supported{\aA}$ (resp. $\UnAcceptable{\aA}$ and $\Acceptable{\aA}$)
will denote the set of arguments that are supported (resp. unacceptable and acceptable) \wrt~$\aA$.
\end{definition}

Then, semantics are defined as follows:

\begin{definition}\label{def:semantics}
A set of arguments $\aA \subseteq \A$
is said to be:
\begin{enumerate}[ topsep=2pt, itemsep=0pt, parsep=1pt, start=1 , leftmargin=15pt ]
\item \emph{\ssupporting} \iiff\ $\aA \subseteq \Supported{\aA}$,

\item \emph{\cfree}
\iiff\
$\aA \!\cap\! \Defeated{\aA} \!=\! \varnothing$,

\item \emph{admissible} \iiff\ it is \cfree\ and $\aA \subseteq \Acceptable{\aA}$,

\item \emph{complete} \iiff\ it is \cfree\ and
$\aA = \Acceptable{\aA}$,

\item \emph{preferred} \iiff\ it is a $\subseteq$-maximal admissible set,

\item \emph{stable}
iff
\mbox{$\aA = \A \setminus \UnAcceptable{\aA}$}.
\end{enumerate}
\end{definition}

\noindent
SETAFs can be seen as special cases where the set of supports is empty and all arguments are prima-facie.
In this sense, we write \mbox{$\SFF$}
instead
\mbox{$\EBAF = \tuple{\A,\R_a,\varnothing,\A}$}.
Furthermore, in their turn, AFs can be seen as a special case of SETAFs where all attacks have singleton sources.
In such case, we just write $\mbox{$\AFF$}$
instead $\SFF$,
where \mbox{$\R = \setm{ (b,a) }{ (\set{b},a) \in \R_a }$}
For this kind of frameworks, the respective notions of \mbox{conflict-free} (resp. admissible, complete, preferred or stable) coincide with those being defined by~\citeN{Nielsen2007} and~\citeN{Dung95}, respectively.

To illustrate the notions of support and prima-facie arguments,
consider the well-known Tweety example:

\begin{example}\label{ex:tweety}
Suppose we have the knowledge base that includes the following statements:
\begin{enumerate}
\item birds (normally) can fly, 
\item penguins are birds,
\item penguins cannot fly and
\item Tweety is a penguin.
\end{enumerate}
We can formalize this by the following graph:
\begin{center}
\begin{tikzpicture}[tikzpict]
    \matrix[row sep=0.35cm,column sep=2cm,ampersand replacement=\&] {
      \node (a) [arg] {$\mathbf{pT}$};\&
      \node (pb) [videphantom] {};\&
      \node (d)[narg] {$\mathbf{fT}$};
      \\
      \\
     };
    \node (b)[narg, below of=pb, node distance=20pt] {$\mathbf{bT}$};
    \draw [double distance=2pt,->,-open triangle 45] (a) to [out=-25,in=175] (b);
    \draw [double distance=2pt,->,-open triangle 45] (b) to [out=0,in=210] (d);
    \draw [->,-triangle 45] (a) to [out=15,in=165] (d);
\end{tikzpicture}
\end{center}
where $pT$, $bT$ and $fT$ respectively stand for ``Tweety is a penguin'', ``Tweety is a bird''
and ``Tweety can fly.''
Double arrows represent support while simple ones represent attacks.
Furthermore, circles with solid border represent prima-facie arguments while dashed border
ones represent ordinary ones.
That is, 
``Tweety is a penguin'' is considered a prima-facie argument that supports that 
``Tweety is a bird'' which, in its turn, supports that
``Tweety can fly.''
The latter is then considered also prima-facie, that is, true unless proven otherwise.
Note that
``Tweety is a penguin''
also attacks that
``Tweety can fly'',
so the latter cannot be accepted as true.
Formally, this corresponds to the framework
\mbox{$\newef\label{ef:tweety} \hspace{-1pt}=\hspace{-1pt}
\tuple{\A\hspace{-1pt},\hspace{-1pt}\R_a\hspace{-1pt},\hspace{-1pt}\R_s\hspace{-1pt},\PF}$}
with
\mbox{$\R_a = \set{ (\set{pT},fT) }$}
and
\mbox{$\R_s = \set{ (\set{pT},bT), \, (\set{bT},fT) }$}
and
\mbox{$\PF = \set{ pT }$}
whose unique admissible, complete, preferred and stable extension is
\mbox{$\set{pT , bT}$}.
In other words, we conclude that ``Tweety cannot fly.''
Note that ``Tweety is a penguin'' provides conflicting evidence for whether it can fly or not.
In EBAFs, this is solved by giving priority to the attack relation, so ``Tweety cannot fly''
is inferred.
\end{example}

\section{Reasoning with Contradictory Evidence in Equilibrium Logic}

In this section, we formalize principles~\ref{p:contradictory} and~\ref{p:cwa} in constructive logic,
obtaining as a result a formalism which is a conservative extension of logic programming  under the answer set semantics (see Theorem~\ref{thm:conservative.ht} and Corollary~\ref{cor:conservative.asp} below)
and which is capable of reasoning with contradictory evidence.
We start by defining a new implication connective that captures~\ref{p:contradictory} in terms of intuitionistic implication and strong negation:
\begin{gather}
\varphi_1 \sup \varphi_2 \ \ \eqdef \ \ (\neg\!\sneg\varphi_1 \wedge \varphi_1) \to  \varphi_2
\label{eq:sup.def}
\end{gather}
Recall that intuitionistic implication \mbox{$\varphi_1 \to \varphi_2$} can be informally understood as a means to construct a proof of the truth of the consequent~$\varphi_2$ in terms of a proof of truth of the antecedent~$\varphi_1$.
In this sense,~\eqref{eq:sup.def}
can be understood as a means to construct a proof of the truth of the consequent~$\varphi_2$ in terms of proof of the truth of the antecedent~$\varphi_1$ and the absence of a proof of its falsity, or in other words,
in terms of a \emph{consistent proof} of the antecedent~$\varphi_1$.
It is easy to see that~\eqref{eq:sup.def} is weaker than intuitionistic implication, that is, that 
\begin{gather*}
\varphi_1 \to \varphi_2 \ \modelsp \ \varphi_1 \sup \varphi_2
\end{gather*}
holds for every pair of formulas $\varphi_1$ and~$\varphi_2$.
We can use the following simple example to illustrate the difference between intuitionistic implication and~\eqref{eq:sup.def}.

\begin{example}\label{ex:impl.diff}
Let $\newtheory\label{th:imp.diff}$ be the following set of formulas:
\begin{align*}
&a
&
&b
&
\sneg&\hspace{1.5pt}b
&
a &\sup c
&
b &\sup d
\end{align*}
and let $\theory\ref{th:imp.diff}'$ be the theory obtained by replacing each occurrence of implication~$\sup$ by intuitionistic implication~$\to$.
On the one hand, we have that both, $\theory\ref{th:imp.diff}$ and $\theory\ref{th:imp.diff}'$,
entail atoms $a$ and $c$.
On the other hand,
we have:
\mbox{$\theory\ref{th:imp.diff}' \modelsp d$}
but
\mbox{$\theory\ref{th:imp.diff} \not\modelsp d$}.
This is in accordance with~\ref{p:contradictory},
since the only way to obtain a proof of $d$ is in terms of~$b$, for which we have contradictory evidence.
Note also that an alternative proof of $d$ could be obtained if new consistent evidence becomes available:
for the theory
\mbox{$\newtheory\label{th:imp.diff2} = \theory\ref{th:imp.diff} \cup \set{ a \sup d}$}
we obtain
\mbox{$\theory\ref{th:imp.diff2} \modelsp d$}.
It is also worth highlighting that, in contrast with intuitionistic implication, this new connective~\eqref{eq:sup.def} is not monotonic: for
\mbox{$\newtheory\label{th:imp.diff3} = \set{ b ,\ b \sup d}$}
we have 
\mbox{$\theory\ref{th:imp.diff3} \modelsp d$}
and $\theory\ref{th:imp.diff3} \cup \set{ \sneg b} \not\modelsp d$.
Obviously, it is not antimonotonic either: $\theory\ref{th:imp.diff3} \setminus \set{ b} \not\modelsp d$.
\end{example}

The following result shows that, when dealing with consistent evidence, these differences disappear and~\eqref{eq:sup.def} collapses into intuitionistic implication:

\begin{proposition}{\label{prop:imp.consitent}}
Let $\sI$ be a consistent \Ninterpretation\ and let $\varphi_1$ and~$\varphi_2$ be any pair of formulas.
Then, $\sI \modelsp \varphi_1 \sup \varphi_2$ \ iff \ $\sI \modelsp \varphi_1 \to \varphi_2$.
\end{proposition}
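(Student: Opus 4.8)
The plan is to compare the world-wise truth conditions of $(\neg\sneg\varphi_1\wedge\varphi_1)\to\varphi_2$ and $\varphi_1\to\varphi_2$, reducing everything to a single observation: in a consistent interpretation, positively forcing $\varphi_1$ at a world already forces $\neg\sneg\varphi_1$ there, so on any world that positively forces $\varphi_1$ the two antecedents are interchangeable. One direction is free: the excerpt already records that $\varphi_1\to\varphi_2\modelsp\varphi_1\sup\varphi_2$, hence $\sI\modelsp\varphi_1\to\varphi_2$ immediately gives $\sI\modelsp\varphi_1\sup\varphi_2$, with no appeal to consistency. It remains to prove the converse using the hypothesis that $\sI$ is consistent.

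First I would isolate two auxiliary facts, each proved by a routine induction on the structure of the formula. The first is \emph{general persistence}: for every \Ninterpretation, formula~$\varphi$ and worlds $w\leq w'$, both $\sI\sep w\modelsp\varphi\Rightarrow\sI\sep w'\modelsp\varphi$ and $\sI\sep w\modelsn\varphi\Rightarrow\sI\sep w'\modelsn\varphi$ hold; this is the formula-level lifting of the atomic preservation properties i) and ii), the analogue of Proposition~\ref{prop:preservation} but for arbitrary frames, which I need here since the statement quantifies over all consistent \Ninterpretation s, not only HT ones. The second is \emph{coherence under consistency}: if $\sI$ is consistent then for no world~$w$ and formula~$\varphi$ do $\sI\sep w\modelsp\varphi$ and $\sI\sep w\modelsn\varphi$ hold simultaneously. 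Its base case for atoms is exactly condition iii), and the connective cases are mechanical; the only one needing care is implication, where I would instantiate the universally quantified $\modelsp$-clause at the world~$w$ itself and then apply the induction hypothesis to~$\varphi_2$.

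With these in hand, the central claim is that, for consistent~$\sI$, $\sI\sep w'\modelsp\varphi_1$ implies $\sI\sep w'\modelsp\neg\sneg\varphi_1$. Indeed $\sI\sep w'\modelsp\neg\sneg\varphi_1$ unfolds to ``$\sI\sep w''\not\modelsn\varphi_1$ for all $w''\geq w'$''; for any such~$w''$, persistence gives $\sI\sep w''\modelsp\varphi_1$ and coherence then yields $\sI\sep w''\not\modelsn\varphi_1$, which is what is needed. Consequently, on every world that positively forces~$\varphi_1$ the antecedents $\neg\sneg\varphi_1\wedge\varphi_1$ and $\varphi_1$ are forced by exactly the same worlds.

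Finally I would assemble the converse. Assume $\sI\modelsp\varphi_1\sup\varphi_2$, fix any world~$w$ and any $w'\geq w$ with $\sI\sep w'\modelsp\varphi_1$; the central claim gives $\sI\sep w'\modelsp\neg\sneg\varphi_1\wedge\varphi_1$, and evaluating the implication $\sI\sep w'\modelsp(\neg\sneg\varphi_1\wedge\varphi_1)\to\varphi_2$ at $w'$ itself delivers $\sI\sep w'\modelsp\varphi_2$. Hence $\sI\sep w\modelsp\varphi_1\to\varphi_2$ for every~$w$, i.e.\ $\sI\modelsp\varphi_1\to\varphi_2$. The main obstacle is the coherence induction of the second step, specifically the implication case, where the interaction between the upward quantification in the $\modelsp$-clause and the consistency assumption must be handled correctly, together with the mild care needed to use persistence in its general, non-HT form.
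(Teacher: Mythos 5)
Your proof is correct and follows essentially the same route as the paper's: both reduce the statement to the key claim that, in a consistent \Ninterpretation, $\sI\sep w' \modelsp \varphi_1$ implies $\sI\sep w' \modelsp \neg\!\sneg\varphi_1$, so that the antecedent $\neg\!\sneg\varphi_1 \wedge \varphi_1$ of $\sup$ collapses to $\varphi_1$. The only difference is presentational: the paper asserts this claim ``can be proved by induction,'' whereas you usefully factor that induction into general persistence plus a coherence-under-consistency lemma, which is a sound way of discharging exactly the step the paper leaves implicit.
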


Let us now formalize the~\ref{p:cwa} assumption.
As usual \mbox{non-monotonicity} is obtained by considering equilibrium models~(Definition~\ref{def:equilibrium}).
However, to capture~\ref{p:cwa}, we need to restrict the consequences of these models to those that are consistent.
We do so by introducing a new \emph{\mbox{cw\nobreakdash-inference}} relation which, precisely, restricts the consequences of~$\modelsp$ to those which are consistent:
\begin{gather}
\sI\sep w \models \varphi \quad\text{iff}\quad
\sI\sep w \modelsp \neg\!\sneg \varphi \wedge \varphi
\label{eq:val.def}
\end{gather}
Furthermore, as usual, we write
\mbox{$\sI \models \varphi$}
iff 
\mbox{$\sI\sep w \models \varphi$}
for all
\mbox{$w \in W$}.
We also write
\mbox{$\Gamma \models \varphi$}
iff 
\mbox{$\sI \models \varphi$}
holds for every equilibrium model~$\sI$ of $\Gamma$.
For instance, in Example~\ref{ex:impl.diff}, it is easy to see that
\mbox{$\theory\ref{th:imp.diff} \modelsp b$}
and
\mbox{$\theory\ref{th:imp.diff} \modelsp \sneg b$},
but
\mbox{$\theory\ref{th:imp.diff} \not\models b$}
and 
\mbox{$\theory\ref{th:imp.diff} \not\models \sneg b$}
because the unique equilibrium model of
\mbox{$\theory\ref{th:imp.diff}$}
contains contradictory evidence for~$b$.
On the other hand, as may be expected, when we deal with non-contradictory evidence cw\nobreakdash-inference~$\models$ just collapses to the regular inference relation~$\modelsp$
(see Proposition~\ref{prop:consitent} below).

To finalize the formalization of~\ref{p:cwa},
we also need to define
\emph{default negation}.
This is accomplished by introducing a new connective $\Not$ and adding the following two items to the Nelson's forcing relations:
\begin{IEEEeqnarray*}{l ,C, l}
\sI\sep w \modelsp \Not \varphi &\text{ iff }& \sI\sep w \modelsp \neg \varphi \vee (\varphi \wedge \sneg\varphi)
\label{eq:neg.def}
\\
\sI\sep w \modelsn \Not \varphi &\text{ iff }&  \sI\sep w \modelsp \varphi \text{ and } \sI\sep w \not\modelsn \varphi
\end{IEEEeqnarray*}
Then, an \emph{\mbox{extended formula}}~$\fF$ is defined using the following grammar:
\[
\fF \quad ::= \quad \bot \ \mid \ a \ \mid \ \sneg \fF \ \mid \ \Not \varphi \ \mid \ \fF \wedge \fF \ \mid \ \fF \vee \fF \ \mid \ \fF \to \fF
\]
with $a \in \at$ an atom.
The following result shows that \mbox{cw\nobreakdash-inference} and default negation are conservative extensions of the satisfaction relation~$\modelsp$ and HT\nobreakdash-negation~$\neg$
when restricted to consistent knowledge.

\begin{proposition}{\label{prop:consitent}}
Let $\sI$ be a consistent \Ninterpretation\ and $\varphi$ be any extended formula.
Then, the following conditions hold:
\begin{enumerate}[ label=\roman*), leftmargin=20pt]

\item $\sI \models \varphi$ \ iff \ \mbox{$\sI \modelsp \varphi$}
\label{item:models:prop:consitent}

\item  $\sI \models \Not \varphi$ \ iff \ $\sI \models \neg \varphi$.
\label{item:neg:prop:consitent}
\end{enumerate}
\end{proposition}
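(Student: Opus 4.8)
The plan is to first isolate the one place where consistency does the real work, in the form of an auxiliary lemma, and then reduce both items to purely propositional manipulations of the forcing clauses. The key lemma I would prove is that consistency lifts from atoms to arbitrary extended formulas: if $\sI$ is a consistent \Ninterpretation, then for every extended formula $\varphi$ and every world $w$ we never have both $\sI\sep w \modelsp \varphi$ and $\sI\sep w \modelsn \varphi$. This is shown by structural induction on $\varphi$. The base cases $\bot$ and $a \in \at$ are immediate (for $\bot$ because it is never positively forced, for atoms because $V^+(w)\cap V^-(w)=\varnothing$), and the cases for $\sneg$, $\wedge$ and $\vee$ follow mechanically by unfolding the two forcing clauses and invoking the induction hypothesis. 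The two cases worth care are $\to$ and $\Not$: for $\psi_1\to\psi_2$, the clause $\sI\sep w\modelsn \psi_1\to\psi_2$ forces $\sI\sep w\modelsp\psi_1$ and $\sI\sep w\modelsn\psi_2$ at the \emph{same} world $w$, while $\sI\sep w\modelsp \psi_1\to\psi_2$ instantiated at $w'=w$ gives $\sI\sep w\not\modelsp\psi_1$ or $\sI\sep w\modelsp\psi_2$; combining these yields $\sI\sep w\modelsp\psi_2$ together with $\sI\sep w\modelsn\psi_2$, contradicting the induction hypothesis for $\psi_2$. For $\Not\psi$, the negative clause gives $\sI\sep w\modelsp\psi$ and $\sI\sep w\not\modelsn\psi$, whereas the positive clause gives either $\sI\sep w\modelsp\neg\psi$ (hence $\sI\sep w\not\modelsp\psi$, a contradiction) or $\sI\sep w\modelsp\psi\wedge\sneg\psi$ (hence $\sI\sep w\modelsn\psi$, again a contradiction).

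Next I would record the characterisation of the double negation that appears in the definition of cw-inference. Unfolding $\neg\!\sneg\varphi=(\sneg\varphi\to\bot)$ through the implication and strong-negation clauses, and using that $\bot$ is never positively forced, gives that $\sI\sep w\modelsp\neg\!\sneg\varphi$ holds iff $\sI\sep w'\not\modelsn\varphi$ for all $w'\geq w$. With this in hand, item~\ref{item:models:prop:consitent} is straightforward. The direction $\sI\models\varphi\Rightarrow\sI\modelsp\varphi$ is immediate from the definition $\sI\sep w\models\varphi$ iff $\sI\sep w\modelsp\neg\!\sneg\varphi\wedge\varphi$, by dropping the first conjunct. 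For the converse, assuming $\sI\modelsp\varphi$, i.e. $\sI\sep v\modelsp\varphi$ for every world $v$, the lemma gives $\sI\sep v\not\modelsn\varphi$ for every $v$, and in particular $\sI\sep w'\not\modelsn\varphi$ for all $w'\geq w$; by the characterisation above this means $\sI\sep w\modelsp\neg\!\sneg\varphi$ at every $w$, which together with $\sI\sep w\modelsp\varphi$ yields $\sI\models\varphi$.

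For item~\ref{item:neg:prop:consitent} I would first reduce it to the positive forcing relation by applying item~\ref{item:models:prop:consitent} to the two extended formulas $\Not\varphi$ and $\neg\varphi$ (the latter being $\varphi\to\bot$); it then suffices to show $\sI\modelsp\Not\varphi$ iff $\sI\modelsp\neg\varphi$. Unfolding the clause for $\Not$ gives $\sI\sep w\modelsp\Not\varphi$ iff $\sI\sep w\modelsp\neg\varphi$ or $\sI\sep w\modelsp\varphi\wedge\sneg\varphi$. But $\sI\sep w\modelsp\varphi\wedge\sneg\varphi$ means $\sI\sep w\modelsp\varphi$ and $\sI\sep w\modelsn\varphi$, which the lemma rules out for consistent $\sI$; so the second disjunct is never satisfied, the equivalence $\sI\sep w\modelsp\Not\varphi$ iff $\sI\sep w\modelsp\neg\varphi$ holds at every world, and hence $\sI\modelsp\Not\varphi$ iff $\sI\modelsp\neg\varphi$, as required.

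I expect the main obstacle to be the lemma rather than the two items themselves: once consistency is shown to propagate to all extended formulas, both equivalences collapse to the observation that, under consistency, the ``inconsistent'' disjunct $\varphi\wedge\sneg\varphi$ and the future falsity witnessed by $\sneg\varphi$ simply cannot arise. The only genuinely delicate points in the induction are the $\to$ and $\Not$ cases, where one must be careful that the relevant clause is evaluated at the same world (for $\modelsn$ of an implication) rather than along the whole upward cone.
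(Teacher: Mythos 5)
Your proof is correct, and while its overall shape resembles the paper's, the decomposition is genuinely different and in one respect more complete. The paper rests on Lemma~\ref{lem:models.consitent} (the per-world form of your item~i) and on Lemma~\ref{lem:default.negation}, an \emph{unconditional} equivalence $\sI\sep w \models \Not\varphi$ iff $\sI\sep w \modelsp \neg\varphi \vee (\varphi \wedge \sneg\varphi)$ valid for arbitrary \Ninterpretation s, itself established through three auxiliary lemmas (Lemmas~\ref{lem:default.negation.aux1}--\ref{lem:default.negation.aux3}) characterizing $\neg\!\sneg\Not\varphi$; consistency enters only at the end, to discard the disjunct $\varphi \wedge \sneg\varphi$. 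You instead reduce item~ii) to item~i) applied to the extended formulas $\Not\varphi$ and $\neg\varphi$ and then eliminate the inconsistent disjunct pointwise, which spares you the entire $\neg\!\sneg\Not\varphi$ machinery (your unfolding of $\Not$ at $\modelsp$-level is exactly its defining clause, so nothing is lost). More importantly, your key lemma -- that consistency, defined only on atoms via $V^+(w) \cap V^-(w) = \varnothing$, lifts by structural induction to all extended formulas -- is used \emph{implicitly} by the paper in two places (``since $\sI$ is consistent, $\sI\sep w \modelsp \varphi$ implies $\sI\sep w \not\modelsn \varphi$'' in Lemma~\ref{lem:models.consitent}, and ``since $\sI$ is consistent, $\sI\sep w \not\modelsp \varphi \wedge \sneg\varphi$'' in the proof of the proposition itself) but never proved; you supply the missing induction, with the right care at $\to$ (same-world evaluation of the negative clause, positive clause instantiated at $w' = w$) and at $\Not$, where, consistently with the paper's Lemma~\ref{lem:default.negation.consitent}, neither consistency nor the induction hypothesis is needed. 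A minor further dividend: by quantifying over all worlds in the converse direction of item~i), you avoid the appeal to persistence of $\modelsp$ that the paper's per-world formulation tacitly requires (its literal phrase ``$\sI\sep w \not\modelsn \varphi$ \dots implies $\sI\sep w' \not\modelsn \varphi$ for all $w' \geq w$'' is backwards as stated and is really persistence of $\modelsp$ plus consistency at each $w' \geq w$).
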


Despite the relation between default negation~$\Not$ and HT\nobreakdash-negation~$\neg$ on consistent interpretations, in general, they no not coincide.
The following example illustrates the difference between these two kinds of negations:

\begin{example}\label{ex:negation}
Let $\newtheory\label{th:negation}$ be the following theory:
\begin{gather*}
a \hspace{1.25cm}
\sneg a \hspace{1.25cm}
\Not \sneg a \sup b
\end{gather*}
This theory has a unique equilibrium model $\sI = \tuple{\bT,\bT}$
with $\bT = \set{ a , \sneg a, b }$.
Note that, every model $\sJ$ of~$\theory\ref{th:negation}$
must satisfy \mbox{$\sJ \modelsp a \wedge \sneg a$}
and, thus, it must also satisfy $\sJ \models \Not \sneg a$
and $\sJ \modelsp b$ follows (Proposition~\ref{prop:cw-relation}).
Hence, $\sI$ is a $\leq$-minimal model and, thus, an equilibrium model.
On the other hand, let
$\newtheory\label{th:negation2}$ be the theory:
\begin{gather*}
a \hspace{1.25cm}
\sneg a \hspace{1.25cm}
\neg\!\sneg a \sup b
\end{gather*}
In this case, we can check that
\mbox{$\sJ = \tuple{\bH,\bT}$}
with
\mbox{$\bH = \set{ a, \sneg a}$}
is a model of~$\theory\ref{th:negation2}$
because
\mbox{$\sJ \not\models \neg\!\sneg a$}
and, thus, now $\sI$ is not an equilibrium model.
In fact, $\tuple{\bH,\bH}$ is the unique equilibrium model of~$\theory\ref{th:negation2}$.
\end{example}

The following result shows the relation between default negation, implication and \mbox{cw\nobreakdash-inference}.

\begin{proposition}{\label{prop:cw-relation}}
Let $\sI$ be any \Ninterpretation\ and $\varphi$ be any formula.
Then, 
\begin{enumerate}[ label=\roman*), leftmargin=20pt]
\item $\sI \models \varphi$ and $\sI \modelsp \varphi \sup \psi$ implies \mbox{$\sI \modelsp \psi$},
\label{item:impl:prop:cw-relation}

\item $\sI \models \Not \varphi$ implies $\sI \not\models \varphi$.
\label{item:neg:1:prop:cw-relation}
\end{enumerate}
Furthermore, if $\sI$ is a total \HTinterpretation, then
\begin{enumerate}[ label=\roman*), leftmargin=20pt, start=3]
\item $\sI \models \Not \varphi$ iff $\sI \not\models \varphi$.
\label{item:neg:prop:cw-relation}
\end{enumerate}
\end{proposition}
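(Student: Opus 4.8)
The plan is to prove the three items by unfolding the definitions of cw-inference~\eqref{eq:val.def}, the new implication~\eqref{eq:sup.def}, and default negation~\eqref{eq:neg.def}, and then reasoning directly with the Nelson forcing clauses. Throughout I work with a fixed \Ninterpretation~$\sI$ and an arbitrary world~$w$, recalling that \mbox{$\sI\sep w \models \varphi$} means \mbox{$\sI\sep w \modelsp \neg\!\sneg\varphi \wedge \varphi$}, i.e. \mbox{$\sI\sep w \modelsp \varphi$} together with \mbox{$\sI\sep w \modelsp \neg\!\sneg\varphi$}.

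**For item~i)**, I assume \mbox{$\sI \models \varphi$} and \mbox{$\sI \modelsp \varphi \sup \psi$} and want \mbox{$\sI \modelsp \psi$}, so I must show \mbox{$\sI\sep w' \modelsp \psi$} for every world~$w'$. First I would observe that \mbox{$\sI\sep w' \modelsp \varphi \sup \psi$} expands, via~\eqref{eq:sup.def} and the forcing clause for~$\to$, to: for all \mbox{$w''\geq w'$}, either \mbox{$\sI\sep w'' \not\modelsp \neg\!\sneg\varphi \wedge \varphi$} or \mbox{$\sI\sep w'' \modelsp \psi$}. The antecedent \mbox{$\neg\!\sneg\varphi \wedge \varphi$} is exactly the cw-inference condition, so from \mbox{$\sI \models \varphi$} (which holds at every world, in particular at~$w''$) I get \mbox{$\sI\sep w'' \modelsp \neg\!\sneg\varphi \wedge \varphi$}, forcing the second disjunct \mbox{$\sI\sep w'' \modelsp \psi$}; taking \mbox{$w''=w'$} gives the claim. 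The key technical point is that \mbox{$\sI \models \varphi$} supplies the conjoined antecedent of the defined implication at \emph{every} world, which is precisely what the $\forall w''\!\geq\!w'$ quantifier in the intuitionistic clause demands.

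**For item~ii)**, I assume \mbox{$\sI \models \Not\varphi$} and want \mbox{$\sI \not\models \varphi$}. Expanding the forcing clause~\eqref{eq:neg.def}, \mbox{$\sI\sep w \modelsp \Not\varphi$} means \mbox{$\sI\sep w \modelsp \neg\varphi \vee (\varphi \wedge \sneg\varphi)$} at every~$w$. I would split into the two disjuncts. If \mbox{$\sI\sep w \modelsp \neg\varphi$}, then by the definition of intuitionistic negation \mbox{$\sI\sep w \not\modelsp \varphi$}, so certainly \mbox{$\sI\sep w \not\models\varphi$}. If instead \mbox{$\sI\sep w \modelsp \varphi \wedge \sneg\varphi$}, then \mbox{$\sI\sep w \modelsp \sneg\varphi$}, which by the clause for $\sneg$ means \mbox{$\sI\sep w \modelsn\varphi$}; I then use Proposition~\ref{prop:negation}\ref{item:2:prop:negation} (\mbox{$\sI\sep w \modelsn\neg\varphi$ iff $\sI\sep w \modelsn\sneg\varphi$}) or argue directly that \mbox{$\sI\sep w \modelsp \neg\!\sneg\varphi$} fails, so the conjunct \mbox{$\neg\!\sneg\varphi$} in the cw-inference condition is not satisfied and hence \mbox{$\sI\sep w \not\models\varphi$}. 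Either disjunct yields a world witnessing \mbox{$\sI\not\models\varphi$}.

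**For item~iii)**, with $\sI$ a total \HTinterpretation\ (so \mbox{$\bH=\bT$}), I prove the converse of~ii), namely \mbox{$\sI\not\models\varphi$ implies $\sI\models\Not\varphi$}; combined with~ii) this gives the biconditional. Here I would exploit totality and Persistence (Proposition~\ref{prop:preservation}): on a total HT-interpretation the two worlds~$h,t$ agree on the positive and negative valuations, so \mbox{$\sI\sep w\not\models\varphi$} means either \mbox{$\sI\sep w\not\modelsp\varphi$} or \mbox{$\sI\sep w\not\modelsp\neg\!\sneg\varphi$}. Using Proposition~\ref{prop:negation}\ref{item:1:prop:negation} to translate \mbox{$\not\modelsp\varphi$} into \mbox{$\modelsp\neg\varphi$} (valid because $h,t$ collapse), and translating a failure of \mbox{$\neg\!\sneg\varphi$} into \mbox{$\modelsp\varphi\wedge\sneg\varphi$}, I recover one of the two disjuncts of \mbox{$\Not\varphi$} at every world, which gives \mbox{$\sI\modelsp\Not\varphi$}, and then checking \mbox{$\sI\modelsp\neg\!\sneg\Not\varphi$} from the negative-forcing clause for $\Not$ completes \mbox{$\sI\models\Not\varphi$}. \textbf{The main obstacle} I expect is exactly this last direction: making the case split over the two ways cw-inference can fail line up cleanly with the two disjuncts defining~$\Not$, and verifying the extra \mbox{$\neg\!\sneg$} conjunct, which is where totality and the Persistence and HT-negation propositions must be invoked carefully rather than waved at.
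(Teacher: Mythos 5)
Your proposal is correct and follows essentially the same route as the paper: item~i) is exactly the paper's Lemma~\ref{lem:sup.mponens} (the universally quantified clause of $\to$ discharged by the cw-antecedent $\neg\!\sneg\varphi \wedge \varphi$ holding at every world), item~ii) is the paper's case split on the two disjuncts of the positive forcing clause for $\Not$, and item~iii) matches the paper's use of totality together with the equivalence \mbox{$\sI \models \Not\varphi$ iff $\sI \modelsp \neg\varphi \vee (\varphi \wedge \sneg\varphi)$} (Lemma~\ref{lem:default.negation}), whose $\neg\!\sneg\Not\varphi$ conjunct is checked via the negative forcing clause for $\Not$ just as you plan. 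If anything, your explicit handling of the second disjunct \mbox{$\sI \modelsp \varphi \wedge \sneg\varphi$} in the converse direction of~iii) is slightly more careful than the paper's written proof, which only spells out the case \mbox{$\sI\sep t \not\modelsp \varphi$}.
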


\noindent
Condition~\ref{item:impl:prop:cw-relation} formalizes a kind of \emph{modus ponens} for $\sup$ in the sense that, if the we have a consistent proof of the antecedent, then we have a (possibly inconsistent) proof of the consequent.
It is clear that this statement cannot be strengthened to provide a consistent proof of the consequent because any other formula could provide the contradictory evidence to make it inconsistent.
Note also that this relation is non-monotonic as adding new information may result in a contradictory antecedent.
Condition~\ref{item:neg:prop:cw-relation} formalizes the~\mbox{\ref{p:cwa}} assumption, that is, $\Not \varphi$ holds whenever $\varphi$ is not known to be true or we have contradictory evidence for it.
Note that, according to this, the default negation of an inconsistent formula is true and, therefore, the evaluation of default negation itself is always consistent (even if the formula is inconsistent):
that is, \mbox{$\sI\sep w \not\modelsp \Not \varphi$}
or
\mbox{$\sI\sep w \not\modelsn \Not \varphi$}
holds for any extended formula.

On the contrary that implication~$\sup$, default negation~$\Not$ cannot be straightforwardly defined\footnote{It is still an open question whether it is definable in terms of Nelson's connectives or not.} in terms of Nelson's connectives.

Another alternative, we have investigated was defining $\Not \varphi$ as
and
$\neg \varphi \vee (\varphi \wedge \sneg\varphi)$.
in terms of \mbox{cw\nobreakdash-inference}.
The following result shades light on this attempt.

\begin{proposition}{\label{prop:default.negation.alt}}
Let $\sI$ be any \Ninterpretation\ and $\varphi$ be any formula.
Then, $\sI \models \neg \varphi \vee (\varphi \wedge \sneg\varphi)$ iff
$\sI \models \neg \varphi$.

\end{proposition}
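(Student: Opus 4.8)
The plan is to reduce \emph{both} sides of the biconditional to one and the same elementary condition on~$\sI$, which I will call~$(\ast)$: that $\sI\sep w \not\modelsp \varphi$ holds for every world $w \in W$. Writing $\psi \eqdef \neg\varphi \vee (\varphi \wedge \sneg\varphi)$ for the left-hand formula, the work consists in unfolding the \mbox{cw\nobreakdash-inference} relation through its definition~\eqref{eq:val.def}, namely $\sI \models \chi$ iff $\sI\sep w \modelsp \neg\sneg\chi \wedge \chi$ for every $w$, and then grinding the two forcing computations down to~$(\ast)$. Since the proposition is stated for an \emph{arbitrary} \Ninterpretation\ (not only HT ones), I would not invoke Proposition~\ref{prop:negation} directly but instead record the three facts it rests on, each immediate from the forcing clauses: (a) $\sI\sep w \modelsp \neg\chi$ iff $\sI\sep w' \not\modelsp \chi$ for all $w' \geq w$ (using $\sI\sep w' \not\modelsp \bot$), so that demanding $\neg\chi$ at every world is the same as demanding $\not\modelsp\chi$ at every world; (b) $\sI\sep w \modelsn \neg\chi$ iff $\sI\sep w \modelsp \chi$ and $\sI\sep w \modelsn \sneg\chi$ iff $\sI\sep w \modelsp \chi$ (using that $\modelsn\bot$ holds everywhere); and (c) as a consequence, $\sI\sep w \modelsp \neg\sneg\chi$ for every $w$ iff $\sI\sep w \not\modelsn \chi$ for every $w$.

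First I would treat the right-hand side. Using (a), the conjunct ``$\sI\sep w \modelsp \neg\varphi$ for all $w$'' collapses to~$(\ast)$. For the conjunct ``$\sI\sep w \modelsp \neg\sneg\neg\varphi$ for all $w$'', fact (c) turns it into ``$\sI\sep w \not\modelsn \neg\varphi$ for all $w$'', and then fact (b), $\sI\sep w \modelsn \neg\varphi \iff \sI\sep w \modelsp \varphi$, turns \emph{this} into $(\ast)$ as well. Hence $\sI \models \neg\varphi$ is equivalent to~$(\ast)$; equivalently $\sI \models \neg\varphi$ iff $\sI \modelsp \neg\varphi$, so the $\neg\sneg$ prefix in~\eqref{eq:val.def} is inert on intuitionistic negations.

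Then I would handle the left-hand side $\sI \models \psi$, which by~\eqref{eq:val.def} is ``$\sI\sep w \modelsp \neg\sneg\psi$ and $\sI\sep w \modelsp \psi$, for all $w$''. For the $\neg\sneg\psi$ part, fact (c) reduces it to ``$\sI\sep w \not\modelsn \psi$ for all $w$'', and the crux is the negative-forcing computation: using the negative clauses ($\modelsn$ of a disjunction is conjunctive, $\modelsn$ of a conjunction is disjunctive) together with fact (b) one gets $\sI\sep w \modelsn \psi$ iff $\bigl(\sI\sep w \modelsp \varphi\bigr)$ and $\bigl(\sI\sep w \modelsn \varphi \text{ or } \sI\sep w \modelsp \varphi\bigr)$, which simplifies to just $\sI\sep w \modelsp \varphi$; so the $\neg\sneg\psi$ part is again exactly~$(\ast)$. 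This bookkeeping with the interacting strong and intuitionistic negations inside the inconsistency disjunct $\varphi\wedge\sneg\varphi$ is the one delicate step — it is where a sign slip would go unnoticed, so I would write it out explicitly. Finally, once~$(\ast)$ is established it forces $\sI\sep w \modelsp \neg\varphi$ at every world (by (a)), so the remaining ``$\sI\sep w \modelsp \psi$'' part is automatically satisfied through its first disjunct and contributes nothing. Thus $\sI \models \psi$ is also equivalent to~$(\ast)$, and comparing with the previous paragraph yields $\sI \models \psi$ iff $\sI \models \neg\varphi$, which is the claim.
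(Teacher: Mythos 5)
Your proof is correct and takes essentially the same route as the paper's: both unfold cw\nobreakdash-inference via~\eqref{eq:val.def} and reduce the $\neg\!\sneg$ prefix on each side to the single condition that no world forces $\varphi$ positively, i.e., to $\sI \modelsp \neg\varphi$. If anything, your pointwise forcing computation is slightly more careful than the paper's rewriting chain, which cites the HT-restricted Proposition~\ref{prop:negation} and distributes $\neg\!\sneg$ through $\vee$ and $\wedge$ in a way that is only incidentally sound for this particular formula, whereas you re-derive the needed negation facts (a)--(c) for arbitrary \Ninterpretations\ and verify the key collapse $\sI\sep w \modelsn \neg\varphi \vee (\varphi \wedge \sneg\varphi)$ iff $\sI\sep w \modelsp \varphi$ explicitly.
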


\noindent
That is, in terms of \mbox{cw\nobreakdash-inference}, $\neg \varphi \vee (\varphi \wedge \sneg\varphi)$ is equivalent to HT\nobreakdash-negation.
As illustrated by Example~\ref{ex:negation}, default negation and HT\nobreakdash-negation do not behave in the same way.

The following example illustrates that, though default negation allows to derive new knowledge from contradictory information, it does not allow to self justify a contradiction.

\begin{example}
Let $\newtheory\label{th:default}$ be a logic program containing the following single rule:
\begin{gather}
\Not \sneg a \sup a 
  \label{eq:default.rule}
\end{gather}
stating, as usual, that $a$ holds by default.
As expected this theory has a unique equilibrium model~$\sI$ which satisfies
\mbox{$\sI \models a$}
and
\mbox{$\sI \not\models \sneg a$}.
Let now
\mbox{$\newtheory\label{th:default2} = \theory\ref{th:default} \cup \set{ \sneg a}$}.
This second theory also has
a unique equilibrium model $\sI$ which now satisfies
\mbox{$\sI \models \sneg a$}
and
\mbox{$\sI \not\models a$}.
To see that
\mbox{$\sJ = \tuple{\bT,\bT}$} 
with
\mbox{$\bT = \set{ a , \sneg a }$}
is not an equilibrium model of~\theory\ref{th:default2},
let
\mbox{$\sJ' = \tuple{\bH,\bT}$} 
with
\mbox{$\bH = \set{ \sneg a }$}
be an interpretation.
Since $\sJ'$ satisfies \mbox{$\sJ' < \sJ$} and it is a model of $\sneg a$,
it only remains to be shown that
$\sJ'$ is a model of~\eqref{eq:default.rule}.
For that, just note
\mbox{$\sJ \models \sneg a$}
and, thus,
\mbox{$\sJ \not\models \Not \sneg a$}
follows by Proposition~\ref{prop:cw-relation}.
This implies that
$\sJ'$ satisfies~\eqref{eq:default.rule}
and, consequently, that
$\sJ$ is not an equilibrium model.
In fact, $\tuple{\bH,\bH}$ is the unique equilibrium model of~$\theory\ref{th:default2}$.
\end{example}

\subsection{A Conservative Extension of Logic Programming}

Let us now consider the language formed with the set of logical connectives
$$\LanLP \eqdef \set{ \bot, \sneg\hspace{1.5pt}, \wedge, \vee, \sup, \Not }$$
In other words, a \emph{\mbox{$\LanLP$-formula}}~$\fF$ is defined using the following grammar:
\[
\fF \quad ::= \quad \bot \ \mid \ a \ \mid \ \sneg \fF \ \mid \ \Not \varphi \ \mid \ \fF \wedge \fF \ \mid \ \fF \vee \fF \ \mid \ \fF \sup \fF
\]
with \mbox{$a \in \at$} being an atom.
A $\LanLP$-literal is either an explicit literal~$l$ or is default negation~$\Not l$.
A $\LanLP$-rule is a formula of the form \mbox{$H \supl B$}
where $H$ is a disjunction of atoms and $B$ is a conjunction of \mbox{$\LanLP$-literals}.
\emph{\mbox{$\LanLP$-theories}} and \emph{\mbox{$\LanLP$-programs}} are respectively defined as sets of \mbox{$\LanLP$-formulas} and \mbox{$\LanLP$-rules}.
The definition of an answer set is applied straightforwardly as in Definition~\ref{def:answer.set}.
Given any theory $\LanLP$-theory $\Gamma$, 
by $\LN{\Gamma}$ we denote the result of 
\begin{enumerate}[ leftmargin=15pt ]
\item replacing every occurrence of $\sup$ by~$\to$ and
\item and every occurrence of $\Not$ by $\neg$.
\end{enumerate}
Then, the following results follow directly from Propositions~\ref{prop:imp.consitent} and~\ref{prop:consitent}:
\black

\begin{theorem}\label{thm:conservative.ht}
Let $\Gamma$ be any \emph{$\LanLP$-theory} and $\sI$ be any consistent interpretation.
Then, $\sI$ is an equilibrium model of $\Gamma$ iff $\sI$ is an equilibrium model of $\LN{\Gamma}$.
\end{theorem}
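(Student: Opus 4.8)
The plan is to reduce the statement to a purely model-theoretic equivalence at the level of individual interpretations and then lift it to the equilibrium level. The central observation is that, for a \emph{fixed consistent} interpretation, being a model of $\Gamma$ and being a model of $\LN{\Gamma}$ is the same thing: Proposition~\ref{prop:imp.consitent} and Proposition~\ref{prop:consitent} already give exactly this for the two connectives that the translation rewrites, so the real work is to propagate these equivalences through nested subformulas and then to cope with the minimisation clause in Definition~\ref{def:equilibrium}.

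First I would establish a replacement lemma: for every consistent \Ninterpretation~$\sI$, every world~$w$ and every $\LanLP$-formula~$\varphi$,
\[
\sI\sep w \modelsp \varphi \iff \sI\sep w \modelsp \LN{\varphi}
\qquad\text{and}\qquad
\sI\sep w \modelsn \varphi \iff \sI\sep w \modelsn \LN{\varphi}.
\]
This is proved by structural induction on $\varphi$. The cases $\bot$ and $a\in\at$ are immediate since the translation leaves them unchanged, and the cases $\sneg$, $\wedge$, $\vee$ follow from the inductive hypothesis because the $\modelsp$- and $\modelsn$-clauses are compositional; note that both polarities must be carried through the induction simultaneously, precisely because $\sneg$ swaps them. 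For $\sup$, Proposition~\ref{prop:imp.consitent} handles the $\modelsp$-direction, while for $\modelsn$ I would unfold $\varphi_1\sup\varphi_2 \eqdef (\neg\sneg\varphi_1\wedge\varphi_1)\to\varphi_2$ and observe that $\sI\sep w \modelsp \varphi_1$ already forces $\sI\sep w \modelsp \neg\sneg\varphi_1$ under consistency (by persistence, Proposition~\ref{prop:preservation}, and Proposition~\ref{prop:negation}), so the negative clause collapses to that of $\varphi_1\to\varphi_2$, which the inductive hypothesis on $\varphi_1,\varphi_2$ then matches with $\LN{\varphi_1}\to\LN{\varphi_2}$. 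For $\Not$, combining parts~\ref{item:models:prop:consitent} and~\ref{item:neg:prop:consitent} of Proposition~\ref{prop:consitent} gives the $\modelsp$-equivalence with $\neg$, and for $\modelsn$ both $\Not\varphi$ and $\neg\varphi$ reduce to $\sI\sep w\modelsp\varphi$ once one uses the routine fact that no formula is simultaneously $\modelsp$- and $\modelsn$-forced at a world of a consistent interpretation.

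The second ingredient I would record is that consistency is inherited downwards: if $\sI=\tuple{\bT,\bT}$ is a total consistent \HTinterpretation\ and $\sI'=\tuple{\bH',\bT}\leq\sI$, then $\sI'$ is consistent as well, since $\bH'\subseteq\bT$ forces the positive and negative parts of the $h$-world to be subsets of the (disjoint) positive and negative parts of $\bT$. This is what lets the replacement lemma be applied not only to $\sI$ but to every competing interpretation $\sI'<\sI$ appearing in the minimisation clause of Definition~\ref{def:equilibrium}.

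With these two facts the theorem becomes symmetric. Assume $\sI$ is a consistent equilibrium model of $\Gamma$: it is total, and by the replacement lemma $\sI\modelsp\LN{\Gamma}$. If some $\sI'<\sI$ were a \HTmodel\ of $\LN{\Gamma}$, then $\sI'$ would be consistent by inheritance, hence $\sI'\modelsp\Gamma$ again by the replacement lemma, contradicting the minimality of $\sI$ for $\Gamma$; thus $\sI$ is an equilibrium model of $\LN{\Gamma}$. The converse is identical with the roles of $\Gamma$ and $\LN{\Gamma}$ exchanged. The main obstacle is exactly this minimisation clause: Propositions~\ref{prop:imp.consitent} and~\ref{prop:consitent} require consistency, whereas a priori the competing models $\sI'$ need not be consistent, and the inheritance observation is what closes this gap; a secondary subtlety is remembering to verify the $\modelsn$-side of the replacement lemma, so that the induction goes through under strong negation.
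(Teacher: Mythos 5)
Your proof is correct and takes essentially the paper's route: the paper derives this theorem directly from Propositions~\ref{prop:imp.consitent} and~\ref{prop:consitent}, and your replacement lemma (carried through both forcing relations by structural induction) together with the symmetric lifting to equilibrium models is exactly the expansion of that ``follows directly'' claim. Your observation that every competitor $\sI' = \tuple{\bH',\bT} < \sI$ inherits consistency from the total consistent $\sI$ (since $\bH' \subseteq \bT$) is precisely the detail the paper leaves implicit that makes the minimality clause of Definition~\ref{def:equilibrium} go through, and you identify it correctly.
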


\begin{corollary}\label{cor:conservative.asp}
Let $P$ be a $\LanLP$-program and $\bT$ be any consistent set of explicit literals.
Then, $\sI = \tuple{\bT,\bT}$ is an equilibrium model of $P$ iff $\bT$ is an answer set of~$P$.
\end{corollary}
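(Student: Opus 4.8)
The plan is to obtain the corollary by composing Theorem~\ref{thm:conservative.ht} with Proposition~\ref{prop:per96}, after first checking that the answer sets of $P$ and of $\LN{P}$ coincide whenever the candidate set is consistent. First I would observe that, since $\bT$ is consistent, the interpretation $\sI = \tuple{\bT,\bT}$ is a \emph{total} and \emph{consistent} \HTinterpretation: totality is immediate from $\bH = \bT$, and consistency holds because a consistent $\bT$ forces the positive and negative parts of $\bT$ to be disjoint, so that $V^+(w) \cap V^-(w) = \varnothing$ in both worlds. Consequently Theorem~\ref{thm:conservative.ht} applies and gives that $\sI$ is an equilibrium model of $P$ iff it is an equilibrium model of $\LN{P}$, reducing the left-hand side of the corollary to a statement purely about the logic program $\LN{P}$ in the language of $\modelsp$, $\to$ and $\neg$.

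Next I would bridge equilibrium models of $\LN{P}$ with its answer sets via Proposition~\ref{prop:per96}. The key auxiliary remark is that the set of explicit literals true in a total interpretation $\tuple{\bT',\bT'}$ is exactly $\bT'$: an atom $a$ is forced by $\modelsp$ iff it lies in the positive part of $\bT'$, and $\sneg a$ is forced iff $a$ lies in its negative part. Since equilibrium models are total by Definition~\ref{def:equilibrium}, Proposition~\ref{prop:per96} then reads precisely as: $\bT$ is a stable model (answer set) of $\LN{P}$ iff $\tuple{\bT,\bT}$ is a consistent equilibrium model of $\LN{P}$. Combining this with the previous step yields that $\sI$ is an equilibrium model of $P$ iff $\bT$ is an answer set of $\LN{P}$.

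It then remains to identify the answer sets of $P$ with those of $\LN{P}$ for consistent $\bT$. Here I would argue that the reduct of Definition~\ref{def:answer.set} commutes with the translation $\LN{\cdot}$: since $\LN{\cdot}$ merely rewrites each $\Not l$ as $\neg l$ (same literal, same membership test $l \in \bT$) and each $\sup$ as $\to$, reducing and then translating produces the same negation-free program as translating and then reducing, i.e.\ $(\LN{P})^{\bT} = \LN{P^{\bT}}$. The two reducts differ only in the implication symbol. For both closure and minimality one evaluates these rules solely on subsets $\bT' \subseteq \bT$, and every such subset is again consistent; hence Proposition~\ref{prop:imp.consitent} makes $\modelsp$-closure under $P^{\bT}$ and under $(\LN{P})^{\bT}$ coincide on all the relevant sets, so the two programs have the same $\subseteq$-minimal closed subsets of $\bT$. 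Therefore $\bT$ is an answer set of $P$ iff it is an answer set of $\LN{P}$, which closes the chain of equivalences.

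I expect the main obstacle to be this last step: making precise that consistency is preserved throughout the minimality computation (so that Proposition~\ref{prop:imp.consitent} is legitimately applicable to every set inspected) and that the reduct genuinely commutes with $\LN{\cdot}$. By contrast, the equilibrium-model/stable-model bridge in the middle is essentially bookkeeping once the ``true explicit literals of a total model'' remark is in place, and the opening reduction to $\LN{P}$ is a direct invocation of Theorem~\ref{thm:conservative.ht}.
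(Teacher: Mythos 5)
Your proposal is correct and follows essentially the same route the paper intends: the paper derives Corollary~\ref{cor:conservative.asp} from Theorem~\ref{thm:conservative.ht} (itself a direct consequence of Propositions~\ref{prop:imp.consitent} and~\ref{prop:consitent}) combined with Pearce's Proposition~\ref{prop:per96}, which is exactly your chain of equivalences. The only difference is that you make explicit the bookkeeping the paper leaves implicit --- that the reduct commutes with $\LN{\cdot}$ and that Proposition~\ref{prop:imp.consitent} identifies $\sup$-closure and $\to$-closure on the (automatically consistent) subsets of $\bT$ --- which is a faithful filling-in rather than a different approach.
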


In other words, the equilibrium models semantics are a conservative extension of the answer set semantics.
The following example shows the usual representation of the Tweety scenario in this logic (an alternative representation using contradictory evidence will be discussed in the Discussion section).

\begin{examplecont}{ex:tweety}\label{ex:tweety2}
Consider again the Tweety scenario.
The following logic program~$\newprogram\label{prg:tweety}$ is a usual way of representing this scenario in LP:
\begin{IEEEeqnarray}{rlCl}
&\mathit{flyTweety}  &\supl& \mathit{birdTweety} \wedge \Not \sneg\mathit{flyTweety}
  \label{eq:rule.flyTweety}
\\
&\mathit{birdTweety} &\supl&  \mathit{penguinTweety}
\\
\sneg\,&\mathit{flyTweety} &\supl& \mathit{penguinTweety} 
\\
&\IEEEeqnarraymulticol{2}{l}{\mathit{penguinTweety}} \notag
\end{IEEEeqnarray}
where rule~\eqref{eq:rule.flyTweety} formalizes the statement~``birds normally can fly.''
This is achieved by considering $\sneg\mathit{flyTweety}$ as an exception to this rule.
It can be checked that $\program\ref{prg:tweety}$ has a unique equilibrium model~$\sI_{\ref{prg:tweety}}$, which is consistent, and which satisfies~$\sI_{\ref{prg:tweety}} \not\models \mathit{flyTweety}$
and~$\sI_{\ref{prg:tweety}} \models \Not\mathit{flyTweety}$.
In other words, Tweety cannot fly.
\end{examplecont}

\begin{examplecont}{ex:impl.diff}\label{ex:impl.diff.neg}
Consider now the theory obtained by replacing formulas $a \sup c$ and $b \sup d$
in~$\theory\ref{th:imp.diff}$ by the following two formulas:
\begin{IEEEeqnarray*}{rCl " rCl}
\Not e \wedge a &\sup& c
&
\Not e \wedge b &\sup& d
\end{IEEEeqnarray*}
Let $\newtheory\label{th:imp.diff.neg}$ be such theory.
It is easy to see that 
neither
$\theory\ref{th:imp.diff.neg}$
nor
$\LN{\theory\ref{th:imp.diff.neg}}$
monotonically entail~$c$ nor~$d$.
This is due to the fact that the negation of $e$ is not monotonically entailed:
\mbox{$\theory\ref{th:imp.diff.neg} \not\modelsp \Not e$}
and
\mbox{$\LN{\theory\ref{th:imp.diff.neg}} \not\modelsp \neg e$}.
On the other hand, the negation of $e$ is \mbox{non-monotonically} entailed in both cases:
\mbox{$\theory\ref{th:imp.diff.neg} \models \Not e$}
and
\mbox{$\LN{\theory\ref{th:imp.diff.neg}} \models \neg e$}.
Note that
both \mbox{$\theory\ref{th:imp.diff.neg}$} and
\mbox{$\LN{\theory\ref{th:imp.diff.neg}}$}
have a unique equilibrium model,
\mbox{$\sI_{\ref{th:imp.diff.neg}} = \tuple{\bT,\bT}$}
and
\mbox{$\sI_{\ref{th:imp.diff.neg}}' = \tuple{\bT',\bT'}$}
with $\bT = \set{a,b ,\sneg b, c}$
and
\mbox{$\bT' = \set{a,b, \sneg b, c, d}$}, respectively,
and in both cases we have
\mbox{$\sI_{\ref{th:imp.diff.neg}} \models \Not e$} and
\mbox{$\sI_{\ref{th:imp.diff.neg}}' \models \neg e$}.
As a result, we get that both theories cautiously entail~$c$.
However, as happened in Example~\ref{ex:impl.diff}, only $\LN{\theory\ref{th:imp.diff.neg}}$ cautiously entails~$d$, because the unique evidence for $d$ comes from~$b$ for which we have inconsistent evidence.
This behavior is different from paraconsistent answer sets~\cite{SakamaI95,OdintsovP05}.
As pointed out by~\citeN{SakamaI95}, the truth of~$d$ is less credible than the truth of $c$, since $d$ is derived through the contradictory fact~$b$.
In order to distinguish such two facts~\citeN{SakamaI95} also define \emph{suspicious answer sets}
which do not consider $d$ as true.\footnote{
Suspicious answer sets are based on a 6-value lattice which add the values \emph{suspiciously true} and \emph{suspiciously false} to the four values of~$\Nn$.
In the unique suspicious answer set of
$\theory\ref{th:imp.diff.neg}$,
atom~$d$ gets assigned the suspiciously true value instead the true value.
A formal comparison with suspicious answer sets is left for future work.}

This example also helps us to illustrate the strengthened closed world assumption principle~\ref{p:cwa}.
On the one hand, we have that
\mbox{$\theory\ref{th:imp.diff.neg} \models \Not e$}
holds because there is no evidence for $e$.
On the other hand, 
we have that
\mbox{$\theory\ref{th:imp.diff.neg} \models \Not b$}
holds because we have contradictory evidence for $b$.
Moreover, we have that \mbox{$\theory\ref{th:imp.diff.neg} \models \Not d$} holds
because the only evidence we have for $d$ is based on the contradictory evidence for $b$.
\end{examplecont}

\section{Argumentation Frameworks in Equilibrium Logic}

In this section, we show how AFs, SETAFs and EBAFs can be translated in this logic in a modular way and using only the object language.
This translation is a formalization of the intuition of an attack stated in~\ref{p:attack}.
Theorems~\ref{thm:af.stable<->emodel}, \ref{thm:sf.stable<->emodel} and~\ref{thm:ef.stable<->emodel} show that the equilibrium models of this translation precisely characterize the stable extension of the corresponding framework.

\subsection{Dung's Argumentation Frameworks}

Now, let us formalize the notion of attack introduced in~\ref{p:attack}, by defining the following connective:
\begin{gather}
\varphi_1 \attacks \varphi_2 \ \ \eqdef \ \ \varphi_1 \sup \sneg\varphi_2
\label{eq:att.def}
\end{gather}
Here we identify the acceptability of $\varphi_1$ with having a consistent proof of it, or in other words, as having a proof of the truth of $\varphi_1$ and not having a proof of its falsity.
Then, \eqref{eq:att.def} states that the acceptability of $\varphi_1$ allows to construct a proof of the falsity of $\varphi_2$.
In this sense, we identify a proof of the falsity of~$\varphi_2$
with $\varphi_2$ being defeated.

\begin{proposition}\label{prop:att.mponens}
  Given any \Ninterpretation~$\sI$ and any pair of formulas $\varphi_1,\varphi_2$, the following conditions hold:
  \begin{enumerate}[ label=\roman*)]
  
  \item $\sI \models \varphi_1$ and $\sI \modelsp \varphi_1 \att \varphi_2$
  imply $\sI \modelsn \varphi_2$
  \label{item:2:prop:sup+att.models}
  \end{enumerate}
\end{proposition}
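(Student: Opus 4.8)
The plan is to unfold the definitions of $\att$ and $\sup$ and then invoke the modus-ponens-style result already proved as Proposition~\ref{prop:cw-relation}\ref{item:impl:prop:cw-relation}. Recall that by~\eqref{eq:att.def} we have $\varphi_1 \att \varphi_2 \eqdef \varphi_1 \sup \sneg \varphi_2$, so the hypothesis $\sI \modelsp \varphi_1 \att \varphi_2$ is literally $\sI \modelsp \varphi_1 \sup \sneg \varphi_2$. Together with the other hypothesis $\sI \models \varphi_1$, this is exactly the shape required by Proposition~\ref{prop:cw-relation}\ref{item:impl:prop:cw-relation}, instantiated with $\psi \eqdef \sneg \varphi_2$.

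First I would apply Proposition~\ref{prop:cw-relation}\ref{item:impl:prop:cw-relation} to obtain $\sI \modelsp \sneg \varphi_2$. Then I would translate this strong-negation statement back into a falsity statement about $\varphi_2$ itself. This is immediate from the Nelson forcing clause for strong negation, namely that $\sI\sep w \modelsp \sneg \varphi$ iff $\sI\sep w \modelsn \varphi$ for every world $w$; quantifying over all worlds gives $\sI \modelsp \sneg \varphi_2$ iff $\sI \modelsn \varphi_2$. Combining the two steps yields $\sI \modelsn \varphi_2$, which is the desired conclusion.

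I do not expect any genuine obstacle here, since the statement is essentially a repackaging of the modus-ponens property for $\sup$ through the definitional abbreviation $\att$ and the defining clause for $\sneg$. The only point deserving a moment of care is making sure the instantiation of Proposition~\ref{prop:cw-relation} is applied to an arbitrary \Ninterpretation\ (the proposition's first two items hold for any \Ninterpretation, without the totality assumption needed for item~\ref{item:neg:prop:cw-relation}), which matches the hypotheses of the present statement. Thus the proof is a short chain: rewrite $\att$ as $\sup \sneg$, apply the consistent-proof modus ponens to get $\sI \modelsp \sneg \varphi_2$, and finally read off $\sI \modelsn \varphi_2$ from the semantics of strong negation.
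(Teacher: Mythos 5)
Your proposal is correct and follows essentially the same route as the paper's own proof: unfold $\varphi_1 \att \varphi_2$ as $\varphi_1 \sup \sneg\varphi_2$, apply the modus ponens property of Proposition~\ref{prop:cw-relation}\ref{item:impl:prop:cw-relation} to obtain $\sI \modelsp \sneg\varphi_2$, and read off $\sI \modelsn \varphi_2$ from the forcing clause for strong negation. Your remark that item~\ref{item:impl:prop:cw-relation} requires no totality assumption, and hence applies to an arbitrary \Ninterpretation, is exactly the right point of care (and is, if anything, stated more precisely than in the paper's sketch).
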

Using the language $\LanAF = \set{\att}$, we can translate any AF as follows:

\begin{definition}\label{def:af.translation}
Given some framework $\AFF$, we define the theory:
\begin{IEEEeqnarray}{ l ,C, l}
\LAF{\AF} &\eqdef& \A \cup \setm{ a \att b }{ (a,b) \in \R }
  \label{eq:def.gamma.af}
\end{IEEEeqnarray}
In addition, we assign a corresponding
set of arguments \mbox{$\SI \eqdef \setm{\! a \in \A \!}{\! \sI \models a \!}$}
to every interpretation~$\sI$.
\end{definition}

Translation~$\LAF{\cdot}$ applies the notion of attack introduced in~\ref{p:attack} to translate an AF into a logical theory.
The strengthened close world assumption~\ref{p:cwa} is used to retrieve the arguments~$\SI$ corresponding to each stable model~$\sI$ of the logical theory obtained from this translation.

\begin{example}\label{ex:af.line}
To illustrate this translation, let $\newaf\label{af:line}$ be the framework corresponding to the following graph:
\begin{center}
\begin{tikzpicture}[tikzpict]
    \matrix[row sep=0.5cm,column sep=1.5cm,ampersand replacement=\&] {
      \node (a) [arg] {$\mathbf{a}$};\&
      \node (b)[arg] {$\mathbf{b}$};\&
      \node (c)[arg] {$\mathbf{c}$};\\
     };
    \draw [->,-triangle 45] (a) to (b);
    \draw [->,-triangle 45] (b) to (c);
\end{tikzpicture}
\end{center}
Then, we have that $\LAF{\af\ref{af:line}}$ is the theory containing the following two attacks:
\begin{gather*}
a \att b \hspace{2cm} b \att c
\end{gather*}
plus the facts $\set{a,b,c}$.
\end{example}

\begin{proposition}{\label{prop:af.model}}
Let \mbox{$\AF$} be some framework and
\mbox{$\sI$} be some \HTmodel\ of $\LAF{\AF}$.
Then, the following hold:
\begin{enumerate}[ label=\roman*), leftmargin=20pt]

\item if $a$ is defeated \wrt~$\SI$, then $\sI \modelsp \sneg a$
\label{item:1:prop:af.model}

\item $\SI$ is \cfree.
\end{enumerate}
If, in addition, $\sI$ is an $\leq$-minimal model, then
\black
\begin{enumerate}[ label=\roman*), start=3, leftmargin=20pt]

\item $a$ is defeated \wrt~$\SI$ iff $\sI \modelsp \sneg a$.
\label{item:3:prop:af.model}
\end{enumerate}
\end{proposition}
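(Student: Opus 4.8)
The plan is to translate everything into conditions on the valuations of $\sI = \tuple{\bH,\bT}$ and then argue separately for the three items. First I would record the bridge between the argumentation side and the logical side. Unfolding the \mbox{cw\nobreakdash-inference} definition~\eqref{eq:val.def} together with Proposition~\ref{prop:negation}, $\sI \models a$ holds iff $\sI \modelsp a$ and $\sI \modelsp \neg\sneg a$, i.e.\ iff $a \in H^+$ and $a \notin T^-$; so $\SI = \setm{a \in \A}{a \in H^+ \text{ and } a \notin T^-}$. I would also note that $\sI \modelsp \sneg a$ iff $\sI \modelsn a$ (both say $a \in V^-(w)$ for all worlds), equivalently $a \in H^-$, and that $a \in H^- \subseteq T^-$ by persistence (Proposition~\ref{prop:preservation}).

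For item~(i), suppose $a$ is defeated \wrt~$\SI$. Since $\AF$ is a binary framework this means there is $b \in \SI$ with $(b,a) \in \R$, so the attack formula $b \att a$ belongs to $\LAF{\AF}$ and hence $\sI \modelsp b \att a$. As $b \in \SI$ gives $\sI \models b$, Proposition~\ref{prop:att.mponens} yields $\sI \modelsn a$, that is $\sI \modelsp \sneg a$. Item~(ii) is then immediate: if some $a \in \SI$ were defeated \wrt~$\SI$, item~(i) would give $\sI \modelsp \sneg a$, hence $a \in T^-$, contradicting $a \notin T^-$ which holds because $a \in \SI$. Thus $\SI \cap \Defeated{\SI} = \varnothing$ and $\SI$ is \cfree.

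Item~(iii) only adds the converse of~(i) under $\leq$-minimality, and this is where the real work lies. Assuming $\sI \modelsp \sneg a$ (so $a \in H^-$) but $a$ \emph{not} defeated \wrt~$\SI$, I would build a strictly smaller model and contradict minimality. Concretely, let $\sI' = \tuple{\bH',\bT}$ be obtained from $\sI$ by deleting $\sneg a$ from the here-part, i.e.\ $V'^-(h) = V^-(h) \setminus \set{a}$ with all other valuations (in particular all of $\bT$) unchanged; this is a legal \HTinterpretation\ since $V'^-(h) \subseteq V^-(h) \subseteq T^-$, and $\sI' < \sI$ because $a \in H^-$. It then remains to check $\sI' \modelsp \LAF{\AF}$. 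The facts and every attack formula whose consequent is $\sneg d$ with $d \neq a$ are forced exactly as in $\sI$, since their truth at each world depends only on valuations left untouched; the only delicate formulas are the attacks $c \att a$ with $(c,a) \in \R$.

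For such a formula $c \att a = (\neg\sneg c \wedge c) \to \sneg a$, the consequent $\sneg a$ is still forced at $t$ (because $a \in T^-$ is unchanged), so the intuitionistic implication can only fail at the world $h$, and there it fails only if its antecedent is forced, i.e.\ $\sI'\sep h \modelsp \neg\sneg c \wedge c$. By the characterisation above, $\sI'\sep h \modelsp \neg\sneg c \wedge c$ iff $c \in H^+$ and $c \notin T^-$, which is precisely $c \in \SI$. Since $a$ is not defeated \wrt~$\SI$, no attacker $c$ of $a$ lies in $\SI$, so every such antecedent fails at $h$ and the formula is satisfied. Hence $\sI'$ is a model with $\sI' < \sI$, contradicting minimality, so $a$ must be defeated \wrt~$\SI$. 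The main obstacle I expect is exactly this last verification: confirming that removing $\sneg a$ at $h$ preserves all attacks targeting $a$, which hinges on matching the forcing of the antecedent $\neg\sneg c \wedge c$ at $h$ with membership in $\SI$ and on the consequent $\sneg a$ surviving at $t$ thanks to $a \in T^-$.
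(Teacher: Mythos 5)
Your proof is correct and takes essentially the same route as the paper: item (i) is exactly the paper's argument via Proposition~\ref{prop:att.mponens} (Lemma~\ref{lem:model->defeated}), item (ii) the same observation that $a \in \SI$ forces $a \notin T_\sI^-$ (Lemma~\ref{lem:model->cfree}), and item (iii) the same minimality argument that deletes $\sneg a$ from the here-world valuation (Lemma~\ref{lem:sf.emodel->defeated}, stated there in the direct direction, pick $a \in H_\sI^-$ and conclude $a$ is defeated, rather than by contradiction as you do --- logically the same step). The only cosmetic difference is that the paper obtains the proposition as the singleton-attack instance of the SETAF result (Proposition~\ref{prop:sf.model}), whereas you argue directly for AFs and make explicit the locality check --- that only attack formulas with consequent $\sneg a$ can be affected by the modification --- which the paper leaves implicit.
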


\begin{examplecont}{ex:af.line}
Continuing with our running example,
let
\mbox{$\sI_{\ref{af:line}} = \tuple{\bT_{\ref{af:line}},\bT_{\ref{af:line}}}$}
and
\mbox{$\sJ_{\ref{af:line}} = \tuple{\bT_{\ref{af:line}}',\bT_{\ref{af:line}}'}$}
be two total models of $\Gamma_{\af\ref{af:line}}$
with
\mbox{$\bT_{\ref{af:line}} = \set{a,b,c,\sneg b}$}
and
\mbox{$\bT_{\ref{af:line}}' = \set{a,b,c,\sneg a,\sneg c}$}.
Then, we have that both
\mbox{$S_{\sI_{\ref{af:line}}} = \set{a,c}$}
and
\mbox{$S_{\sJ_{\ref{af:line}}} = \set{b}$} are \mbox{conflict-free}
(though only $S_{\sI_{\ref{af:line}}}$ is stable).
Furthermore, we also can see that argument~$b$ is the unique defeated argument \wrt~$S_{\sI_{\ref{af:line}}}$ and the unique atom for which $\sI_{\ref{af:line}} \modelsp \sneg b$ holds.
On the other hand, we get that argument~$c$ is the unique defeated argument \wrt~$S_{\sJ_{\ref{af:line}}}$ and also
both
\mbox{$\sJ_{\ref{af:line}} \modelsp \sneg a$}
and
\mbox{$\sJ_{\ref{af:line}} \modelsp \sneg c$}
hold.
Note that, as stated by~\ref{item:3:prop:af.model} in Proposition~\ref{prop:af.model},
this implies that only $S_{\sI_{\ref{af:line}}}$ can be an equilibrium model.
Let us show that it is indeed the case that $\sJ_{\ref{af:line}}$ is not an equilibrium model
and let us define, for that purpose, an interpretation~$\sJ_{\ref{af:line}}' = \tuple{\bH_{\ref{af:line}}',\bT_{\ref{af:line}}'}$
with \mbox{$\bH_{\ref{af:line}}' = \bT_{\ref{af:line}}' \setminus \set{ \sneg a} = \set{a,b,c,\sneg c}$}.
In other words, interpretation~$\sJ_{\ref{af:line}}'$ is as $\sJ_{\ref{af:line}}$, but removing the non-defeated argument~$a$ as a negated conclusion~$\sneg a$.
It is easy to check that
\mbox{$\sJ_{\ref{af:line}}' \models b \att c$} because \mbox{$\sneg c \in \bH_{\ref{af:line}}'$} holds.
Besides, since \mbox{$\sneg a \in \bT_{\ref{af:line}}'$}, we have that $\sJ_{\ref{af:line}}' \not\models a$ and, therefore, that $\sJ_{\ref{af:line}}' \models a \att b$.
This implies that $\sJ_{\ref{af:line}}'$ is a model of~$\Gamma_{\af\ref{af:line}}$.
Since $\sJ_{\ref{af:line}}' <  \sJ_{\ref{af:line}}$, we get that $\sJ_{\ref{af:line}}$ is not an equilibrium model.
\end{examplecont}

In fact, we can generalize this correspondence between the stable extensions and the equilibrium models to any argumentation framework as stated by the following theorem:

\begin{theorem}\label{thm:af.stable<->emodel}
Given some $\AFF$, there is a one-to-one correspondence between its stable extensions and the equilibrium models of $\LAF{\AF}$ such that
\begin{enumerate}[ label=\roman*)]
\item if $\sI$ is an equilibrium model of $\LAF{\AF}$, then $\SI$ is a stable extension of~$\AF$,\label{item:1:thm:stable<->emodel}

\item if $\aA$ is a stable extension of $\AF$ and $\sI$ is a total interpretation such that $T_\sI^+ \!=\! \A$ and $T_\sI^- \!=\! \Defeated{\aA}$, then $\sI$ is an equilibrium model of $\LAF{\AF}$.
\label{item:2:thm:stable<->emodel}
\end{enumerate}
\end{theorem}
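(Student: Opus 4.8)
The plan is to exploit that for an ordinary AF every argument is prima-facie, so $\UnAcceptable{\aA}=\Defeated{\aA}$ and the stability condition of Definition~\ref{def:semantics} collapses to \mbox{$\aA = \A \setminus \Defeated{\aA}$}, i.e.\ the usual Dung condition. The bridge between the two sides is the observation that, for a \emph{total} interpretation $\sI=\tuple{\bT,\bT}$, unfolding \eqref{eq:val.def} gives $\sI \models a$ iff $a\in T^+$ and $a\notin T^-$ (since $\sI\modelsp \neg\sneg a$ iff $a\notin T^-$). Because $\LAF{\AF}$ contains every argument as a fact and the only atoms are the arguments, every model has $T^+=\A$; hence $\SI = \A\setminus T^-$. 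So the whole proof reduces to controlling $T^-$ and matching it with $\Defeated{\SI}$.

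For direction~\ref{item:1:thm:stable<->emodel}, I would take an equilibrium model $\sI$, which is total and $\leq$-minimal. Minimality lets me invoke Proposition~\ref{prop:af.model}\ref{item:3:prop:af.model}, giving that $a$ is defeated \wrt~$\SI$ iff $\sI\modelsp\sneg a$, i.e.\ iff $a\in T^-$ (using totality). Combined with $\SI=\A\setminus T^-$ this yields $\Defeated{\SI}=T^-$ and therefore $\SI = \A\setminus \Defeated{\SI}$, which is exactly stability; conflict-freeness comes for free from Proposition~\ref{prop:af.model}(ii). This direction is thus essentially a bookkeeping exercise on top of Proposition~\ref{prop:af.model}.

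For direction~\ref{item:2:thm:stable<->emodel}, I first check that the prescribed total interpretation $\sI$ (with $T^+=\A$, $T^-=\Defeated{\aA}$) is an \HTmodel: the facts hold since $T^+=\A$, and for each attack $a\att b$ the antecedent $\neg\sneg a\wedge a$ holds at a world iff $a\in\A$ and $a\notin\Defeated{\aA}$, i.e.\ (by stability) iff $a\in\aA$; but then $(a,b)\in\R$ forces $b\in\Defeated{\aA}=T^-$, so $\sneg b$ holds and the implication \eqref{eq:att.def} is satisfied. The hard part is $\leq$-minimality. I would suppose $\sI'=\tuple{\bH',\bT}<\sI$ is a model; the facts force $H'^+=\A$, so the strict inequality must come from the negative part, giving some $c\in\Defeated{\aA}$ with $\sI'\sep h\not\modelsn c$. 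Since $c$ is defeated there is $a\in\aA$ with $(a,c)\in\R$, and conflict-freeness of $\aA$ gives $a\notin\Defeated{\aA}=T^-$, hence $\sI'\sep h\modelsp \neg\sneg a\wedge a$; as $\sI'$ models $a\att c$ this forces $\sI'\sep h\modelsn c$, a contradiction. So no smaller model exists and $\sI$ is an equilibrium model.

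Finally I would note that the two constructions are mutually inverse (from $\sI$ we get $\SI=\A\setminus T^-$ whose defeated set is $T^-$, recovering $\sI$; from $\aA$ we get $\SI=\aA$), establishing the claimed one-to-one correspondence. The main obstacle is the minimality argument in~\ref{item:2:thm:stable<->emodel}: one must pin down exactly which literal can be dropped when moving to a smaller $\bH'$ and then use a single attack formula together with conflict-freeness to re-force it back into $\bH'$. Everything else is a direct consequence of the total-interpretation reading of \eqref{eq:val.def}, \eqref{eq:att.def} and of Proposition~\ref{prop:af.model}.
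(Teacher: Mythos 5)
Your proposal is correct and follows essentially the same route as the paper: direction~\ref{item:1:thm:stable<->emodel} is exactly the paper's reduction to Proposition~\ref{prop:af.model}\ref{item:3:prop:af.model} (using totality and $\leq$-minimality of equilibrium models, plus $\SI=\A\setminus T_\sI^-$), and your minimality argument for direction~\ref{item:2:thm:stable<->emodel} --- dropping a negative literal $\sneg c$ from $\bH'$ and using a single attack $a\att c$ with $a\in\aA$ together with $\aA\cap\Defeated{\aA}=\varnothing$ to force it back --- is precisely the contradiction in the paper's proof sketch, with your closing inverse-constructions remark matching the paper's injectivity argument in the appendix (where the theorem is obtained as the singleton-attack case of Theorem~\ref{thm:sf.stable<->emodel}).
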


\begin{proofs}\footnote{This theorem is a particualr case of Theorem~\ref{thm:sf.stable<->emodel} below. Recall that full proofs are provided in the appendix. }
First, note that condition~\ref{item:1:thm:stable<->emodel}
follows directly from~\ref{item:3:prop:af.model} in Proposition~\ref{prop:af.model}
and the facts that $(a)$ equilibrium models are $\leq$-minimal models and $(b)$
$\SI$ is a stable extension iff
$\SI$ are exactly the non-defeated arguments \wrt~$\SI$.
To show~\ref{item:2:thm:stable<->emodel},
it is easy to see that~$\SI$ being a stable extension implies that $\sI$ is a 
model of~$\LAF{\AF}$.
Hence, to show that $\sI$ is an equilibrium model what remains is to prove that any $\sJ < \sI$ is not a model of $\LAF{\AF}$.
Any such $\sJ$ must satisfy 
\mbox{$H_\sJ^+ = H_\sI^+ = \A$}
and
\mbox{$H_\sJ^- \subset H_\sI^- = T_\sI^- = \Defeated{\aA}$}.
Therefore, there is some defeated argument such that \mbox{$a \notin H_\sJ^-$}
and some defeating attack \mbox{$(b,a) \in \R_a$} such that
\mbox{$b \in \aA = H_\sI^+ \setminus T_\sI^- =  H_\sJ^+ \setminus T_\sJ^-$}.
This implies that \mbox{$b \att a \in \LAF{\AF}$}
and $\sJ \models b$
which, in its turn, implies that
$a \in H_\sJ^-$.
This is a contradiction
and, consequently, $\sI$ is an equilibrium model.
\end{proofs}

Theorem~\ref{thm:af.stable<->emodel} captures the relation between the stable extensions of an~AF and its translation into a logical theory.
As mentioned above, this relation relies on the reasoning principles~\ref{p:attack} and~\ref{p:cwa}:
An~$\AFF$ is translated into a logical theory~$\LAF{\AF}$ using the notion of attack introduced in~\ref{p:attack}.
The stable extension~$\SI$ of this~AF is then retrieved from the equilibrium model~$\sI$ of $\LAF{\AF}$ using the~\ref{p:cwa} principle.

\subsection{Set Attack Argumentation Frameworks}

We may also extend the results of the previous section to SETAFs using the language $\LanSF = \set{\att, \wedge}$ 
and a similar translation.

\begin{definition}
Given some finitary set attack framework \mbox{$\SFF$}, we define 
\begin{IEEEeqnarray}{ l ,C, l}
\Gamma_{\R_a} &\eqdef& \setBm{ \bigwedge A \att b }{ (A,b) \in \R_a}
  \label{eq:def.gamma.sf}
\end{IEEEeqnarray}
and \mbox{$\LSF{\SF} \eqdef \A \cup \Gamma_{\R_a}$}.
\end{definition}

Similar to Definition~\ref{def:af.translation},
translation~$\LSF{\cdot}$ applies the notion of attack introduced in~\ref{p:attack} to translate an AF into a logical theory.
In this case the set of attacking arguments becomes a conjuntion in the antecedent of the attack connective.

\begin{theorem}{\label{thm:sf.stable<->emodel}}
Given some finitary $\SF$ there is a one-to-one correspondence between its stable extensions and the equilibrium models of $\LSF{\SF}$ such that
\begin{enumerate}[ label=\roman*)]
\item if $\sI$ is an equilibrium model of $\LSF{\SF}$, then $\SI$ is a stable extension of $\SF$,\label{item:1:thm:sf.stable<->emodel}

\item if $\aA$ is a stable extension of $\SF$ and $\sI$ is a total interpretation such that $T_\sI^+ \!=\! \A$ and  $T_\sI^- \!=\! \Defeated{\aA}$, then $\sI$ is an equilibrium model of $\LSF{\SF}$.
\label{item:2:thm:sf.stable<->emodel}
\end{enumerate}
\end{theorem}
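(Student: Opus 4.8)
The plan is to mirror the proof of Theorem~\ref{thm:af.stable<->emodel}, lifting each step from single-argument to collective attacks; the engine of the argument will be a SETAF analogue of Proposition~\ref{prop:af.model}. Concretely, I would first show that for every \HTmodel~$\sI$ of $\LSF{\SF}$ and every $a \in \A$: (a) if $a$ is defeated \wrt~$\SI$ then $\sI \modelsp \sneg a$; (b) $\SI$ is \cfree; and (c) if $\sI$ is in addition $\leq$-minimal, then $a$ is defeated \wrt~$\SI$ iff $\sI \modelsp \sneg a$. As a preliminary I would record that \mbox{cw\nobreakdash-inference} distributes over finite conjunctions, i.e.\ $\sI \models \bigwedge B$ iff $\sI \models b$ for every $b \in B$; this is a direct unfolding of~\eqref{eq:val.def} together with the clauses for $\wedge$ and $\sneg$, and it is well defined precisely because $\SF$ is finitary, so that $\bigwedge B$ is a genuine formula. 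This distribution is what reduces the collective antecedent to an atom-by-atom analysis.

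For part~(a), if $a$ is defeated \wrt~$\SI$ there is $B \subseteq \SI$ with $(B,a)\in\R_a$, hence $\bigwedge B \att a \in \LSF{\SF}$; every $b \in B$ satisfies $\sI \models b$ by definition of $\SI$, so distribution gives $\sI \models \bigwedge B$, and Proposition~\ref{prop:att.mponens} yields $\sI \modelsn a$, that is $\sI \modelsp \sneg a$. Part~(b) is then immediate: were some $a \in \SI \cap \Defeated{\SI}$, then $\sI \models a$ would force $\sI \not\modelsp \sneg a$, contradicting~(a). Neither part uses minimality, and both are routine once distribution is available.

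The substantive step is the converse in~(c), which I expect to be the main obstacle. Assuming $\sI$ is $\leq$-minimal with $\sI \modelsp \sneg a$ but $a$ \emph{not} defeated \wrt~$\SI$, I would form the strictly smaller interpretation $\sI'$ obtained from $\sI$ by deleting $\sneg a$ at the here world, i.e.\ $H_{\sI'}^- = H_\sI^- \setminus \set{a}$ with everything else unchanged, so that $\sI' < \sI$, and then contradict minimality by proving $\sI'$ is still a model of $\LSF{\SF}$. The atomic facts survive trivially, and an attack formula $\bigwedge B' \att c$ can be disturbed only when $c = a$. For such an attack, non-defeatedness of $a$ supplies some $b^\ast \in B'$ with $b^\ast \notin \SI$; since every argument occurs as a fact this forces $\sI \modelsp \sneg b^\ast$, and because the hypothesis $\sI \modelsp \sneg a$ keeps $\sneg a$ forced at the top world $t$, the consistency conjunct $\neg\sneg\bigwedge B'$ of the antecedent fails at the here world, so the implication holds vacuously. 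The delicate point is exactly that deleting $\sneg a$ only at the here world cannot \emph{activate} any antecedent that was previously inert; this is secured by the preservation property $V^-(h) \subseteq V^-(t)$ together with $a \in T_\sI^-$, which follows from $\sI \modelsp \sneg a$.

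With the analogue established, the two halves of the theorem copy the AF argument. For~\ref{item:1:thm:sf.stable<->emodel}, an equilibrium model is total and $\leq$-minimal, so part~(c) gives that $a \in \SI$ iff $a$ is not defeated \wrt~$\SI$ (using that each argument is forced positively as a fact); hence $\SI = \A \setminus \Defeated{\SI}$, which is exactly stability for support-free, all-prima-facie frameworks, where $\UnAcceptable{\cdot}=\Defeated{\cdot}$ by Definition~\ref{def-inh-ASAF}. For~\ref{item:2:thm:sf.stable<->emodel}, a stable $\aA$ makes the prescribed total $\sI$ a model, since each attack $(B,a)$ is satisfied --- an active one ($B \subseteq \aA$) has its target defeated so $\sneg a$ is forced, and an inactive one has a defeated source so its antecedent is inconsistent --- and minimality is shown as in the AF case: any $\sJ < \sI$ has $H_\sJ^- \subsetneq H_\sI^- = \Defeated{\aA}$, so some defeated $a$ lacks $\sneg a$ at its here world while a defeating attack $(B,a)$ has $B \subseteq \aA = H_\sJ^+ \setminus T_\sJ^-$; then $\sJ \models \bigwedge B$ and Proposition~\ref{prop:att.mponens} force $a \in H_\sJ^-$, a contradiction. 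The two assignments are mutually inverse because a total model $\sI$ satisfies $T_\sI^- = \Defeated{\SI}$ and $\SI = \A \setminus T_\sI^-$, yielding the claimed one-to-one correspondence.
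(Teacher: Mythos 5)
Your proposal is correct and takes essentially the same route as the paper: you reconstruct the SETAF analogue of Proposition~\ref{prop:af.model} (which is the paper's Proposition~\ref{prop:sf.model}, proved via Lemmas~\ref{lem:model->defeated}, \ref{lem:model->cfree} and~\ref{lem:sf.emodel->defeated}, your minimality step being the contrapositive of the paper's), and then mirror the proof of Theorem~\ref{thm:af.stable<->emodel} using exactly the conjunction-distribution observation ($\sI \models \bigwedge B$ iff $\sI \models b$ for all $b \in B$) that the paper's proof sketch highlights. One harmless nitpick: from $b^\ast \notin \SI$ together with the fact $b^\ast \in \A$ you only get $\sI\sep t \modelsp \sneg b^\ast$ (i.e.\ $b^\ast \in T_\sI^-$), not $\sI \modelsp \sneg b^\ast$ at both worlds, but since your vacuity argument uses only the $t$-world forcing, nothing breaks.
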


\begin{proofs}
The proof follows as in Theorem~\ref{thm:af.stable<->emodel} by noting that any interpretation~$\sI$ and set of arguments $B$ satisfy: $B \subseteq \SI$ iff 
$\sI \models b$ for all $b \in B$ iff $\sI \models \bigwedge B$.
\end{proofs}

\subsection{Argumentation Frameworks with Evidence-Based Support}

Let us now extend the language of SETAFs with the LP~implication~\eqref{eq:sup.def}, in other words,
we consider the language possessing the following set of connectives
\mbox{$\LanEF = \set{\att,\wedge,\sup}$}, so that we can translate any EBAF
as follows:

\begin{definition}
Given any finitary evidence-based framework~\mbox{$\EBAFF$}, 
we define its corresponding theory as: \mbox{$\LEF{\EBAF} \,\eqdef\, \PF \cup \Gamma_{\R_a} \cup \Gamma_{\R_s}$} with
\begin{IEEEeqnarray}{ l ,C, l}
\Gamma_{\R_s} &\eqdef& \setBm{ \bigwedge A \sup b }{ (A,b) \in \R_s}
  \label{eq:def.gamma.sup}
\end{IEEEeqnarray}
and $\Gamma_{\R_a}$ as stated in~\eqref{eq:def.gamma.sf}.
\end{definition}

Note that, in contrast with AFs and SETAFs, the theory corresponding to an EBAFs do not contain all arguments as atoms, but only those that are \mbox{prima-facie}~$\PF$.
This reflects the fact that in EBAFs not all arguments can be accepted, but only those that are prima-facie or are supported by those prima-facie.
Supports are represented using the LP implication $\sup$ and supported arguments are captured by the positive evaluation of each interpretation~$H_\sI^+$.
The following result extends Proposition~\ref{prop:af.model} to EBAFs including the relation between supported arguments and models.

\begin{proposition}{\label{prop:ef.model}}
Let \mbox{$\EBAF$} be some framework and
\mbox{$\sI$} be some \HTmodel\ of $\LEF{\EBAF}$.
Then, the following hold:
\begin{enumerate}[ label=\roman*), leftmargin=20pt]
\item if $a$ is supported \wrt~$\SI$, then $\sI \modelsp a$,
\label{item:1:prop:ef.model}

\item if $a$ is defeated \wrt~$\SI$, then $\sI \modelsp \sneg a$,
\label{item:2:prop:ef.model}

\item $\SI$ is \cfree.
\label{item:3:prop:ef.model}
\end{enumerate}
If, in addition, $\sI$ is an $\leq$-minimal \HTmodel, then
\begin{enumerate}[ label=\roman*), start=3, leftmargin=20pt]
\item $a$ is supported \wrt~$\SI$ iff $\sI \modelsp a$,
\label{item:4:prop:ef.model}

\item $a$ is defeated \wrt~$\SI$ iff $\sI \modelsp \sneg a$,
\label{item:5:prop:ef.model}

\item $\SI$ is \ssupporting.
\label{item:6:prop:ef.model}
\end{enumerate}
\end{proposition}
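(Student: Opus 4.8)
The plan is to prove Proposition~\ref{prop:ef.model} by closely following the structure of Proposition~\ref{prop:af.model}, which is the SETAF/AF analogue, and extending it to handle the new support machinery. The statement has two blocks: items i)--iii) hold for any \HTmodel, while items i')--iii') (the $\leq$-minimal versions) require minimality. I would handle the two negative-side claims (defeated $\Rightarrow \sneg a$, and conflict-freeness) essentially verbatim from Proposition~\ref{prop:af.model}, since attacks are translated identically via $\Gamma_{\R_a}$; the genuinely new work concerns the support side, items~\ref{item:1:prop:ef.model}, \ref{item:4:prop:ef.model} and~\ref{item:6:prop:ef.model}.

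First I would prove item~\ref{item:1:prop:ef.model} (supported $\Rightarrow \sI \modelsp a$) by induction on the recursive definition of ``supported \wrt~$\SI$'' given in Definition~\ref{def-inh-ASAF}. The base case is $a \in \PF$: since $\PF \subseteq \LEF{\EBAF}$ as a set of facts, any model satisfies $\sI \modelsp a$. For the inductive step, if $a$ is supported via some $(B,a) \in \R_s$ with every $b \in B$ supported \wrt~$\SI \setminus \set{a}$ (hence also \wrt~$\SI$), the induction hypothesis gives $\sI \modelsp b$ for all $b \in B$, so $\sI \modelsp \bigwedge B$. The translation contributes the formula $\bigwedge B \sup a$ in $\Gamma_{\R_s}$; I then want to conclude $\sI \modelsp a$. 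This is where I must be careful: $\sup$ is not ordinary implication, so I cannot just apply modus ponens. The right tool is Proposition~\ref{prop:cw-relation}\ref{item:impl:prop:cw-relation}, which gives $\sI \modelsp a$ provided $\sI \models \bigwedge B$ (cw-inference, not merely $\modelsp$). So the induction must actually be strengthened to carry $\sI \models b$ rather than $\sI \modelsp b$ --- meaning I need a consistent proof of each supporter. For the $\leq$-minimal direction this consistency should come for free, but for the general \HTmodel{} case it is the crux, and I suspect the honest statement requires either minimality or a careful argument that supporters, being reachable from prima-facie arguments through supports alone, do not acquire $\sneg$ conclusions. I expect this interaction between $\sup$-modus-ponens and the consistency requirement of cw-inference to be the main obstacle.

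For the $\leq$-minimal items, the strategy is the same minimal-model argument used in the proof sketch of Theorem~\ref{thm:af.stable<->emodel}: in a $\leq$-minimal model, $H_\sI^+$ and $H_\sI^-$ contain nothing superfluous, so the converse directions hold. Concretely, for~\ref{item:4:prop:ef.model} I would show that if $\sI \modelsp a$ but $a$ were not supported, one could construct a strictly smaller model $\sJ < \sI$ by removing $a$ (and anything depending only on it) from $H_\sJ^+$, contradicting minimality --- the key being that support formulas $\bigwedge A \sup b$ remain satisfied because an unsupported atom can be dropped without breaking any support rule. For~\ref{item:5:prop:ef.model} I would reuse item~\ref{item:3:prop:af.model} of Proposition~\ref{prop:af.model} essentially unchanged. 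Finally item~\ref{item:6:prop:ef.model} (that $\SI$ is self-supporting, i.e.\ $\SI \subseteq \Supported{\SI}$) follows by combining the definition $\SI = \setm{a \in \A}{\sI \models a}$ with item~\ref{item:4:prop:ef.model}: every $a \in \SI$ satisfies $\sI \models a$, hence $\sI \modelsp a$, hence by the now-established converse $a$ is supported \wrt~$\SI$.

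The overall order I would carry out: (1) transfer the defeat and conflict-free claims from Proposition~\ref{prop:af.model} with the remark that $\Gamma_{\R_a}$ is identical; (2) prove~\ref{item:1:prop:ef.model} by induction on support, invoking Proposition~\ref{prop:cw-relation}\ref{item:impl:prop:cw-relation} at the inductive step and pinning down the consistency side-condition; (3) assume $\leq$-minimality and prove the converses~\ref{item:4:prop:ef.model} and~\ref{item:5:prop:ef.model} by the removability/smaller-model argument; (4) derive self-supporting~\ref{item:6:prop:ef.model} as an immediate corollary of~\ref{item:4:prop:ef.model}. The subtle point throughout is keeping straight the distinction between $\modelsp$ (plain forcing, used in the statement) and $\models$ (cw-inference, needed to fire $\sup$), so I would state explicitly at each support step which relation I am using and why Proposition~\ref{prop:cw-relation} applies.
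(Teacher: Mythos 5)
You correctly isolate the crux of item~\ref{item:1:prop:ef.model} --- that firing a support formula $\bigwedge B \sup a$ via Proposition~\ref{prop:cw-relation}\ref{item:impl:prop:cw-relation} requires the cw-inference $\sI \models \bigwedge B$ and not merely $\sI \modelsp \bigwedge B$ --- but you leave it unresolved and hedge in the wrong direction, suspecting the general \HTmodel{} case may need minimality or an argument that supporters never acquire $\sneg$-conclusions. Neither is needed: the consistency side-condition is discharged \emph{definitionally}. By Definition~\ref{def-inh-ASAF}, ``$a$ is supported \wrt~$\SI$'' requires not just that the elements of $B$ be supported but that \mbox{$B \subseteq \SI \setminus \set{a}$}, and by Definition~\ref{def:af.translation} the set $\SI = \setm{b \in \A}{\sI \models b}$ is itself carved out by cw-inference; hence $\sI \models b$ holds for every $b \in B$ by membership alone, giving $\sI \models \bigwedge B$ and then $\sI \modelsp a$ by Lemma~\ref{lem:sup.mponens}. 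This is exactly how the paper's Lemma~\ref{lem:model->supported} closes the inductive step (``$B \subseteq S \subseteq \SI$ implies $\sI \models \bigwedge B$''); the induction on subsets there, together with the monotonicity property you also invoke, only organizes the recursion in the definition of supportedness --- it never has to carry a consistency invariant, so no strengthening of the statement and no appeal to minimality is required.

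The rest of your plan matches the paper in substance. Items~\ref{item:2:prop:ef.model} and~\ref{item:3:prop:ef.model} do follow by the attack-only argument: the paper proves Lemmas~\ref{lem:model->defeated} and~\ref{lem:model->cfree} directly for EBAFs and \emph{specializes} to obtain Proposition~\ref{prop:af.model}, the reverse of your transfer, but the content is identical. Your top-down construction for~\ref{item:4:prop:ef.model} (delete an unsupported atom and its dependents from $H_\sJ^+$) is a legitimate variant of the paper's bottom-up Lemma~\ref{lem:min.model->supported}, and your derivation of~\ref{item:6:prop:ef.model} from $\SI \subseteq H_\sI^+ = \Supported{\SI}$ is the paper's. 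One caution on~\ref{item:5:prop:ef.model}: reusing Proposition~\ref{prop:af.model} ``essentially unchanged'' glosses a check the paper makes explicitly (via Lemmas~\ref{lem:model.theory.supported} and~\ref{lem:min.model->defeated}) --- after removing $a$ from $H_\sJ^-$ one must verify that $\sJ$ still satisfies $\PF \cup \Gamma_{\R_s}$, not only $\Gamma_{\R_a}$. This does hold, because the positive forcing of facts and of $\sup$-formulas is insensitive to $H^-$ (the $\neg\!\sneg$ guard is evaluated at world $t$ by Proposition~\ref{prop:negation}\ref{item:1:prop:negation}), but in an EBAF it is a new obligation and has to be stated.
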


\begin{examplecont}{ex:tweety}\label{ex:tweety3}
Consider now framework~$\EBAF$ representing the Tweety scenario.
\refstepcounter{programcount}\label{th:tweety}
\begin{IEEEeqnarray}{lCl+x*}
\mathit{birdTweety} &\sup& \mathit{flyTweety}
  \label{eq:flyTweety}
\\
\mathit{penguinTweety} &\sup& \mathit{birdTweety}
\\
\mathit{penguinTweety} &\att& \mathit{flyTweety}
 \label{eq:flyTweety.att}
\\
&&\mathit{penguinTweety} \notag&
\end{IEEEeqnarray}
\end{examplecont}

As mentioned in Example~\ref{ef:tweety},
framework~\ef\ref{ef:tweety} has a unique stable extension
$$\set{\mathit{penguinTweety},\, \mathit{birdTweety}}$$
which does not include the argument $\mathit{flyTweety}$.
In other words, Tweety cannot fly.
Interestingly, $\LSF{\ef\ref{ef:tweety}}$ has also a unique equilibrium model
\mbox{$\sI_{\ref{th:tweety}} = \tuple{\bT_{\ref{th:tweety}},\bT_{\ref{th:tweety}}}$}
where $\bT_{\ref{th:tweety}}$ stands for the set:
\begin{gather*}
\set{ \mathit{penguinTweety},\, \mathit{birdTweety},\, \mathit{flyTweety},\, \sneg\mathit{flyTweety}}
\end{gather*}
This equilibrium model 
precisely satisfies the two arguments in that stable extension:
\mbox{$\sI_{\ref{th:tweety}} \models \mathit{penguinTweety}$}
and
\mbox{$\sI_{\ref{th:tweety}} \models \mathit{birdTweety}$}.
Note that we get \mbox{$\sI_{\ref{th:tweety}} \not\models \mathit{flyTweety}$}
from the fact that
\mbox{$\sI_{\ref{th:tweety}} \modelsp \sneg\mathit{flyTweety}$}.
In fact,
this correspondence holds for any EBAF
as shown by Theorem~\ref{thm:ef.stable<->emodel} below.
Though more technically complex, the proof of Theorem~\ref{thm:ef.stable<->emodel} is similar that those of Theorems~\ref{thm:af.stable<->emodel} and~\ref{thm:sf.stable<->emodel}.
In particular, it is necessary to prove the following relation between equilibrium models and supportable arguments:

\begin{proposition}{\label{prop:ef.model.supportable}}
Let \mbox{$\EBAF$} be some framework and
\mbox{$\sI$} be some equilibrium model of $\LEF{\EBAF}$.
Then, the following statement holds:
\begin{enumerate}[ label=\roman*), start=3, leftmargin=20pt, start=1]
\item $a$ is supportable \wrt~$\SI$ iff $\sI \modelsp a$.
\label{item:6:prop:ef.model.supportable}
\end{enumerate}
\end{proposition}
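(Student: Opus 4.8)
The plan is to prove Proposition~\ref{prop:ef.model.supportable} by establishing both directions of the biconditional, leveraging the earlier characterization results in Proposition~\ref{prop:ef.model}. Recall that $a$ is \emph{supportable} \wrt~$\SI$ means $a$ is supported \wrt~$\A \setminus \Defeated{\SI}$ (Definition~\ref{def-inh-ASAF}). First I would fix an equilibrium model~$\sI$ of $\LEF{\EBAF}$; since equilibrium models are total and $\leq$-minimal \HTmodel s, I can freely invoke items~\ref{item:4:prop:ef.model}, \ref{item:5:prop:ef.model} and~\ref{item:6:prop:ef.model} of Proposition~\ref{prop:ef.model}, which give the iff-characterizations of supported and defeated arguments together with the fact that $\SI$ is \ssupporting.

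For the direction ``$\sI \modelsp a$ implies $a$ supportable,'' I would argue as follows. By item~\ref{item:4:prop:ef.model}, $\sI \modelsp a$ gives that $a$ is supported \wrt~$\SI$, i.e.\ there is a support chain rooted in prima-facie arguments all of whose members lie in $\Supported{\SI}$. By item~\ref{item:5:prop:ef.model}, $\Defeated{\SI}$ is exactly the set of atoms forced false; and since $\SI$ is \cfree\ (item~\ref{item:3:prop:ef.model}), none of the arguments in this support chain are defeated (a supported, conflict-free witness cannot meet $\Defeated{\SI}$). Hence every argument witnessing the support of $a$ belongs to $\A \setminus \Defeated{\SI}$, so $a$ is supported \wrt~$\A \setminus \Defeated{\SI}$, which is precisely supportability. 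The slightly delicate point is to confirm that support \wrt\ the smaller set $\SI$ lifts to support \wrt\ the larger set $\A \setminus \Defeated{\SI}$, using monotonicity of the ``supported'' notion in its reference set and the inclusion $\SI \subseteq \A \setminus \Defeated{\SI}$ (which follows from conflict-freeness).

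For the converse, ``$a$ supportable implies $\sI \modelsp a$,'' I expect the real work to lie. Suppose $a$ is supported \wrt~$\A \setminus \Defeated{\SI}$. I would proceed by induction on the support chain witnessing this. The base case is $a \in \PF$: then $a$ is a fact in $\LEF{\EBAF}$, so $\sI \modelsp a$ immediately. For the inductive step, there is some $B \subseteq (\A \setminus \Defeated{\SI}) \setminus \set{a}$ with $(B,a) \in \R_s$ and every $b \in B$ supported \wrt~$\A \setminus \Defeated{\SI} \setminus \set{a}$; by the induction hypothesis each such $b$ satisfies $\sI \modelsp b$, hence $\sI \modelsp \bigwedge B$. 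The obstacle is that the support rule $\bigwedge B \sup a$ in $\Gamma_{\R_s}$ uses the connective~$\sup$, not intuitionistic~$\to$, so forcing of the antecedent alone is not enough to force the consequent; by Proposition~\ref{prop:cw-relation}\ref{item:impl:prop:cw-relation} I need the \emph{consistent} proof $\sI \models \bigwedge B$, i.e.\ that no $b \in B$ has contradictory evidence. Here is where $B \cap \Defeated{\SI} = \varnothing$ is crucial: by item~\ref{item:5:prop:ef.model}, $b \notin \Defeated{\SI}$ means $\sI \not\modelsp \sneg b$, so each $b$ is forced true but not false, giving $\sI \models b$ and therefore $\sI \models \bigwedge B$. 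Applying the modus-ponens property for~$\sup$ to the rule $\bigwedge B \sup a$ then yields $\sI \modelsp a$, closing the induction.

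The main technical obstacle I anticipate is making the induction well-founded and handling the shrinking reference set $\A \setminus \Defeated{\SI} \setminus \set{a}$ cleanly across recursive calls, since the definition of ``supported'' peels off the current argument at each level. I would phrase the induction on the (finite, by finitariness) length of the support derivation rather than on arguments directly, and carry as the inductive invariant that every argument appearing in the derivation lies in $\A \setminus \Defeated{\SI}$, so that the consistency clause $\sI \not\modelsp \sneg b$ is available at every step. With that invariant in place, both directions combine to give the stated equivalence.
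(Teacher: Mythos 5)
Your proposal is correct and follows essentially the same route as the paper: your induction on support derivations with the consistent-proof modus ponens for~$\sup$ (requiring $\sI \models \bigwedge B$ rather than mere forcing, secured by $B \cap \Defeated{\SI} = \varnothing$) is exactly the content of the paper's Lemma~\ref{lem:model->cansupported}, and your monotonicity-plus-conflict-freeness argument for the other direction is its Lemma~\ref{lem:min.model->cansupported}, with totality of the equilibrium model identifying the here and there components. The only difference is organizational: the paper factors the argument through the auxiliary set $\CanSupported{\SI}$ and proves the two lemmas for arbitrary models satisfying $\Defeated{\SI} \supseteq T^-_\sI$ (a generality it reuses in Lemma~\ref{lem:stable->model}), whereas you inline both directions directly for equilibrium models.
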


In contrast with the results for supported arguments stated in Proposition~\ref{prop:ef.model},
this property does not hold for arbitrary $\leq$-minimal models.
This fact can be illustrated by considering a simple $\newef\label{ef:equilibrium}$ such that
\mbox{$\LEF{\ef\ref{ef:equilibrium}} = \set{ a,\, a \sup b }$}.
Let
$\sI_{\ref{ef:equilibrium}} = \tuple{\bH_{\ref{ef:equilibrium}},\bT_{\ref{ef:equilibrium}}}$
be some interpretation
with
\mbox{$\bH_{\ref{ef:equilibrium}} = \set{ a }$}
and
\mbox{$\bT_{\ref{ef:equilibrium}} = \set{ a, \sneg a }$}.
It is easy to see that $\sI_{\ref{ef:equilibrium}}$ is a $\leq$-minimal model of
$\LEF{\ef\ref{ef:equilibrium}}$, though it is not an equilibrium model (because it is not a total interpretation).
It can also be checked that
$a$ is not defeated and, consequently, that $b$ is supportable \wrt~$\aA_{\sI_{\ref{ef:equilibrium}}} = \varnothing$.
On the other hand, the unique equilibrium model of
$\LEF{\ef\ref{ef:equilibrium}}$
is
$\sJ_{\ref{ef:equilibrium}} = \tuple{\bH'_{\ref{ef:equilibrium}},\bT'_{\ref{ef:equilibrium}}}$
with
\mbox{$\bH'_{\ref{ef:equilibrium}} = \set{ a, b }$}
and
\mbox{$\bT'_{\ref{ef:equilibrium}} = \set{ a, b }$}.
Here, both $a$ and $b$ are supportable (and supported) \wrt~$\aA_{\sJ_{\ref{ef:equilibrium}}} = \set{a,b}$.

The following result shows that, indeed, this correspondence holds for any EBAF:

\begin{theorem}{\label{thm:ef.stable<->emodel}}
Given some finitary $\EBAF$, there is a one-to-one correspondence between its stable extensions and the equilibrium models of $\LEF{\EBAF}$ such that
\begin{enumerate}[ label=\roman*)]
\item if $\sI$ is an equilibrium model of $\LEF{\EBAF}$, then $\SI$ is a stable extension of~$\EBAF$,\label{item:1:thm:ef.stable<->emodel}

\item if $\aA$ is a stable extension of $\EBAF$ and $\sI$ is a total interpretation such that $T_\sI^+ \!=\! \Supported{\aA}$ and  $T_\sI^- \!=\! \Defeated{\aA}$, then $\sI$ is an equilibrium model of~$\LEF{\EBAF}$.
\label{item:2:thm:ef.stable<->emodel}
\end{enumerate}
\end{theorem}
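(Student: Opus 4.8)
The plan is to follow exactly the template of the proofs of Theorems~\ref{thm:af.stable<->emodel} and~\ref{thm:sf.stable<->emodel}, now supplying the three equivalences that Propositions~\ref{prop:ef.model} and~\ref{prop:ef.model.supportable} grant for $\leq$-minimal (hence equilibrium) models: for such $\sI$ and every argument~$a$, that $a$ is supported \wrt~$\SI$ iff $\sI\modelsp a$ (item~\ref{item:4:prop:ef.model}), that $a$ is defeated \wrt~$\SI$ iff $\sI\modelsp\sneg a$ (item~\ref{item:5:prop:ef.model}), and that $a$ is supportable \wrt~$\SI$ iff $\sI\modelsp a$ (item~\ref{item:6:prop:ef.model.supportable}). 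I will use throughout that for total~$\sI$, unfolding $\models$ via Proposition~\ref{prop:negation} gives $a\in\SI$ iff $a\in T_\sI^+$ and $a\notin T_\sI^-$. For direction~\ref{item:1:thm:ef.stable<->emodel}, let $\sI$ be an equilibrium model; then $a\in\SI$ iff ($\sI\modelsp a$ and $\sI\not\modelsp\sneg a$) iff ($a$ is supportable \wrt~$\SI$ and $a$ is not defeated \wrt~$\SI$), which is precisely the negation of $a$ being unacceptable \wrt~$\SI$. Hence $\SI=\A\setminus\UnAcceptable{\SI}$, the defining condition of a stable extension, so $\SI$ is stable.

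For direction~\ref{item:2:thm:ef.stable<->emodel}, fix a stable~$\aA$ and the total~$\sI$ with $T_\sI^+=\Supported{\aA}$ and $T_\sI^-=\Defeated{\aA}$. I first check $\sI$ is a model of $\LEF{\EBAF}$. Every prima-facie argument is supported, so $\PF\subseteq\Supported{\aA}=T_\sI^+$ and the facts in $\PF$ hold. For a support formula $\bigwedge A\sup b$ with $(A,b)\in\R_s$, the guarded antecedent of~\eqref{eq:sup.def} fires at a world only when $A\subseteq\Supported{\aA}$ and $A\cap\Defeated{\aA}=\varnothing$; using that stable extensions are \cfree\ and \ssupporting\ one shows $A\subseteq\aA$, whence $b$ is supported \wrt~$\aA$ and $b\in T_\sI^+$, as needed. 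For an attack $\bigwedge A\att b$ (that is, $\bigwedge A\sup\sneg b$), the same firing condition with $(A,b)\in\R_a$ and $A\subseteq\aA$ makes $b$ defeated \wrt~$\aA$, so $b\in\Defeated{\aA}=T_\sI^-$ and $\sneg b$ is forced; where the antecedent does not fire the implication holds vacuously. Thus $\sI$ is a model.

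It remains to prove $\leq$-minimality. Assume $\sJ<\sI$ is a model; since $\leq$ fixes the ``there'' component, $\sJ=\tuple{\bH_\sJ,\bT}$ with $\bH_\sJ\subsetneq\bT$. The argument proceeds in two layers. First, by induction on the support derivations rooted in $\PF$, every argument supported \wrt~$\aA$ is forced into $H_\sJ^+$: prima-facie arguments are facts, and whenever $(B,a)\in\R_s$ with $B\subseteq\aA$ already in $H_\sJ^+$, conflict-freeness gives $B\cap T_\sI^-=\varnothing$, so the antecedent of $\bigwedge B\sup a$ fires at~$h$ and pushes $a$ into $H_\sJ^+$; hence $H_\sJ^+\supseteq\Supported{\aA}=T_\sI^+$ and therefore $H_\sJ^+=T_\sI^+$. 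Second, with the positive part pinned down, failure of $\sJ=\sI$ forces $H_\sJ^-\subsetneq\Defeated{\aA}$; picking a defeated $a\notin H_\sJ^-$ and a defeating $(B,a)\in\R_a$ with $B\subseteq\aA\subseteq H_\sJ^+$ and $B\cap T_\sI^-=\varnothing$, the attack $\bigwedge B\att a$ forces $a\in H_\sJ^-$, a contradiction. So no such $\sJ$ exists and $\sI$ is an equilibrium model. Finally, the two maps are mutually inverse: from an equilibrium~$\sI$, items~\ref{item:4:prop:ef.model} and~\ref{item:5:prop:ef.model} give $T_\sI^+=\Supported{\SI}$ and $T_\sI^-=\Defeated{\SI}$, so the construction of~\ref{item:2:thm:ef.stable<->emodel} returns~$\sI$; conversely $\SI=\Supported{\aA}\setminus\Defeated{\aA}=\aA$ for stable~$\aA$, closing the bijection.

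I expect the genuinely delicate part to be the first layer of the minimality argument together with the structural bookkeeping it rests on. One must establish, for a stable~$\aA$, that it is \cfree\ and \ssupporting\ and that $\aA=\Supported{\aA}\setminus\Defeated{\aA}$, and then reconcile the inductive notions of \emph{supported} and \emph{supportable}---which quantify over the \emph{different} reference sets $\aA$ and $\A\setminus\Defeated{\aA}$---with the model-theoretic forcing of atoms into $H_\sJ^+$. The real subtlety is that support rules fire only on a \emph{consistent} proof of the body (the $\neg\sneg$ guard in~\eqref{eq:sup.def}), so the induction must simultaneously certify that each supporting argument is non-defeated; this is exactly what lets Proposition~\ref{prop:ef.model.supportable} be matched against the inductive buildup, and it is where the EBAF case genuinely goes beyond the SETAF argument of Theorem~\ref{thm:sf.stable<->emodel}.
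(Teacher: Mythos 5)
Your proof is correct, and its overall architecture coincides with the paper's: direction~\ref{item:1:thm:ef.stable<->emodel} is essentially the paper's Lemma~\ref{lem:emodel->stable} (identify $T_\sI^+$ with the supportable arguments via Proposition~\ref{prop:ef.model.supportable} and $T_\sI^-$ with the defeated ones via Lemma~\ref{lem:min.model->defeated}, then read off $\SI=\A\setminus\UnAcceptable{\SI}$); direction~\ref{item:2:thm:ef.stable<->emodel} is the model check of Lemma~\ref{lem:model.theory} followed by a minimality argument ending in the same attack-based contradiction; and the bijectivity bookkeeping is the same. The one genuinely different step is how you pin down the positive part $H_\sJ^+$ of a hypothetical smaller model $\sJ<\sI$. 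The paper (Lemma~\ref{lem:stable->model}) squeezes $H_\sJ^+$ between $\CanSupported{\SJ}$ and $\CanSupported{\SI}$, using Lemma~\ref{lem:model->cansupported} on one side and the antimonotonicity of $\CanSupported{\cdot}$ (Lemma~A.3 of \cite{Ca2018.5}) on the other; you instead run a direct induction on support derivations to force $\Supported{\aA}\subseteq H_\sJ^+$. Your inductive step is sound: for $(B,a)\in\R_s$ with $B\subseteq\aA$ inductively contained in $H_\sJ^+$, conflict-freeness of the stable extension~$\aA$ gives $B\cap T_\sJ^-=B\cap\Defeated{\aA}=\varnothing$, so by Proposition~\ref{prop:negation} the $\neg\!\sneg$ guard of~$\sup$ fires at~$h$ and pushes $a$ into $H_\sJ^+$. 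This buys you independence from the supportable machinery at this point, and in particular you never need to verify for~$\sJ$ the hypothesis $\Defeated{\SJ}\supseteq T_\sJ^-$ that Lemma~\ref{lem:model->cansupported} requires --- a condition that is delicate in the paper's own application, since a priori $\Defeated{\SJ}\subseteq\Defeated{\SI}=T_\sJ^-$ goes the other way. What the paper's route buys instead is that the well-foundedness of the recursion behind $\Supported{\cdot}$ is discharged uniformly by the strict-subset induction of Lemma~\ref{lem:model->supported} together with Lemma~A.11 of \cite{Ca2018.5}; your appeal to ``induction on the support derivations rooted in~$\PF$'' tacitly relies on the same external fact (that each element of a supporting set has a strictly smaller support derivation), which you flag but do not fully discharge. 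That is acceptable at this level of detail, since the paper's proofs lean on exactly that lemma for the same purpose.
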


\section{Translation of $\LanLP$-program to regular programs}

In this section, we show how \mbox{$\LanLP$-programs} can be translated into regular ASP programs.
An important practical consequence of this fact is that current state-of-the-art ASP solvers~\cite{fapfledeie08a,gekasc09c}
can be applied to \mbox{$\LanLP$-programs}.
Let us introduce such a translation as follows:

\begin{definition}\label{def:tr1}
Given a $\LanLP$-program $P$, by $\LNp{P}$ we denote the result of 
\begin{enumerate}[ leftmargin=20pt ]
\item replacing every positive literal $a$ in the body of a rule by \mbox{$a \wedge \neg\!\sneg a$},

\item replacing every negative literal $\Not a$ in the body of a rule by \mbox{$\neg a \vee (a \!\wedge\! \sneg a)$},

\item replacing all occurrences of $\sup$ by $\to$.
\end{enumerate}
\end{definition}

\begin{proposition}\label{prop:to.N4}
Any $\LanLP$-program~$P$ and interpretation~$\sI$ satisfy:
$\sI \modelsp P$ iff $\sI \modelsp \LNp{P}$.
\end{proposition}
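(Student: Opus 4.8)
The plan is to prove the equivalence rule by rule, since $\sI \modelsp \Gamma$ means $\sI \modelsp \varphi$ for every $\varphi \in \Gamma$ and $\LNp{\cdot}$ transforms each rule of $P$ independently. So I would fix a rule $r = (H \supl B)$ with body $B = l_1 \wedge \dots \wedge l_p \wedge \Not m_1 \wedge \dots \wedge \Not m_q$, where the $l_i$ and $m_j$ are explicit literals. Unfolding the definition~\eqref{eq:sup.def} of $\sup$, the rule $r$ is the formula $(\neg\sneg B \wedge B) \to H$, whereas its translation is $\LNp{r} = \LNp{B} \to H$, where $\LNp{B}$ is $B$ with each positive literal $l_i$ replaced by $l_i \wedge \neg\sneg l_i$ and each $\Not m_j$ replaced by $\neg m_j \vee (m_j \wedge \sneg m_j)$. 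Because the positive forcing of an implication $\varphi \to H$ at a world depends only on the worlds at which $\varphi$ and $H$ are positively forced, and the consequent $H$ is common to both formulas, it suffices to show that $\sI, w \modelsp \neg\sneg B \wedge B$ iff $\sI, w \modelsp \LNp{B}$, for every interpretation $\sI$ and world $w$. Facts (rules with empty body) and the head are left untouched by $\LNp{\cdot}$ and need no argument.

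The positive-literal part is driven by a small lemma: for any formulas $\varphi_1, \dots, \varphi_n$, one has $\sI, w \modelsp \neg\sneg(\varphi_1 \wedge \dots \wedge \varphi_n)$ iff $\sI, w' \not\modelsn \varphi_i$ for every $w' \geq w$ and every $i$. This follows by unfolding $\neg\sneg\varphi = (\sneg\varphi \to \bot)$: it is forced at $w$ iff no $w' \geq w$ forces $\sneg(\bigwedge_i \varphi_i)$, i.e.\ iff no $w' \geq w$ satisfies $\sI, w' \modelsn (\bigwedge_i \varphi_i)$, and $\modelsn$ turns the conjunction into a disjunction of the $\modelsn \varphi_i$. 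Applying this to $B$, the conjunct $\neg\sneg B$ on the left demands that none of the conjuncts $l_i$ or $\Not m_j$ be $\modelsn$-forced at any $w' \geq w$. On the right, the factors $\neg\sneg l_i$ demand exactly this for the $l_i$; and, by the very definition of the forcing of $\Not$, each factor $\neg m_j \vee (m_j \wedge \sneg m_j)$ is positively forced iff $\Not m_j$ is. Hence both sides agree on the conjuncts $l_i$ and on the $\Not m_j$, and the only possible discrepancy is the additional clause inside $\neg\sneg B$ requiring that no $\Not m_j$ be $\modelsn$-forced at any $w' \geq w$.

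The hard part — and the step I expect to be the main obstacle — is to show that this additional clause is free once $B$ is positively forced. I would prove the persistent self-consistency of default negation: $\sI, w \modelsp \Not m$ implies $\sI, w' \not\modelsn \Not m$ for every $w' \geq w$. The argument is a case analysis on $\sI, w \modelsp \neg m \vee (m \wedge \sneg m)$. If $\sI, w \modelsp \neg m$, then $\sI, w' \not\modelsp m$ for all $w' \geq w$, which blocks $\sI, w' \modelsn \Not m$ since the latter needs $\sI, w' \modelsp m$. If $\sI, w \modelsp m \wedge \sneg m$, then by the monotonicity of $V^+$ and $V^-$ (the preservation properties of interpretations; cf.\ Proposition~\ref{prop:preservation}) we get $\sI, w' \modelsp m$ and $\sI, w' \modelsn m$ for all $w' \geq w$, which again blocks $\sI, w' \modelsn \Not m$ since the latter needs $\sI, w' \not\modelsn m$. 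This is precisely the self-consistency of $\Not$ observed after Proposition~\ref{prop:cw-relation}, now in its upward-persistent form.

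With this lemma, whenever $\sI, w \modelsp B$ holds we have $\sI, w \modelsp \Not m_j$ for each $j$, so the extra clause of $\neg\sneg B$ is automatically satisfied; thus $\sI, w \modelsp \neg\sneg B \wedge B$ and $\sI, w \modelsp \LNp{B}$ hold at exactly the same worlds. The two antecedents therefore coincide in positive forcing, the per-rule equivalence $\sI \modelsp r$ iff $\sI \modelsp \LNp{r}$ follows, and quantifying over all rules of $P$ yields $\sI \modelsp P$ iff $\sI \modelsp \LNp{P}$.
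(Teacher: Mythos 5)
Your proof is correct and follows essentially the same route as the paper's: a rule-by-rule unfolding of $\sup$, matching $\LNp{B}$ against $B \wedge \neg\!\sneg B$ conjunct by conjunct, with the crux being that the consistency clause on default-negated literals comes for free --- your persistence lemma ($\sI,w \modelsp \Not m$ implies $\sI,w' \not\modelsn \Not m$ for all $w' \geq w$) is exactly the content of the paper's Lemmas~\ref{lem:default.negation.aux1}--\ref{lem:default.negation.aux3} and~\ref{lem:default.negation}, there packaged through the cw-inference relation $\models$. The only cosmetic difference is that you argue over arbitrary worlds $w' \geq w$ of an \Ninterpretation\ where the paper's chain of equivalences specializes to HT-frames and the world $t$.
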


Proposition~\ref{prop:to.N4} shows how we can translate any $\LanLP$-program into an equivalent theory  that does not use the new connectives~$\Not$ and~$\sup$.
The result of the translation in Definition~\ref{def:tr1} is almost a standard logic program, but for two points.
First, strong negation has to be understood in a paraconsistent way, so an atom can be true and false at the same time.
This can be addressed by using new auxiliary atoms to represent strongly negated atoms.\footnote{In fact, modern solvers already allow the use of explicit negation and their implementation is done by using new auxiliary atoms to represent strongly negated atoms.
However, solvers also include a constraint of the form $a \wedge \sneg a \to \bot$ for every atom~$a$.
This would remove the non-consistent answer sets, something we have to avoid to obtain paraconsistent answer sets.
}
Second, step~2 introduces a disjunction in the body, which is not allowed in the standard syntax of logic programs.
This can be addressed in polynomial-time also by using auxiliary atoms (similar to~\citeNP{tseitin68a}).
The following definition addresses these two issues.

\begin{definition}\label{def:tr2}
Given a $\LanLP$-program $P$, by $\LNpp{P}$ we denote the result of applying the following transformations to $\LNp{P}$:
\begin{enumerate}[ leftmargin=20pt ]
\item replacing every explicit literal of the form $\sneg a$ by a fresh atom~$\tilde{a}$,

\item adding rules $a' \leftarrow \neg a$ and $a' \leftarrow a \wedge \tilde{a}$ for each atom $a \in \at$ with $a'$ a new fresh atom, and

\item replacing each occurrence of~$\neg a \vee (a \wedge \tilde{a})$ in the body of any rule by~$a'$.
\end{enumerate}
Given a total interpretation $\sI$, we also denote by $\LNpp{\sI}$ an interpretation that, for every atom~$a \in \at$, satisfies:
\begin{enumerate}[ leftmargin=20pt ]
\item $\LNpp{\sI} \not\modelsn a$ 
\item $\LNpp{\sI} \modelsp a$ iff $\sI \modelsp a$
\item $\LNpp{\sI} \modelsp \tilde{a}$ iff $\sI \modelsn a$
\item $\LNpp{\sI} \modelsp a'$ iff either $\sI \not\modelsp a$ or both $\sI \modelsp a$ and $\sI \modelsn a$.
\end{enumerate}
\end{definition}

\begin{proposition}\label{prop:tr.lp}
Any $\LanLP$-program~$P$ and total interpretation~$\sI$ satisfy that
$\sI$ is an equilibrium model of $P$ iff $\LNpp{I}$ an equilibrium model of $\LNpp{P}$.
\end{proposition}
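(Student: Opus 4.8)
The plan is to establish the equivalence in three stages, stripping off one layer of the translation at a time. First I would invoke Proposition~\ref{prop:to.N4}: since it gives $\sI \modelsp P$ iff $\sI \modelsp \LNp{P}$ for \emph{every} interpretation, the programs $P$ and $\LNp{P}$ have exactly the same models, hence the same total minimal models, i.e.\ the same equilibrium models. It therefore suffices to show that $\sI$ is an equilibrium model of $\LNp{P}$ iff $\LNpp{\sI}$ is an equilibrium model of $\LNpp{P}$. I would then regard the passage from $\LNp{P}$ to $\LNpp{P}$ (Definition~\ref{def:tr2}) as the composition of two independent moves: the \emph{renaming} of each $\sneg a$ by the fresh atom $\tilde{a}$ (step~1), and the \emph{introduction of the auxiliary atoms} $a'$, defined by the two added rules and substituted for $\neg a \vee (a \wedge \tilde{a})$ in every body (steps~2 and~3).

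The renaming is routine: mapping an interpretation to the one that forces $\tilde{a}$ exactly where $a$ was strongly negated (and carries $V^-$ to $\varnothing$) is, by inspection of the forcing clauses for $\sneg$, a satisfaction-preserving bijection onto the strong-negation-free interpretations, and it plainly preserves the order $\leq$, so it matches equilibrium models one-to-one. The crux of the second move is the observation that, because $\sI$ is \emph{total}, the value assigned to $a'$ by conditions~2--4 of Definition~\ref{def:tr2} is \emph{exactly} the truth value of $\neg a \vee (a \wedge \tilde{a})$ in the renamed image of $\sI$: here $\neg a$ holds iff $\sI \not\modelsp a$ and $a \wedge \tilde{a}$ holds iff both $\sI \modelsp a$ and $\sI \modelsn a$, which is precisely condition~4. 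Consequently, in $\LNpp{\sI}$ every body occurrence of $a'$ has the same value as the disjunction it replaced and the defining rules $a' \larrow \neg a$ and $a' \larrow a \wedge \tilde{a}$ hold trivially, which yields that $\LNpp{\sI}$ is a model of $\LNpp{P}$ iff $\sI$ is a model of $\LNp{P}$.

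What remains --- and what I expect to be the main obstacle --- is transferring minimality across the second move. Note first that any competitor shares the total component $\bT$ of $\LNpp{\sI}$, which is consistent and free of strongly negated atoms, so every such interpretation again has empty $V^-$ and un-renaming is unambiguous. For the forward direction, given a model $\sK < \LNpp{\sI}$ of $\LNpp{P}$, I would forget the atoms $a'$, restrict to $\at \cup \{\tilde{a}\}$ and un-rename to obtain $\sJ$ with the top component of $\sI$. The delicate point is to exclude the case where $\sK$ and $\LNpp{\sI}$ differ \emph{only} on the auxiliary atoms, in which case $\sJ$ would equal $\sI$; here the defining rules are decisive, since $\neg a$ at the lower world depends only on the fixed top while $a \wedge \tilde{a}$ depends only on the $\at \cup \{\tilde{a}\}$ part of the lower world, so whenever the disjunction held in $\LNpp{\sI}$ it still holds in $\sK$ and \emph{forces} $a'$ back in; thus any strict decrease must already appear among $\at \cup \{\tilde{a}\}$, making $\sJ < \sI$ genuine. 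That $\sJ \modelsp \LNp{P}$ then follows because the defining rules guarantee that the disjunction entails $a'$ at every world of $\sK$, so weakening each body from $a'$ back to the disjunction only makes the rule easier to satisfy. The backward direction is symmetric: from $\sJ < \sI$ modelling $\LNp{P}$ I would extend the renamed image by setting each $a'$ to the value of $\neg a \vee (a \wedge \tilde{a})$ at every world --- legitimate as an HT-interpretation by persistence (Proposition~\ref{prop:preservation}) --- and then verify the defining and main rules exactly as before, obtaining $\sK < \LNpp{\sI}$ modelling $\LNpp{P}$ and contradicting minimality of $\LNpp{\sI}$. The recurring subtlety throughout is that in non-total competitors the value of $a'$ is pinned down not by condition~4 but only up to the forcing imposed by the two defining rules, so all the bookkeeping has to be phrased in terms of what those rules entail at the fixed top and the lower world.
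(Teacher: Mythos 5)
Your proposal is correct, and its overall skeleton coincides with the paper's: first use Proposition~\ref{prop:to.N4} (in the paper, via Corollary~\ref{cor:translation}) to replace $P$ by $\LNp{P}$, then split Definition~\ref{def:tr2} into the renaming of $\sneg a$ by $\tilde{a}$ followed by the introduction of the defined atoms $a'$. The difference is in how the second move is justified. The paper first recombines the two added rules into the single definition $a' \leftarrow \neg a \vee (a \wedge \tilde{a})$ and then simply \emph{cites} Lemma~\ref{lem:atom.def}, an atom-definability result adapted from Theorem~1 of Cabalar et al.\ (2017), which transfers equilibrium models across the substitution $\varphi \mapsto a'$ once the theory is strong-negation-free (this is why the renaming must come first, an ordering you also respect). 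You instead reprove this transfer from scratch by an explicit countermodel correspondence, handling all atoms $a'$ simultaneously rather than one definition at a time. Your direct argument is sound: the observation that competitors share the top $\bT$ and hence have empty $V^-$, the exclusion of competitors differing only on the $a'$ atoms (because $\neg a$ at the lower world depends only on the shared top while $a \wedge \tilde{a}$ persists, so the defining rules re-force $a'$), and the one-sided use of the entailment from the disjunction to $a'$ when weakening bodies are exactly the points that make the definability lemma work. What each route buys: the paper's citation is shorter and modular, but Lemma~\ref{lem:atom.def} is stated in the appendix without proof, so your inlined argument actually supplies the detail the paper leaves implicit; conversely, the imported lemma isolates a reusable principle (and makes clear that the hypothesis of no strong negation is what forces the renaming to precede the definability step), whereas your version is specific to the bodies $\neg a \vee (a \wedge \tilde{a})$ produced by Definition~\ref{def:tr1}.
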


The result of Definition~\ref{def:tr2} is a standard logic program.
Proposition~\ref{prop:tr.lp} shows that we can use this translation in combination with standard ASP solvers to obtain equilibrium for $\LanLP$-program and stable extensions of all the AFs considered in this paper.
The second consequence of this translation is that
deciding whether there exists any stable extension of some \mbox{$\LanLP$-program} is in~\mbox{$\SigmaP{2}$} in general and in~\mbox{NP} if the program is normal~\cite{daeigovo01a}.
This complexity results are tight because hardness follows from Corollary~\ref{cor:conservative.asp} and the hardness results for finding answer sets for these classes of programs~\cite{daeigovo01a}.
Therefore, deciding whether there exists any stable extension of some \mbox{$\LanLP$-program} is \mbox{$\SigmaP{2}$-complete} in general and \mbox{NP-complete} for normal \mbox{$\LanLP$-programs}.
Furthermore, this result directly applies to EBAFs so that deciding whether there exists any stable extension is \mbox{NP-complete}.

\section{Discussion}
\label{sec:dis}

LP and AFs are two \mbox{well-established} KRR formalisms for dealing with nonmonotonic reasoning (NMR).
In particular, Answer Set Programming (ASP)
is an LP paradigm, based on the stable model semantics,
which has raised as a preeminent tool for practical NMR with applications in diverse areas of AI including planning, reasoning about actions, diagnosis, abduction and beyond~\cite{baral2003knowledge,BrewkaET11}.
On the other hand, one of the major reasons for the success of AFs is their ability to handle conflicts due to inconsistent information.

Here, we have shown that both formalisms can be successfully accommodated in Nelson's constructive logic.
In fact, it is easy to see that by rewriting attacks using definition~\eqref{eq:att.def}, the translation of any AF becomes a normal $\LanLP$-program.
For instance, by rewriting the attack~\eqref{eq:flyTweety.att}, we obtain the equivalent formula:
\begin{IEEEeqnarray}{lCl+x*}
 \mathit{penguinTweety} &\sup& \sneg\mathit{flyTweety}
 \label{eq:flyTweety.att.rule}
\end{IEEEeqnarray}
which is a \mbox{$\LanLP$-rule}.
In this sense, we can consider
$\LSF{\ef\ref{ef:tweety}}$ in Example~\ref{ex:tweety3} as an alternative representation of the Tweety scenario in LP.
Note that both 
the unique equilibrium model $\sI_{\ref{prg:tweety}}$ of program~$\program\ref{prg:tweety}$ (Example~\ref{ex:tweety2})
and the unique equilibrium model $\sI_{\ref{th:tweety}}$ of this program satisfy:
\begin{IEEEeqnarray*}{l C l}
\begin{IEEEeqnarraybox}{l C l}
\sI_{\ref{prg:tweety}} \not\models \mathit{flyTweety}
\\
\sI_{\ref{prg:tweety}} \models \Not\mathit{flyTweety}
\end{IEEEeqnarraybox}
\hspace{1.25cm}
\begin{IEEEeqnarraybox}{l C l}
\sI_{\ref{th:tweety}} \not\models \mathit{flyTweety}
\\
\sI_{\ref{th:tweety}} \models \Not\mathit{flyTweety}
\end{IEEEeqnarraybox}
\end{IEEEeqnarray*}
In other words, in both programs we conclude that Tweety cannot fly.
However, there are a couple of differences between these two representations.
First, in contrast with $\sI_{\ref{prg:tweety}}$,
we have that $\sI_{\ref{th:tweety}}$ is not consistent:
\mbox{$\sI_{\ref{th:tweety}} \modelsp \mathit{flyTweety}$}
and
\mbox{$\sI_{\ref{th:tweety}} \modelsp \sneg\mathit{flyTweety}$}.
Second and perhaps more interestingly,
in $\LSF{\ef\ref{ef:tweety}}$, the ``normality'' of the statement ``birds can fly'' does not need to be explicitly represented.
Instead, this normality is implicitly handled by the strong closed word assumption~\ref{p:cwa}, which resolves the contradictory evidence for $\mathit{flyTweety}$ by regarding it as false.
In this sense, \mbox{$\LanLP$-programs} and AFs can be seen as two different syntaxes of the same formalism based 
on the principles~\ref{p:contradictory} and~\mbox{\ref{p:cwa}} highlighted in the introduction.
In addition, another principle of this formalism is the fact that evidence must be founded or justified: this clearly shows up in normal LP and EBAFs where true literals can be computed by some recursive procedure, but also in Dung's AFs where, as we have seen, defeat can be understood as a proof of falsity.

Regarding practical aspects,
we can use \mbox{$\LanLP$-programs} as a unifying formalism to deal with both logic programs and AFs.
This directly allows to introduce variables in AFs through the use of grounding.
Going further, full first\nobreakdash-order characterizations of AFs can be provided by applying the same principles to first\nobreakdash-order constructive logic (full first\nobreakdash-order characterization of consistent logic programs has been already provided by~\citeNP{PV04}).
Besides, constructive logic immediately provides an interpretation for other richer syntaxes like the use of disjunctive targets in Collective Argumentation~\cite{bochman2003collective} or the use of arbitrary propositional formulas to represent attacks in
Abstract Dialectical Frameworks~\cite{BrewkaW10,BrewkaSEWW13}.

\section{Conclusion and future work}
We have formalized the principles~\ref{p:contradictory} and~\ref{p:cwa} in Nelson's constructive logic and shown that this is a conservative extension of logic programs which allow us to reason with contradictory evidence.
Furthermore, this allows us to translate argumentation frameworks in a modular way and using the object language such that attacks and supports become connectives in logic using the object level.
As a consequence, we can combine both formalisms in an unifying one and use proof methods from the logic or answer set solver to reason about it.

Regarding future work, an obvious open topic is to explore how other argumentation semantics can be translated
into the logic.
For instance, the relation between the complete semantics for AFs, three\nobreakdash-valued stable models semantics for LP~\cite{przymusinski91a,wucaga09a} and partial equilibrium logic~\cite{caodpeva07a} suggest that our framework can be extended to cover other semantics such as the complete and preferred.
Similarly, the relation between the paracoherent semantics for AFs~\cite{americ19a} and semi\nobreakdash-equilibrium models \cite{ameifilemo16a} suggest a possible direction to capture this semantics using the object level.
It will be also interesting to see the relation with the semi\nobreakdash-stable semantics for AFs~\cite{cacadu21a}.
The relation with other AFs extensions such as Collective Argumentation~\cite{bochman2003collective}, Abstract Dialectical Frameworks~\cite{BrewkaW10,BrewkaSEWW13} or Recursive Argumentation Frameworks~\cite{BGW05-sh,Modgil09,Gabbay2009,BaroniCGG11,CCLS16b-sh,cafafala21a} is also a direction worth exploring.
Another important open questions are studying how the principles~\ref{p:contradictory} and~\ref{p:cwa} stand in the context of paraconsistent logics~\cite{dacosta74} and paraconsistent logic programming~\cite{blasub89a};
and studying the notion of strong equivalence~\cite{LifschitzPV01,oikarinen2011characterizing} in this logic and evidence-based frameworks.

\paragraph{Acknowledgements.}
We are thankful to Seiki Akama, Pedro Cabalar, Marcelo Coniglio, David Pearce, Newton Peron and Agust\'{i}n Valverde for their suggestions and comments on earlier versions of this work.
We also thank the anonymous reviewers of the Sixteenth International Conference on Principles of Knowledge Representation and Reasoning for their comments on a preliminary version of this work.

\paragraph{Competing interests:}
The authors declare none.

\bibliographystyle{acmtrans}

\appendix
\section*{Proofs of results}

\begin{proofof}{Proposition~\ref{prop:preservation}}
    First note that, if $\varphi$ is an atom, the result follows directly from the preservation of the valuations.
    Furthermore,
    the cases of $I \sep w \modelspm \varphi_1 \to \varphi_2$ with $\pm \in \set{+,-}$ follow directly by the definition.
    Otherwise, we assume as induction hypothesis that the statement holds for all subformulas of $\varphi$.
    Then, the cases of $I \sep w \modelspm \varphi_1 \otimes \varphi_2$
    with $\pm \in \set{+,-}$ and $\otimes \in \set{\wedge,\vee}$ follow directly by induction.
    The same holds for the case
    $I \sep w \modelspm \sneg\varphi$
    with  $\pm \in \set{+,-}$.
\end{proofof}

\begin{proofof}{Proposition~\ref{prop:negation}}
    For i) note that
    $I\sep w \modelsp \neg\varphi$ holds iff 
    \begin{IEEEeqnarray*}{l ?C? l }
    I\sep w \modelsp \varphi \to \bot
        &\text{ iff }& \forall w'\geq w \ I\sep w' \not\modelsp \varphi \text{ or } I\sep w' \modelsp \bot
    \\
      &\text{ iff }& \forall w'\geq w \ I\sep w' \not\modelsp \varphi
    \end{IEEEeqnarray*}
    In case that $w = t$, it follows that $w' = t$ and the result is trivial.
    Otherwise, $w=h$ and we have
    \begin{IEEEeqnarray*}{l ?C? l }
    I\sep w \modelsp \neg \varphi
      &\text{ iff }& \forall w'\geq w \ I\sep w' \not\modelsp \varphi
    \\
     &\text{ iff }& \text{both }  I\sep h \not\modelsp \varphi
     \text{ and } I\sep t \not\modelsp \varphi
    \end{IEEEeqnarray*}
    Furthermore, from Proposition~\ref{prop:preservation},
    it follows that
    $I\sep t \not\modelsp \varphi$
    implies
    $I\sep h \not\modelsp \varphi$.
    Hence, we get that
    $I\sep w \modelsp \neg \varphi$ holds
    iff
    $I\sep t \not\modelsp \varphi$.
    \\[10pt]
    For~\ref{item:5:prop:negation}, we have
    $I\sep w \modelsp \neg\neg\varphi$
    iff
    $I\sep w \modelsp \neg\varphi \to \bot$
    \\iff
    $\forall w'\geq w \ I\sep w' \not\modelsp \neg\varphi$
    \\iff
    $\forall w'\geq w \ I\sep t\modelsp \varphi$
    \\iff
    $I\sep t\modelsp \varphi$.
    \\[10pt]
    For~\ref{item:7:prop:negation},
    we have that
    $I\sep w \modelsp \neg\neg\neg\varphi$
    \\iff
    $I\sep w \modelsp \neg\neg\varphi \to \bot$
    \\iff
    $\forall w'\geq w \ I\sep w' \not\modelsp \neg\neg\varphi$
    \\iff
    $\forall w'\geq w \ I\sep t \not\modelsp \varphi$
    \\iff
    $I\sep t \not\modelsp \varphi$
    iff
    $I\sep w \modelsp \neg\varphi$.
    \\[10pt]
    For~\ref{item:2:prop:negation},
    we have
    $I\sep w \modelsn \neg\varphi$
    \\iff
    $I\sep w \modelsn \varphi \to \bot$
    \\iff
    $I\sep w \modelsp \varphi$ and $I\sep w \modelsn \bot$
    \\iff
    $I\sep w \modelsp \varphi$
    \\iff
    $I\sep w \modelsn \sneg\varphi$.
\end{proofof}

\begin{proofof}{Proposition~\ref{prop:imp.consitent}}
Note that
\mbox{$\sI\sep w \modelsp \varphi_1 \sup \varphi_2$}
holds iff either
\mbox{$\sI\sep w' \not\modelsp \varphi_1$}
or
\mbox{$\sI\sep w' \not\modelsp \neg\!\sneg\varphi_1$}
or
\mbox{$\sI\sep w' \modelsp \varphi_2$}
for all
\mbox{$w' \geq w$}.
In addition, it can be proved by induction that for every formula $\varphi_1$, consistent \Ninterpretation~$\sI$ and world~\mbox{$w' \in W$},
we have that
\mbox{$\sI\sep w' \modelsp \varphi_1$} implies \mbox{$\sI\sep w' \modelsp \neg\!\sneg\varphi_1$}.
As a result we can simplify the above equivalence as
$\sI\sep w \modelsp \varphi_1 \sup \varphi_2$ 
iff either
$\sI\sep w' \not\modelsp \varphi_1$
or $\sI\sep w' \modelsp \varphi_2$ for all $w' \geq w$.
By definition, this is equivalent to
$\sI\sep w \modelsp \varphi_1 \to \varphi_2$.
\end{proofof}

\subsubsection*{Auxiliary results for Proposition~\ref{prop:consitent}}

\begin{lemma}{\label{lem:default.negation.consitent}}
Given any \Ninterpretation~$\sI$ and any extended formula $\varphi$, 
then either
\mbox{$\sI\sep w \not\modelsp \Not\varphi$} or \mbox{$\sI\sep w \not\modelsn \Not\varphi$} holds.
\end{lemma}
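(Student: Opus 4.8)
The plan is to argue by contradiction: I suppose that for some world~$w$ both \mbox{$\sI\sep w \modelsp \Not\varphi$} and \mbox{$\sI\sep w \modelsn \Not\varphi$} hold, and derive an impossibility; this is exactly the negation of the disjunctive conclusion. Unfolding the two forcing clauses that define default negation, the assumption \mbox{$\sI\sep w \modelsn \Not\varphi$} yields the two facts \mbox{$\sI\sep w \modelsp \varphi$} and \mbox{$\sI\sep w \not\modelsn \varphi$}, whereas \mbox{$\sI\sep w \modelsp \Not\varphi$} tells us that \mbox{$\sI\sep w \modelsp \neg\varphi \vee (\varphi \wedge \sneg\varphi)$}. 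The proof then splits on which disjunct is forced.

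For the first disjunct I would show that \mbox{$\sI\sep w \modelsp \neg\varphi$} contradicts \mbox{$\sI\sep w \modelsp \varphi$}. Since \mbox{$\neg\varphi \eqdef \varphi \to \bot$}, the forcing clause for implication says that for every \mbox{$w' \geq w$} we have \mbox{$\sI\sep w' \not\modelsp \varphi$} or \mbox{$\sI\sep w' \modelsp \bot$}; instantiating at \mbox{$w' = w$} (using reflexivity of~$\leq$) and recalling that \mbox{$\sI\sep w \not\modelsp \bot$} always holds, I obtain \mbox{$\sI\sep w \not\modelsp \varphi$}, which directly contradicts \mbox{$\sI\sep w \modelsp \varphi$}.

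For the second disjunct, \mbox{$\sI\sep w \modelsp \varphi \wedge \sneg\varphi$} yields \mbox{$\sI\sep w \modelsp \sneg\varphi$}, which by the clause for strong negation is equivalent to \mbox{$\sI\sep w \modelsn \varphi$}; this contradicts \mbox{$\sI\sep w \not\modelsn \varphi$}. As both disjuncts are thus impossible, no world can force $\Not\varphi$ both positively and negatively, which is the claim.

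The argument is a direct case analysis on the forcing clauses, so I expect no genuinely hard step. The only point requiring care is that the statement is asserted for arbitrary \Ninterpretation s rather than \HTinterpretation s, so I must avoid invoking Proposition~\ref{prop:negation} (which is HT\nobreakdash-specific) and instead read off \mbox{$\sI\sep w \not\modelsp \varphi$} from the implication clause for~$\neg$ via reflexivity of~$\leq$.
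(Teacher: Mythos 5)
Your proof is correct and follows essentially the same route as the paper's: assume both forcings hold, unfold the two clauses for $\Not$, and derive a contradiction with each disjunct of \mbox{$\neg\varphi \vee (\varphi \wedge \sneg\varphi)$}. Your extra care in reading \mbox{$\sI\sep w \not\modelsp \varphi$} off the implication clause via reflexivity of~$\leq$ (rather than the HT-specific Proposition~\ref{prop:negation}) just makes explicit a step the paper leaves implicit, and is exactly the right move for arbitrary \Ninterpretation s.
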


\begin{proof}
Suppose, for the sake of contradiction, that
\mbox{$\sI\sep w \modelsp \Not\varphi$} and \mbox{$\sI\sep w \modelsn \Not\varphi$} hold.
Then, by definition, we have that the following two condition hold:
\begin{itemize}
\item $\sI\sep w \modelsp \neg \varphi \text{ or } \sI\sep w \modelsp \varphi \wedge \sneg\varphi$

\item $\sI\sep w \modelsp \varphi \text{ and } \sI\sep w \not\modelsn \varphi$
\end{itemize}
Then, latter implies that
$\sI\sep w \modelsp \neg \varphi$ and $\sI\sep w \modelsp \sneg\varphi$ do not hold,
which is a contradiction with the former.
Hence, either
\mbox{$\sI\sep w \not\modelsp \Not\varphi$} or \mbox{$\sI\sep w \not\modelsn \Not\varphi$} must hold.
\end{proof}

\begin{lemma}\label{lem:models.consitent}
Given any consistent \Ninterpretation~$\sI$ and any extended formula~$\varphi$, 
then we have:
\mbox{$\sI\sep w \models \varphi$} iff
\mbox{$\sI\sep w \modelsp \varphi$}.
\end{lemma}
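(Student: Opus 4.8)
The plan is to unfold the definition of cw\nobreakdash-inference and reduce the whole statement to a single structural-induction lemma. By~\eqref{eq:val.def}, $\sI\sep w \models \varphi$ holds iff $\sI\sep w \modelsp \neg\!\sneg\varphi$ and $\sI\sep w \modelsp \varphi$. The left-to-right direction is then immediate, since $\sI\sep w \modelsp \neg\!\sneg\varphi \wedge \varphi$ trivially entails $\sI\sep w \modelsp \varphi$. All the content of the lemma therefore lies in the converse: for a consistent interpretation, $\sI\sep w \modelsp \varphi$ must already force $\sI\sep w \modelsp \neg\!\sneg\varphi$.

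First I would characterize the antecedent $\neg\!\sneg\varphi$ semantically. Unfolding $\neg\psi \eqdef \psi \to \bot$ together with the forcing clause for strong negation, one obtains that $\sI\sep w \modelsp \neg\!\sneg\varphi$ holds iff $\sI\sep w' \not\modelsn\varphi$ for every $w' \geq w$. Hence, using persistence of $\modelsp$ (the analogue of Proposition~\ref{prop:preservation} for arbitrary \Ninterpretations, which follows from preservation properties i) and ii) by the same induction), it suffices to establish the pointwise \emph{mutual-exclusivity} property that, for a consistent $\sI$, no world forces a formula both positively and negatively: $\sI\sep w' \modelsp \varphi$ implies $\sI\sep w' \not\modelsn\varphi$. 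Indeed, given $\sI\sep w \modelsp \varphi$, persistence yields $\sI\sep w' \modelsp \varphi$ for all $w' \geq w$, and mutual exclusivity then gives $\sI\sep w' \not\modelsn\varphi$ throughout, which is exactly $\sI\sep w \modelsp \neg\!\sneg\varphi$.

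The main work, and the step I expect to be the real obstacle, is proving mutual exclusivity by induction on the structure of the extended formula $\varphi$. The base case $\varphi = a$ is exactly the consistency condition iii), $V^+(w') \cap V^-(w') = \varnothing$, while $\varphi = \bot$ is vacuous. The connective cases $\sneg$, $\wedge$, $\vee$ and $\to$ follow by inspecting the two forcing clauses and invoking the induction hypothesis; for instance, in the $\to$ case, assuming both $\sI\sep w' \modelsp \psi_1 \to \psi_2$ and $\sI\sep w' \modelsn \psi_1 \to \psi_2$ forces $\sI\sep w' \modelsp \psi_1$ and $\sI\sep w' \modelsn\psi_2$, and instantiating the universal quantifier at $w''=w'$ then forces $\sI\sep w' \modelsp \psi_2$, contradicting the induction hypothesis for $\psi_2$. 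The one genuinely new case, relative to the fact already used in the proof of Proposition~\ref{prop:imp.consitent}, is the default-negation case $\varphi = \Not\psi$; here no induction hypothesis is needed, since Lemma~\ref{lem:default.negation.consitent} directly states that $\sI\sep w' \not\modelsp\Not\psi$ or $\sI\sep w' \not\modelsn\Not\psi$, i.e.\ the two forcings are already incompatible. Combining this induction with the reduction above closes both directions of the lemma.
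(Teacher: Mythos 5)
Your proposal is correct and follows essentially the same route as the paper's own proof: unfold~\eqref{eq:val.def}, note that $\sI\sep w \modelsp \neg\!\sneg\varphi$ amounts to $\sI\sep w' \not\modelsn \varphi$ for all $w' \geq w$, and close the remaining gap using consistency. The paper compresses your persistence-plus-mutual-exclusivity argument into the single remark that, for consistent $\sI$, $\sI\sep w \modelsp \varphi$ implies $\sI\sep w' \not\modelsn \varphi$ for all $w' \geq w$, leaving the supporting induction implicit (it is only gestured at in the proof of Proposition~\ref{prop:imp.consitent}); you spell that induction out and correctly isolate the genuinely new $\Not$ case, discharging it via Lemma~\ref{lem:default.negation.consitent} exactly as one should.

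One caveat on your parenthetical justification of persistence: for \emph{extended} formulas the ``same induction'' does \emph{not} go through on arbitrary \Ninterpretations, because the negative forcing clause for $\Not$ is not preserved upward. Concretely, with two worlds $w \leq w'$ and $V^+(w)=V^+(w')=\set{a}$, $V^-(w)=\varnothing$, $V^-(w')=\set{a}$, one has $\sI\sep w \modelsp \sneg\Not a$ (since $\sI\sep w \modelsp a$ and $\sI\sep w \not\modelsn a$) but $\sI\sep w' \not\modelsp \sneg\Not a$, so even positive persistence fails for extended formulas in general. This can only happen on inconsistent interpretations, and on the consistent $\sI$ of the lemma positive persistence of extended formulas does hold; but its $\Not$ case requires precisely your mutual-exclusivity property (on consistent interpretations $\modelsp \Not\psi$ collapses to $\modelsp \neg\psi$, whose persistence is immediate from the $\to$ clause). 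Since your exclusivity induction is pointwise and nowhere invokes persistence, the repair is a simple reordering of your argument: establish mutual exclusivity first, then derive persistence of $\modelsp$ for extended formulas on consistent interpretations, then conclude as you do.
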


\begin{proof}
By definition, we have:
\\
\piff
$\sI\sep w \models \varphi$
\\iff
$\sI\sep w \modelsp \neg\!\sneg \varphi \wedge \varphi$
\\iff
$\sI\sep w \modelsp \neg\!\sneg \varphi$ and  $\sI\sep w \modelsp \varphi$
\\iff
$\sI\sep w' \not\modelsp \sneg \varphi$ for all $w' \geq w$ and  $\sI\sep w \modelsp \varphi$ 
\\iff
$\sI\sep w' \not\modelsn \varphi$ for all $w' \geq w$ and  $\sI\sep w \modelsp \varphi$
\\
Finally, just note that since $\sI$ is consistent, it follows that
$\sI\sep w \modelsp \varphi$
implies
$\sI\sep w \not\modelsn \varphi$
and, in its turn, this implies
$\sI\sep w' \not\modelsn \varphi$ for all $w' \geq w$.
Therefore, we obtain that
$\sI\sep w \models \varphi$ iff \mbox{$\sI\sep w \modelsp \varphi$}
holds.
\end{proof}

\begin{lemma}{\label{lem:default.negation.aux1}}
Given any \Ninterpretation~$\sI$,
we have $\sI\sep w \modelsp \neg\!\sneg\Not \varphi$
iff
\begin{enumerate}[ label=\roman*) , leftmargin=10pt ]
\item[] 
$\sI\sep w' \not\modelsp \varphi$ or $\sI\sep w' \modelsn \varphi$ for all $w' \geq w$.
\label{item:1:lem:default.negation.aux1}
\end{enumerate}
\end{lemma}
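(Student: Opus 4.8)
The plan is to prove the biconditional by a direct chain of equivalences, unfolding the Nelson forcing clauses one connective at a time. Since the statement concerns an arbitrary \Ninterpretation\ (not merely an \HTinterpretation), I would avoid appealing to Proposition~\ref{prop:negation} and instead argue straight from the definitions, keeping explicit track of the universal quantification over successor worlds.

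First I would handle the outer layer $\neg\sneg(\cdot)$ for an arbitrary subformula $\psi$ (to be instantiated later with $\Not\varphi$). Recall that $\neg\psi$ abbreviates $\psi \to \bot$; hence $\sI\sep w \modelsp \neg\sneg\psi$ holds iff for every $w' \geq w$ we have $\sI\sep w' \not\modelsp \sneg\psi$ or $\sI\sep w' \modelsp \bot$. Since $\bot$ is never positively forced, the second disjunct drops out, leaving the condition that $\sI\sep w' \not\modelsp \sneg\psi$ for all $w' \geq w$. Applying the strong-negation clause, namely $\sI\sep w' \modelsp \sneg\psi$ iff $\sI\sep w' \modelsn \psi$, this becomes: $\sI\sep w' \not\modelsn \psi$ for all $w' \geq w$.

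Then I would instantiate $\psi := \Not\varphi$ and expand the negative forcing of default negation. By the clause added to the forcing relations, $\sI\sep w' \modelsn \Not\varphi$ holds iff $\sI\sep w' \modelsp \varphi$ and $\sI\sep w' \not\modelsn \varphi$; taking the meta-level negation of this conjunction gives that $\sI\sep w' \not\modelsn \Not\varphi$ holds iff $\sI\sep w' \not\modelsp \varphi$ or $\sI\sep w' \modelsn \varphi$. Combining this with the previous paragraph, $\sI\sep w \modelsp \neg\sneg\Not\varphi$ holds iff for all $w' \geq w$ we have $\sI\sep w' \not\modelsp \varphi$ or $\sI\sep w' \modelsn \varphi$, which is exactly the claimed condition.

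The argument is essentially a routine unfolding, so I do not expect a genuine obstacle; the only points demanding care are (i) not losing the universal quantifier over $w' \geq w$ introduced by the outer implication $\sneg\Not\varphi \to \bot$, and (ii) correctly negating the conjunction in the $\modelsn\Not\varphi$ clause. I would also note that no consistency or persistence assumption on $\sI$ is used anywhere, so the equivalence indeed holds for every \Ninterpretation, as stated.
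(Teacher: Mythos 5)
Your proof is correct and matches the paper's own argument essentially step for step: both unfold $\neg\sneg\Not\varphi$ via the implication clause (discarding the $\bot$ disjunct), pass through the strong-negation clause to $\sI\sep w' \not\modelsn \Not\varphi$ for all $w' \geq w$, and then negate the defining conjunction of $\modelsn \Not\varphi$ to obtain the stated condition. Your two cautionary remarks (preserving the quantifier over $w' \geq w$ and correctly negating the conjunction) are exactly the points the paper's chain of equivalences handles implicitly, so nothing is missing.
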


\begin{proof}
We have:
\\
\piff
$\sI\sep w \modelsp \neg\!\sneg\Not \varphi$
\\iff
$\sI\sep w' \not\modelsp \sneg\Not \varphi$ for all $w' \geq w$
\\iff
$\sI\sep w' \not\modelsn \Not \varphi$ for all $w' \geq w$
\\iff
$\sI\sep w' \not\modelsp \varphi$ or $\sI\sep w' \modelsn \varphi$ for all $w' \geq w$
\end{proof}

\begin{lemma}\label{lem:negation.translation.aux}
Let $\sI$ be an \Ninterpretation, $\varphi$  be a formula
and \mbox{$w \in W$} be some world.
Then, we have: $\sI\sep w \modelsp \neg\!\sneg \Not \varphi$ iff
$\sI\sep w \modelsp \varphi \to \sneg\varphi$.
\end{lemma}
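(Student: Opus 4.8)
The plan is to show that both sides of the claimed equivalence unfold, after applying the relevant forcing clauses, to one and the same pointwise condition ranging over the worlds above~$w$. On the left-hand side I would simply invoke Lemma~\ref{lem:default.negation.aux1}, which already establishes that $\sI\sep w \modelsp \neg\!\sneg\Not\varphi$ holds iff for every $w' \geq w$ we have $\sI\sep w' \not\modelsp \varphi$ or $\sI\sep w' \modelsn \varphi$. This discharges all the work buried in the three stacked negations, so no further manipulation of the left-hand side is required.

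For the right-hand side I would unfold the intuitionistic implication using its positive forcing clause: $\sI\sep w \modelsp \varphi \to \sneg\varphi$ holds iff for every $w' \geq w$ we have $\sI\sep w' \not\modelsp \varphi$ or $\sI\sep w' \modelsp \sneg\varphi$. Then I would apply the positive clause for strong negation, namely $\sI\sep w' \modelsp \sneg\varphi$ iff $\sI\sep w' \modelsn \varphi$, to rewrite the second disjunct. This turns the right-hand condition into: for every $w' \geq w$, $\sI\sep w' \not\modelsp \varphi$ or $\sI\sep w' \modelsn \varphi$ --- syntactically identical to the condition obtained for the left-hand side. Comparing the two characterizations then yields the equivalence immediately.

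There is essentially no obstacle here: the only point to watch is that the innermost implication hidden in $\neg$ (inside $\neg\!\sneg\Not\varphi$) and the outer implication $\varphi \to \sneg\varphi$ both quantify over exactly the same successor set $\set{w' \mid w' \geq w}$, so the two ``for all $w' \geq w$'' conditions line up without needing any separate persistence argument. Should a self-contained proof be preferred, avoiding reliance on Lemma~\ref{lem:default.negation.aux1}, the same pointwise condition can be produced directly by expanding $\neg\!\sneg\Not\varphi$ step by step: using $\sI\sep w' \modelsp \sneg\Not\varphi$ iff $\sI\sep w' \modelsn \Not\varphi$ iff ($\sI\sep w' \modelsp \varphi$ and $\sI\sep w' \not\modelsn \varphi$), and then negating, one recovers exactly the disjunctive condition above, after which the match with $\varphi \to \sneg\varphi$ proceeds as before.
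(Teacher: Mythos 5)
Your proof is correct and takes essentially the same route as the paper: the paper's own proof likewise invokes Lemma~\ref{lem:default.negation.aux1} for the left-hand side, rewrites $\sI\sep w' \modelsn \varphi$ as $\sI\sep w' \modelsp \sneg\varphi$ via the strong-negation clause, and recognizes the resulting pointwise condition as the forcing clause for $\varphi \to \sneg\varphi$. Your self-contained alternative expansion is also sound --- it is just the paper's proof of Lemma~\ref{lem:default.negation.aux1} inlined, and your observation that both sides quantify over the same successor set, so no persistence argument is needed, is accurate.
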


\begin{proof}
From Lemma~\ref{lem:default.negation.aux1},
we have:
\\
\piff
$\sI\sep w \modelsp \neg\!\sneg \Not \varphi$
\\
iff
$\sI\sep w' \not\modelsp \varphi \text{ or } \sI\sep w' \modelsn \varphi$ for all $w' \geq w$
\\
iff
$\sI\sep w' \not\modelsp \varphi \text{ or } \sI\sep w' \modelsp \sneg\varphi$ for all $w' \geq w$
\\
iff
$\sI\sep w \modelsp \varphi \to \sneg\varphi$.
\end{proof}

\begin{lemma}{\label{lem:default.negation.aux2}}
Any \Ninterpretation~$\sI$ and formula~$\varphi$ satisfy that
$\sI\sep w \modelsp \sneg\varphi$ implies $\sI\sep w \modelsp \neg\!\sneg\Not \varphi$ 
\end{lemma}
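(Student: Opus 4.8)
The plan is to reduce the claim to the explicit characterization of $\neg\!\sneg\Not\varphi$ already obtained in Lemma~\ref{lem:default.negation.aux1} and then to discharge that characterization using nothing more than persistence. Recall that, by Lemma~\ref{lem:default.negation.aux1}, the conclusion $\sI\sep w \modelsp \neg\!\sneg\Not \varphi$ is equivalent to the condition that, for every world $w' \geq w$, either $\sI\sep w' \not\modelsp \varphi$ or $\sI\sep w' \modelsn \varphi$. Hence it suffices to show that the hypothesis $\sI\sep w \modelsp \sneg\varphi$ forces the \emph{second} disjunct to hold at every successor world $w'$.

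First I would rewrite the hypothesis using the forcing clause for strong negation: $\sI\sep w \modelsp \sneg\varphi$ holds iff $\sI\sep w \modelsn \varphi$, i.e.\ $\sneg\varphi$ is positively forced at $w$. Since $\varphi$ is an ordinary formula, so is $\sneg\varphi$, and the positive forcing relation $\modelsp$ is persistent along $\leq$: this is exactly item~(1) of Proposition~\ref{prop:preservation}, and it holds for an arbitrary \Ninterpretation\ (not merely an \HTinterpretation) by the same induction on formula structure, grounded in the preservation properties~(i)--(ii) of the valuations $V^+$ and $V^-$.

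Applying persistence to the formula $\sneg\varphi$ then gives $\sI\sep w' \modelsp \sneg\varphi$ for all $w' \geq w$, and unfolding the strong-negation clause once more yields $\sI\sep w' \modelsn \varphi$ for all such $w'$. In particular, the second disjunct of the Lemma~\ref{lem:default.negation.aux1} condition is satisfied at every $w' \geq w$, so that condition holds outright, and we conclude $\sI\sep w \modelsp \neg\!\sneg\Not \varphi$, as desired.

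I expect the only point requiring care — rather than a genuine obstacle — to be the scope of the persistence argument: Proposition~\ref{prop:preservation} is phrased for \HTinterpretation s, whereas the statement here ranges over all \Ninterpretation s, so I would either invoke the preservation properties of \Ninterpretation s directly or remark that the inductive proof of persistence never uses the two-world structure of an HT-frame. Note also that persistence is applied only to the plain formula $\sneg\varphi$, so the fact that $\modelsn$ need not be persistent on formulas containing $\Not$ is irrelevant here.
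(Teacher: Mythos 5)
Your proof is correct and takes essentially the same route as the paper: the paper also reduces the claim to the characterization in Lemma~\ref{lem:default.negation.aux1} and then discharges the second disjunct by persistence of $\sI\sep w \modelsn \varphi$ along $\leq$ (indeed, the paper's proof contains an apparent typo, writing $\not\modelsn$ where preservation yields $\sI\sep w' \modelsn \varphi$ for all $w' \geq w$). Your side remark is also well taken: the paper's ``(by preservation)'' implicitly relies on persistence holding for arbitrary \Ninterpretation s, which, as you note, follows from the same induction since nothing in it uses the two-world structure of HT-frames.
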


\begin{proof}
From Lemma~\ref{lem:default.negation.aux1},
we have:
\mbox{$\sI\sep w \modelsp \neg\!\sneg\Not \varphi$}
iff
\mbox{$\sI\sep w' \not\modelsp \varphi$}
or
\mbox{$\sI\sep w' \modelsn \varphi$} for all $w' \geq w$.
Then, just note that
$\sI\sep w \modelsp \sneg\varphi$ holds iff
$\sI\sep w \modelsn \varphi$ iff
$\sI\sep w' \not\modelsn \varphi$ for all $w' \geq w$ (by preservation).
\end{proof}

\begin{lemma}{\label{lem:default.negation.aux3}}
Any \Ninterpretation~$\sI$ and formula~$\varphi$ satisfy that
$\sI\sep w \modelsp \neg\varphi$ implies $\sI\sep w \modelsp \neg\!\sneg\Not \varphi$ 
\end{lemma}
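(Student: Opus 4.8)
The plan is to mirror the proof of Lemma~\ref{lem:default.negation.aux2}, exploiting the \emph{left} disjunct of the characterization in Lemma~\ref{lem:default.negation.aux1} rather than the right one. The whole argument reduces to unfolding the intuitionistic negation in the hypothesis and observing that it supplies exactly one of the two disjuncts needed.

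First I would unfold the hypothesis $\sI\sep w \modelsp \neg\varphi$. Since intuitionistic negation is defined as $\neg\varphi \eqdef (\varphi \to \bot)$, the forcing clause for implication gives that $\sI\sep w \modelsp \neg\varphi$ holds iff, for every $w' \geq w$, either $\sI\sep w' \not\modelsp \varphi$ or $\sI\sep w' \modelsp \bot$. As $\bot$ is never forced positively ($\sI\sep w' \not\modelsp \bot$ by definition of the forcing relation), this simplifies to the statement that $\sI\sep w' \not\modelsp \varphi$ for every $w' \geq w$.

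Then I would invoke Lemma~\ref{lem:default.negation.aux1}, which states that $\sI\sep w \modelsp \neg\!\sneg\Not\varphi$ holds iff, for every $w' \geq w$, we have $\sI\sep w' \not\modelsp \varphi$ or $\sI\sep w' \modelsn \varphi$. The previous step already guarantees the first disjunct $\sI\sep w' \not\modelsp \varphi$ at each $w' \geq w$, so the required disjunction holds trivially at every such world, and the conclusion $\sI\sep w \modelsp \neg\!\sneg\Not\varphi$ follows immediately.

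There is no real obstacle here: the only point worth making explicit is the reduction of $\varphi \to \bot$ to ``$\varphi$ fails everywhere above $w$'' via the emptiness of the positive forcing of $\bot$. Everything else is an application of the already-established equivalence in Lemma~\ref{lem:default.negation.aux1}, with the hypothesis discharging its left disjunct uniformly.
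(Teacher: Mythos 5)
Your proof is correct and follows exactly the paper's argument: both invoke Lemma~\ref{lem:default.negation.aux1} and observe that the hypothesis $\sI\sep w \modelsp \neg\varphi$, unfolded as $\sI\sep w' \not\modelsp \varphi$ for all $w'\geq w$, supplies the left disjunct uniformly. Your extra remark spelling out the reduction of $\varphi \to \bot$ via the non-forcing of $\bot$ is a detail the paper leaves implicit, but it is the same proof.
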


\begin{proof}
From Lemma~\ref{lem:default.negation.aux1},
we have:
\mbox{$\sI\sep w \modelsp \neg\!\sneg\Not \varphi$}
iff
\mbox{$\sI\sep w' \not\modelsp \varphi$}
or
\mbox{$\sI\sep w' \modelsn \varphi$} for all $w' \geq w$.
Then, just note that
$\sI\sep w \modelsp \neg\varphi$ holds iff
$\sI\sep w' \not\modelsp \varphi$ for all $w' \geq w$.
\end{proof}

\begin{lemma}{\label{lem:default.negation}}
Given any \Ninterpretation~$\sI$ the following condition hold:
\begin{enumerate}[ label=\roman*) , leftmargin=20pt ]
\item $\sI\sep w \models \Not \varphi$ iff $\sI\sep w \modelsp \neg\varphi \vee (\varphi \wedge \sneg\varphi)$.
\label{item:1:lem:default.negation}
\end{enumerate}
\end{lemma}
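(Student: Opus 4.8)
The plan is to unfold the two layers of definition---\mbox{cw-inference} on the outside and the positive forcing clause of $\Not$ on the inside---and then reduce the remaining content to the two auxiliary lemmas already established. First I would apply the definition of \mbox{cw-inference}~\eqref{eq:val.def} to the formula $\Not\varphi$, obtaining that $\sI\sep w \models \Not\varphi$ holds iff both $\sI\sep w \modelsp \neg\!\sneg\Not\varphi$ and $\sI\sep w \modelsp \Not\varphi$ hold. Next I would rewrite the second conjunct using the positive forcing clause for default negation, namely $\sI\sep w \modelsp \Not\varphi$ iff $\sI\sep w \modelsp \neg\varphi \vee (\varphi \wedge \sneg\varphi)$. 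Thus $\sI\sep w \models \Not\varphi$ is equivalent to the conjunction of $\sI\sep w \modelsp \neg\!\sneg\Not\varphi$ and $\sI\sep w \modelsp \neg\varphi \vee (\varphi \wedge \sneg\varphi)$.

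The left-to-right direction is then immediate, since this conjunction trivially entails its second conjunct, which is exactly the right-hand side of the statement. The whole content of the lemma therefore lies in the converse: assuming $\sI\sep w \modelsp \neg\varphi \vee (\varphi \wedge \sneg\varphi)$, I must recover the missing conjunct $\sI\sep w \modelsp \neg\!\sneg\Not\varphi$. Here I would split on the disjunction. If $\sI\sep w \modelsp \neg\varphi$, then $\sI\sep w \modelsp \neg\!\sneg\Not\varphi$ follows directly from Lemma~\ref{lem:default.negation.aux3}. If instead $\sI\sep w \modelsp \varphi \wedge \sneg\varphi$, then in particular $\sI\sep w \modelsp \sneg\varphi$, and $\sI\sep w \modelsp \neg\!\sneg\Not\varphi$ follows from Lemma~\ref{lem:default.negation.aux2}. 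In either case the missing conjunct holds, which completes the converse and hence the equivalence.

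There is no genuine obstacle once the auxiliary lemmas are in place: the only subtlety worth flagging is recognising that the two disjuncts of $\neg\varphi \vee (\varphi \wedge \sneg\varphi)$ correspond precisely to the hypotheses of Lemmas~\ref{lem:default.negation.aux3} and~\ref{lem:default.negation.aux2}. Consequently the extra conjunct $\neg\!\sneg\Not\varphi$ demanded by \mbox{cw-inference} is automatically available in both cases and adds no real strength to the statement, so that $\models \Not\varphi$ collapses to plain positive forcing $\modelsp \Not\varphi$.
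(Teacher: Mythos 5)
Your proof is correct and takes essentially the same route as the paper's: both unfold the definition of cw\nobreakdash-inference together with the positive forcing clause of $\Not$, and then discharge the extra conjunct $\neg\!\sneg\Not\varphi$ by applying Lemma~\ref{lem:default.negation.aux3} to the disjunct $\neg\varphi$ and Lemma~\ref{lem:default.negation.aux2} to the disjunct $\varphi \wedge \sneg\varphi$. The only difference is presentational---the paper distributes the conjunction over the disjunction inside one chain of equivalences, while you argue the two directions separately---which changes nothing substantive.
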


\begin{proof}
By definition we have:
\\
\piff$\sI\sep w \models \Not \varphi$
\\iff
$\sI\sep w \modelsp \Not\varphi \wedge \neg\!\sneg\Not \varphi$
\\iff
$\sI\sep w \modelsp \Not\varphi$ and  $\sI\sep w \modelsp \neg\!\sneg\Not \varphi$
\\iff
$\sI\sep w \modelsp \neg\varphi \vee (\varphi \wedge \sneg\varphi)$ and  $\sI\sep w \modelsp \neg\!\sneg\Not \varphi$
\\iff both $\sI\sep w \modelsp \neg\varphi$ or  $\sI\sep w \modelsp \varphi \wedge \sneg\varphi$
\\
\piff and  $\sI\sep w \modelsp \neg\!\sneg\Not \varphi$
\\iff either $\sI\sep w \modelsp \neg\varphi$ and  $\sI\sep w \modelsp \neg\!\sneg\Not \varphi$ 
\\
\piff or  $\sI\sep w \modelsp \varphi \wedge \sneg\varphi$ and $\sI\sep w \modelsp \neg\!\sneg\Not \varphi$
\\iff $\sI\sep w \modelsp \neg\varphi$ or  $\sI\sep w \modelsp \varphi \wedge \sneg\varphi$ (Lemmas~\ref{lem:default.negation.aux2} and~\ref{lem:default.negation.aux3})
\\iff $\sI\sep w \modelsp \neg\varphi \vee (\varphi \wedge \sneg\varphi)$.
\end{proof}

\begin{proofof}{Proposition~\ref{prop:consitent}}
Condition~\ref{item:models:prop:consitent}
follows directly from Lemma~\ref{lem:models.consitent}.
This also implies that
\mbox{$\sI\sep w \models \neg\varphi$} holds
iff
\mbox{$\sI\sep w \modelsp \neg\varphi$}.
Furthermore, from Lemma~\ref{lem:default.negation},
it follows that
\mbox{$\sI \models \Not \varphi$} holds
iff
\mbox{$\sI\sep w \modelsp \neg\varphi \vee (\varphi \wedge \sneg\varphi)$}.
Hence, it is easy to check that
\mbox{$\sI\sep w \models \neg\varphi$}
implies
\mbox{$\sI \models \Not \varphi$}.
To show that the only if direction, just note that, since $\sI$ is consistent,
we have 
\mbox{$\sI\sep w \not\modelsp \varphi \wedge \sneg\varphi$}
and, therefore,
\mbox{$\sI \models \Not \varphi$}
also implies
\mbox{$\sI\sep w \models \neg\varphi$}.
That is, condition~\ref{item:neg:prop:consitent} holds.
\end{proofof}

\subsubsection*{Auxiliary results for Proposition~\ref{prop:cw-relation}}

\begin{lemma}\label{lem:sup.mponens}
Given any \Ninterpretation~$\sI$ and any pair of formulas $\varphi_1,\varphi_2$, the following condition holds:
\begin{enumerate}[ label=\roman*)]
\item $\sI \models \varphi_1$ and $\sI \modelsp \varphi_1 \sup \varphi_2$
imply $\sI \modelsp \varphi_2$
\label{item:1:prop:lem.models}
\end{enumerate}
\end{lemma}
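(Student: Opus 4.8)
The plan is to reduce the statement to a pointwise modus ponens carried out at a single world, relying on the reflexivity of $\leq$. First I would unfold the two hypotheses. By definition~\eqref{eq:sup.def}, the formula $\varphi_1 \sup \varphi_2$ abbreviates $(\neg\!\sneg\varphi_1 \wedge \varphi_1) \to \varphi_2$; hence $\sI \modelsp \varphi_1 \sup \varphi_2$ says that for every world $w$ and every $w' \geq w$, either $\sI\sep w' \not\modelsp \neg\!\sneg\varphi_1 \wedge \varphi_1$ or $\sI\sep w' \modelsp \varphi_2$. By definition~\eqref{eq:val.def} of \mbox{cw\nobreakdash-inference}, the hypothesis $\sI \models \varphi_1$ says that $\sI\sep w \modelsp \neg\!\sneg\varphi_1 \wedge \varphi_1$ holds at every world $w$.

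Next I would fix an arbitrary world $w$ and instantiate the unfolded implication at $w' = w$, which is admissible because $w \leq w$. This yields the disjunction $\sI\sep w \not\modelsp \neg\!\sneg\varphi_1 \wedge \varphi_1$ or $\sI\sep w \modelsp \varphi_2$. The first disjunct is ruled out directly by the \mbox{cw\nobreakdash-inference} hypothesis, which forces $\sI\sep w \modelsp \neg\!\sneg\varphi_1 \wedge \varphi_1$; so we are left with $\sI\sep w \modelsp \varphi_2$. Since $w$ was arbitrary, this gives $\sI \modelsp \varphi_2$, as required.

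I do not expect any genuine obstacle: the result is essentially built into the definition of $\sup$. The one point worth stressing is why the argument goes through with no consistency assumption on $\sI$. The strengthened antecedent $\neg\!\sneg\varphi_1 \wedge \varphi_1$ of the defining implication is \emph{exactly} the formula whose forcing at every world is asserted by $\sI \models \varphi_1$, so a consistent proof of the antecedent supplies precisely the premise needed to discharge the intuitionistic implication. Note also that the conclusion is stated only in terms of $\modelsp$ and cannot be strengthened to $\sI \models \varphi_2$: firing the implication yields a (possibly inconsistent) proof of $\varphi_2$, but nothing prevents other evidence from forcing $\sneg\varphi_2$ as well.
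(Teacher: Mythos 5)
Your proof is correct and follows essentially the same route as the paper's: unfold $\sup$ via definition~\eqref{eq:sup.def}, observe that $\sI \models \varphi_1$ forces the strengthened antecedent $\neg\!\sneg\varphi_1 \wedge \varphi_1$ at every world, and discharge the intuitionistic implication pointwise. Your explicit appeal to reflexivity of $\leq$ to instantiate at $w' = w$ is a minor presentational variant of the paper's quantification over all $w' \in W$, not a different argument.
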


\begin{proof}
By definition, we have
$\sI\sep w \modelsp \varphi_1 \sup \varphi_2$ holds iff
either 
$\sI\sep w' \not\modelsp \neg\!\sneg \varphi_1 \wedge \varphi$
or
$\sI\sep w' \modelsp \varphi_2$
for all $w' \geq w$.
Furthermore, by definition,
\mbox{$\sI\sep w \models \varphi_1$}
implies
\mbox{$\sI\sep w' \modelsp \neg\!\sneg \varphi \wedge \varphi$}
for all $w' \in W$
and, thus,~\ref{item:1:prop:lem.models} holds.
\end{proof}

\begin{proofof}{Proposition~\ref{prop:cw-relation}}
Condition~\ref{item:impl:prop:cw-relation} follows directly from Lemma~\ref{lem:sup.mponens}.
For condition~\ref{item:neg:1:prop:cw-relation}, note that
\mbox{$\sI \models \Not \varphi$}
implies that
\mbox{$\sI\sep w \modelsp \neg \varphi$}
or
\mbox{$\sI\sep w \modelsp \varphi \wedge \sneg \varphi$}
for all
\mbox{$w \in W$}.
Pick any world \mbox{$w \in W$}.
On the one hand,
\mbox{$\sI\sep w \modelsp \neg \varphi$} implies
\mbox{$\sI\sep w \not\modelsp \varphi$}
which, in its turn, implies
\mbox{$\sI\sep w \not\modelsp \varphi \wedge \neg\!\sneg \varphi$}
\mbox{$\sI\sep w \modelsp \varphi$}.
On the other hand
\mbox{$\sI\sep w \modelsp \varphi \wedge \sneg \varphi$}
implies
\mbox{$\sI\sep w \modelsp  \sneg \varphi$}
which implies
\mbox{$\sI\sep w \not\modelsp \varphi \wedge \neg\!\sneg \varphi$}
and
\mbox{$\sI\sep w \models \varphi$}.
Hence, implies that
\mbox{$\sI\sep w \not\models \varphi$}
for all
\mbox{$w \in W$}
and 
\mbox{$\sI \not\models \varphi$}.
Furthermore, if $\sI$ is a total \HTinterpretation, we have that
\mbox{$\sI \not\models \varphi$}
implies
\mbox{$\sI\sep t \not\modelsp \varphi \wedge \neg\!\sneg\varphi$}
which implies
\mbox{$\sI\sep t \not\modelsp \varphi$}
and
\mbox{$\sI\sep t \modelsp \neg\varphi$}.
Hence,
\mbox{$\sI \models \Not \varphi$}
and condition~\ref{item:neg:prop:cw-relation} hold.
Note that this does not hold if $\sI$ is not total: take \mbox{$\sI = \tuple{\bH,\bT}$}
with \mbox{$\bH = \set{\sneg a}$} and \mbox{$\bT = \set{ a , \sneg a}$}.
Then, $\sI \modelsp \sneg a$ and $\sI \not\modelsp \neg a$.
The latter implies that
\mbox{$\sI \not\models \varphi$}
while we can check that
\mbox{$\sI \not\models \Not\varphi$}
because $\sI \not\modelsp \neg a$
and $\sI \not\modelsp a$.
\end{proofof}

\begin{proofof}{Proposition~\ref{prop:default.negation.alt}}
By definition,
$\sI \models \neg \varphi \vee (\varphi \wedge \sneg\varphi)$
\\iff $\sI \modelsp (\neg \varphi \vee (\varphi \wedge \sneg\varphi)) 
        \wedge \neg\!\sneg\,(\neg \varphi \vee (\varphi \wedge \sneg\varphi))$
\\iff $\sI \modelsp (\neg \varphi \vee (\varphi \wedge \sneg\varphi)) 
        \wedge (\neg\!\sneg\neg \varphi \wedge (\neg\!\sneg\varphi \vee \neg\!\sneg\sneg\varphi))$
\\iff $\sI \modelsp (\neg \varphi \vee (\varphi \wedge \sneg\varphi)) 
        \wedge (\neg\!\sneg\sneg \varphi \wedge (\neg\!\sneg\varphi \vee \neg\!\sneg\sneg\varphi))$ \hfill (Proposition~\ref{prop:negation} iv)
\\iff $\sI \modelsp (\neg \varphi \vee (\varphi \wedge \sneg\varphi)) 
        \wedge (\neg \varphi \wedge (\neg\!\sneg\varphi \vee \neg\varphi))$
\\iff $\sI \modelsp (\neg \varphi \vee (\varphi \wedge \sneg\varphi)) 
        \wedge \neg \varphi$
\\iff $\sI \modelsp \neg \varphi$
\\iff $\sI \modelsp \neg \varphi \wedge \neg\varphi$
\\iff $\sI \modelsp \neg \varphi \wedge \neg\!\sneg\sneg\varphi$
\\iff $\sI \modelsp \neg \varphi \wedge \neg\!\sneg\neg\varphi$  \hfill (Proposition~\ref{prop:negation} iv)
\\iff $\sI \models \neg \varphi$
\end{proofof}

\begin{proofof}{Proposition~\ref{prop:att.mponens}}
By definition,
we get that
${\sI\sep w \modelsp \varphi_1 \att \varphi_2}$
holds
iff
${\sI\sep w \modelsp \varphi_1 \sup \sneg\varphi_2}$.
Furthermore, from Proposition~\ref{prop:cw-relation},
we get that
$\sI\sep w \modelsp \varphi_1$
implies
$\sI\sep w \models \sneg\varphi_2$.
As a result, we get that $\sI\sep w \modelsn \varphi_2$.
\end{proofof}

\subsubsection*{Auxiliary results for Proposition~\ref{prop:af.model} and Theorem~\ref{thm:sf.stable<->emodel}}

\begin{proposition}\label{prop:attack.models}
Given any \Ninterpretation~$\sI$ and any pair of formulas $\varphi_1,\varphi_2$, the following conditions are equivalent:
\begin{enumerate}[ label=\roman*), leftmargin=17pt]
\item $\sI\sep w \modelsp \varphi_1 \att \varphi_2$,
\label{item:1:prop:attack.models}
\item $\sI\sep w \modelsp \neg\!\sneg\varphi_1 \wedge \varphi_1 \to \sneg \varphi_2$,
\label{item:2:prop:attack.models}
\end{enumerate}
Furthermore, if $\sI$ is a \HTinterpretation, then
\begin{enumerate}[ label=\roman*), start=3, leftmargin=17pt]
\item $\sI\sep w' \not\modelsp \varphi_1$
or \hspace{1pt} $\sI\sep t \modelsn \varphi_1$
or \hspace{1pt} $\sI\sep w' \modelsn \varphi_2$ 
\hspace{1pt} for all $w' \geq w$.
\label{item:3:prop:attack.models}
\end{enumerate}
\end{proposition}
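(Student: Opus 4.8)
The plan is to prove all three equivalences by unfolding the defined abbreviations and then invoking the previously established HT-negation property, with no heavy computation required.

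First I would dispatch the equivalence of \ref{item:1:prop:attack.models} and \ref{item:2:prop:attack.models} purely definitionally. By~\eqref{eq:att.def}, $\varphi_1 \att \varphi_2$ abbreviates $\varphi_1 \sup \sneg\varphi_2$, and by~\eqref{eq:sup.def} the latter abbreviates $(\neg\!\sneg\varphi_1 \wedge \varphi_1) \to \sneg\varphi_2$, which is exactly the formula in \ref{item:2:prop:attack.models}. Since this is a syntactic identity, the equivalence holds for any \Ninterpretation~$\sI$, with no appeal to the HT structure.

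For the HT case, I would expand \ref{item:2:prop:attack.models} using the Kripke clause for intuitionistic implication: $\sI\sep w \modelsp (\neg\!\sneg\varphi_1 \wedge \varphi_1) \to \sneg\varphi_2$ holds iff for every $w' \geq w$ either $\sI\sep w' \not\modelsp \neg\!\sneg\varphi_1 \wedge \varphi_1$ or $\sI\sep w' \modelsp \sneg\varphi_2$. Splitting the conjunction and rewriting $\modelsp \sneg$ as $\modelsn$ via the forcing clause for strong negation, this reads: for all $w' \geq w$, $\sI\sep w' \not\modelsp \neg\!\sneg\varphi_1$ or $\sI\sep w' \not\modelsp \varphi_1$ or $\sI\sep w' \modelsn \varphi_2$. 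The decisive step is then to rewrite the first disjunct. By item~\ref{item:1:prop:negation} of Proposition~\ref{prop:negation}, $\sI\sep w' \modelsp \neg\!\sneg\varphi_1$ iff $\sI\sep t \not\modelsp \sneg\varphi_1$, that is iff $\sI\sep t \not\modelsn \varphi_1$; hence $\sI\sep w' \not\modelsp \neg\!\sneg\varphi_1$ iff $\sI\sep t \modelsn \varphi_1$.

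The point to highlight is that this last condition no longer mentions $w'$: the HT-negation property collapses the world-dependence of $\neg\!\sneg\varphi_1$ to the single top world $t$. Substituting this equivalence into the expanded form of \ref{item:2:prop:attack.models} yields condition \ref{item:3:prop:attack.models} verbatim, up to reordering the three disjuncts. I expect this collapse to be the only genuinely non-routine step; it is precisely what explains why the middle disjunct in \ref{item:3:prop:attack.models} is evaluated at~$t$ rather than at the quantified world~$w'$, while the outer and inner disjuncts remain tied to~$w'$. Everything else is mechanical unfolding of the forcing clauses, and no consistency assumption on~$\sI$ is needed.
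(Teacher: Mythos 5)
Your proposal is correct and follows essentially the same route as the paper's proof: the equivalence of \ref{item:1:prop:attack.models} and \ref{item:2:prop:attack.models} by pure definitional unfolding of~\eqref{eq:att.def} and~\eqref{eq:sup.def}, followed by expansion of the implication's Kripke clause and the application of item~\ref{item:1:prop:negation} of Proposition~\ref{prop:negation} (with $\sneg\varphi_1$ in place of $\varphi$) to collapse $\neg\!\sneg\varphi_1$ to the condition $\sI\sep t \modelsn \varphi_1$ at the top world. Your explicit remark that this collapse is what removes the $w'$-dependence of the middle disjunct, and that no consistency assumption is needed, matches the paper's argument exactly.
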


\begin{proof}
By definition, we have that\\
$\sI\sep w \modelsp \varphi_1 \att \varphi_2$
\\iff
$\sI\sep w \modelsp \varphi_1 \sup \sneg \varphi_2$
\\iff
$\sI\sep w \modelsp \neg\!\sneg \varphi_1 \wedge \varphi_1 \to \sneg \varphi_2$
\\
Hence, conditions~\ref{item:1:prop:attack.models} and~\ref{item:2:prop:attack.models} are equivalent. Furthermore, we also have that
\\
$\sI\sep w \modelsp \neg\!\sneg\varphi_1 \wedge \varphi_1 \to \sneg \varphi_2$
\\iff
$\sI\sep w' \not\modelsp \neg\!\sneg\varphi_1 \wedge \varphi_1$ or  $\sI\sep w' \modelsp  \varphi_2$ for all $w' \geq w$
\\iff
$\sI\sep w' \not\modelsp \neg\!\sneg\varphi_1$ or  $\sI\sep w' \not\modelsp \varphi_1$ or  $\sI\sep w' \modelsp  \varphi_2$ for all $w' \geq w$
\\iff
$\sI\sep t \modelsp \sneg\varphi_1$ or  $\sI\sep w' \not\modelsp \varphi_1$ or  $\sI\sep w' \modelsp  \varphi_2$ for all $w' \geq w$ \hfill(Proposition~\ref{prop:negation})
\\iff
$\sI\sep t \modelsn \varphi_1$ or  $\sI\sep w' \not\modelsp \varphi_1$ or  $\sI\sep w' \modelsp  \varphi_2$ for all $w' \geq w$.
\\
Hence, the three conditions are equivalent.
\end{proof}

\begin{lemma}\label{lem:model->defeated}
Let \mbox{$\EBAF$} for some set attack framework and let $\sI$ be some model of $\LEF{\EBAF}$.
Then, we have that $\Defeated{\SI} \subseteq H_\sI^-$.
\end{lemma}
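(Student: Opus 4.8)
The plan is to unfold the definition of defeat, locate the attack formula that it contributes to the translation, and then apply the modus-ponens property for the attack connective~$\att$. Concretely, I would fix an arbitrary argument \mbox{$a \in \Defeated{\SI}$} and show that \mbox{$a \in H_\sI^-$}.

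First I would use the definition of \emph{defeat} (Definition~\ref{def-inh-ASAF}, item~1): since $a$ is defeated \wrt~$\SI$, there is a set \mbox{$B \subseteq \SI$} with \mbox{$(B,a) \in \R_a$}. By the construction of $\Gamma_{\R_a}$ in~\eqref{eq:def.gamma.sf}, this attack contributes the formula \mbox{$\bigwedge B \att a$} to \mbox{$\Gamma_{\R_a} \subseteq \LEF{\EBAF}$}. As $\sI$ is a model of $\LEF{\EBAF}$, we get \mbox{$\sI \modelsp \bigwedge B \att a$}, that is, by~\eqref{eq:att.def}, \mbox{$\sI \modelsp \bigwedge B \sup \sneg a$}.

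Next I would convert the set inclusion into a statement about cw-inference. Since \mbox{$\SI = \setm{b \in \A}{\sI \models b}$}, the inclusion \mbox{$B \subseteq \SI$} means exactly that \mbox{$\sI \models b$} for every \mbox{$b \in B$}, and this is equivalent to \mbox{$\sI \models \bigwedge B$}. This equivalence is the only mildly technical point: it follows from the semantics of~$\wedge$ together with the consistency guard~$\neg\!\sneg$, once one observes that \mbox{$\sI\sep w \modelsp \neg\!\sneg(\bigwedge B)$} holds precisely when \mbox{$\sI\sep w \modelsp \neg\!\sneg b$} holds for every \mbox{$b \in B$} (it is the same fact already noted in the proof of Theorem~\ref{thm:sf.stable<->emodel}). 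I expect this conjunction bookkeeping to be the main, though minor, obstacle; everything else is a direct appeal to previously established results.

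Finally, having \mbox{$\sI \models \bigwedge B$} and \mbox{$\sI \modelsp \bigwedge B \att a$}, I would apply Proposition~\ref{prop:att.mponens} with \mbox{$\varphi_1 = \bigwedge B$} and \mbox{$\varphi_2 = a$} to conclude \mbox{$\sI \modelsn a$}; equivalently, Proposition~\ref{prop:cw-relation} (condition~\ref{item:impl:prop:cw-relation}) applied to \mbox{$\sI \models \bigwedge B$} and \mbox{$\sI \modelsp \bigwedge B \sup \sneg a$} yields \mbox{$\sI \modelsp \sneg a$}. In either case \mbox{$\sI\sep h \modelsn a$}, i.e.\ \mbox{$a \in V^-(h) = H_\sI^-$}. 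Since $a$ was an arbitrary element of $\Defeated{\SI}$, this gives \mbox{$\Defeated{\SI} \subseteq H_\sI^-$}, as required.
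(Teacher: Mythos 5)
Your proof is correct and follows essentially the same route as the paper's: pick \mbox{$a \in \Defeated{\SI}$}, extract the attack \mbox{$(B,a) \in \R_a$} contributing \mbox{$\bigwedge B \att a$} to $\LEF{\EBAF}$, derive \mbox{$\sI \models \bigwedge B$} from \mbox{$B \subseteq \SI$}, and conclude \mbox{$\sI \modelsn a$} via Proposition~\ref{prop:att.mponens}. The only difference is that you spell out the conjunction bookkeeping (that \mbox{$B \subseteq \SI$} is equivalent to \mbox{$\sI \models \bigwedge B$}, via the behaviour of $\neg\!\sneg$ on conjunctions), which the paper leaves implicit here and only records in the proof sketch of Theorem~\ref{thm:sf.stable<->emodel} --- a harmless and correct elaboration.
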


\begin{proof}
\def\aA{\SI}
Pick any $a \in \Defeated{\aA}$.
By definition, there is $B \subseteq \aA$ such that \mbox{$(B,a) \in \R_a$}
and, thus, $\bigwedge B \att a$ belongs to $\LEF{\EBAF}$.
Furthermore, $B \subseteq \aA$ implies
$\sI \models \bigwedge B$
and, thus, \mbox{$\sI \modelsn a$} (Proposition~\ref{prop:att.mponens}).
In its turn, this implies that $a \in H_\sI^-$.
\end{proof}

\begin{lemma}\label{lem:model->cfree}
Let $\sI$ be a model of $\LEF{\EBAF}$ for some framework \mbox{$\EBAF$}.
Then,
$\SI$ is conflict-free.
\end{lemma}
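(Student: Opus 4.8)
The plan is to derive conflict-freeness directly from the inclusion already secured in Lemma~\ref{lem:model->defeated}. By definition, $\SI$ is \cfree\ precisely when $\SI \cap \Defeated{\SI} = \varnothing$. Since Lemma~\ref{lem:model->defeated} gives $\Defeated{\SI} \subseteq H_\sI^-$, it suffices to prove the sharper disjointness $\SI \cap H_\sI^- = \varnothing$: the defeated arguments then fall inside $H_\sI^-$ and are thereby automatically separated from $\SI$.

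The core step is to show that membership in $\SI$ rules out being forced false, even at the world~$t$. First I would take any $a \in \SI$, that is, any argument with $\sI \models a$, and unfold the \mbox{cw\nobreakdash-inference} definition~\eqref{eq:val.def}: this yields $\sI \modelsp \neg\!\sneg a \wedge a$ and in particular $\sI \modelsp \neg\!\sneg a$. Applying Proposition~\ref{prop:negation}(\ref{item:1:prop:negation}) to the formula $\sneg a$ turns this into $\sI\sep t \not\modelsp \sneg a$, which by the forcing clause for strong negation is exactly $\sI\sep t \not\modelsn a$; hence $a \notin T_\sI^-$.

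To finish, I would invoke the preservation properties of \Ninterpretations, which give $H_\sI^- \subseteq T_\sI^-$. Combined with $a \notin T_\sI^-$ this yields $a \notin H_\sI^-$. Since $a \in \SI$ was arbitrary, we obtain $\SI \cap H_\sI^- = \varnothing$, and therefore $\SI \cap \Defeated{\SI} = \varnothing$ using the inclusion from Lemma~\ref{lem:model->defeated}, establishing that $\SI$ is \cfree.

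I do not anticipate a genuine obstacle: the substantive content, namely that every defeated argument ends up strongly negated in $H_\sI^-$, is already carried by Lemma~\ref{lem:model->defeated} (which itself rests on the modus-ponens-style Proposition~\ref{prop:att.mponens} for the attack connective). The only delicate point is the negation bookkeeping in the middle step, where one must track the interaction of the \mbox{cw\nobreakdash-inference} operator~$\models$, intuitionistic negation~$\neg$ and strong negation~$\sneg$, and apply the $h$/$t$ preservation in the correct direction so as not to conflate $H_\sI^-$ with $T_\sI^-$.
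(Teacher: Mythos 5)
Your proof is correct and takes essentially the same route as the paper: both combine Lemma~\ref{lem:model->defeated} with the unfolding of $\sI \models a$ into $\sI\sep h \modelsp a$ and $\sI\sep t \not\modelsn a$, so that $\SI$ is disjoint from $T_\sI^-$ and hence from $\Defeated{\SI}$. The only difference is cosmetic bookkeeping --- the paper chains $\Defeated{\SI} \subseteq H_\sI^- \subseteq T_\sI^-$ and concludes from $\SI \cap T_\sI^- = \varnothing$, whereas you descend from $a \notin T_\sI^-$ to $a \notin H_\sI^-$ via the same preservation property.
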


\begin{proof}
From Lemma~\ref{lem:model->defeated}
it follows that $\Defeated{\SI} \subseteq H_\sI^- \subseteq T_\sI^-$.
Furthermore, by definition, 
$a \in \SI$ iff $\sI \models a$
$\sI \models a \wedge \neg\!\sneg a$
iff $\sI,h \modelsp a$ and $\sI,t \not\modelsn a$.
Hence, $\SI \cap T_\sI^- = \varnothing$
and, thus, we have that $\SI \cap \Defeated{\SI} = \varnothing$.
In other words, the set $\SI$ is \cfree.
\end{proof}

\begin{lemma}\label{lem:sf.emodel->defeated}
Let \mbox{$\SF$} for some framework and $\sI$ be some $\leq$-minimal model of~$\LSF{\SF}$.
Then, we have that $\Defeated{\SI} = H_\sI^-$.
\end{lemma}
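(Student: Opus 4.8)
The plan is to leverage that one inclusion is already free. By Lemma~\ref{lem:model->defeated} we have $\Defeated{\SI} \subseteq H_\sI^-$, so the whole content of the statement is the converse inclusion $H_\sI^- \subseteq \Defeated{\SI}$, and this is precisely where $\leq$-minimality must enter. First I would record two preliminary identities. Since $\sI$ is a model of $\LSF{\SF} = \A \cup \Gamma_{\R_a}$ and $\A$ supplies every argument as a fact, $\sI \modelsp a$ for all $a \in \A$ forces $H_\sI^+ = \A$ (and hence $T_\sI^+ = \A$ by persistence, Proposition~\ref{prop:preservation}). Unwinding the definition of $\SI$ together with Proposition~\ref{prop:negation}(i), $\sI \models a$ holds iff $a \in H_\sI^+$ and $a \notin T_\sI^-$; therefore $\SI = \A \setminus T_\sI^-$.

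The core is a minimality contradiction. I would define an interpretation $\sJ$ that agrees with $\sI$ on $H_\sI^+$ and on $\bT_\sI$, but whose negative part at $h$ is shrunk to exactly the defeated arguments: set $H_\sJ^+ := H_\sI^+$, $H_\sJ^- := \Defeated{\SI}$ and $\bT_\sJ := \bT_\sI$. By Lemma~\ref{lem:model->defeated}, $\Defeated{\SI} \subseteq H_\sI^- \subseteq T_\sI^-$, so $\sJ$ is a legitimate \HTinterpretation\ (the preservation property $H_\sJ^- \subseteq T_\sJ^-$ holds) and $\sJ \leq \sI$ since $\bH_\sJ \subseteq \bH_\sI$ and $\bT_\sJ = \bT_\sI$. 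If I can show that $\sJ$ is again a model of $\LSF{\SF}$, then $\leq$-minimality of $\sI$ forces $\sJ = \sI$, hence $H_\sI^- = H_\sJ^- = \Defeated{\SI}$, which is exactly the claim.

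It remains to verify that $\sJ$ models $\LSF{\SF}$. The facts $\A$ are satisfied because $H_\sJ^+ = \A$. For each $(A,b) \in \R_a$ I would check $\sJ \modelsp \bigwedge A \att b$ through Proposition~\ref{prop:attack.models}(iii): for every world $w'$, $\sJ\sep w' \not\modelsp \bigwedge A$ or $\sJ\sep t \modelsn \bigwedge A$ or $\sJ\sep w' \modelsn b$. The crucial remark is that $\sI$ and $\sJ$ share $V^+$ at both worlds and share $T^-$, so the first two disjuncts evaluate identically for $\sI$ and $\sJ$, and the only possible discrepancy is the disjunct $\sJ\sep h \modelsn b$. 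Since $\sI$ is a model, the disjunction holds for $\sI$ at every $w'$; the only situation needing work is $w' = h$ where $\sI$ satisfied it solely through $\sI\sep h \modelsn b$ (i.e.\ $b \in H_\sI^-$), while the first two disjuncts fail, that is $\sJ\sep h \modelsp \bigwedge A$ and $\sJ\sep t \not\modelsn \bigwedge A$. The latter means $A \cap T_\sI^- = \varnothing$, and combined with $A \subseteq \A = H_\sI^+$ this gives $A \subseteq \A \setminus T_\sI^- = \SI$. Hence $(A,b) \in \R_a$ with $A \subseteq \SI$ witnesses $b \in \Defeated{\SI} = H_\sJ^-$, so $\sJ\sep h \modelsn b$ holds and the attack formula is satisfied by $\sJ$.

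The main obstacle, and the only genuinely delicate step, is exactly this last case distinction: one must see that whenever the negated conclusion $\sneg b$ was ``used'' by $\sI$ at the here world to satisfy an attack, the corresponding attacker set $A$ is in fact contained in $\SI$, so that $b$ is genuinely defeated and can therefore be retained in the smaller model $\sJ$. Everything else—the two preliminary identities and the observation that $V^+$ and $T^-$ are untouched by the construction—is routine bookkeeping.
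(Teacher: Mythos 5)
Your proof is correct and follows essentially the same minimality argument as the paper: both shrink the here-world negative valuation while keeping $V^+$ and the there-world component fixed, and both hinge on the observation that an attack $\bigwedge A \att b$ can then only be affected through the $\modelsn b$ disjunct at $h$, whose failure forces $A \subseteq \SI$ and hence $b \in \Defeated{\SI}$. The only difference is packaging: the paper removes a single $a \in H_\sI^-$ and uses minimality to infer that the shrunken interpretation must falsify some attack targeting $a$, whereas you shrink $H_\sI^-$ to $\Defeated{\SI}$ in one step and verify modelhood of the smaller interpretation directly---a reversal of direction that, if anything, makes explicit the case analysis the paper's proof leaves implicit.
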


\begin{proof}
\def\aA{\SI}
From Lemma~\ref{lem:model->defeated},
it follows that 
\mbox{$\Defeated{\SI} \subseteq H^-_\sI$}.
To show that
\mbox{$\Defeated{\SI} = H^-_\sI$}
also holds, pick any
\mbox{$a \in H^-_\sI$}, that is, we have that
\mbox{$\sI \modelsn a$}.
Let $\sJ$ be an \HTinterpretation\ with
\mbox{$\bT_\sJ = \bT_\sI$},
\mbox{$H_\sJ^+ = H_\sI^+$} and
\mbox{$H_\sJ^- = H_\sI^- \setminus \set{ a }$}.
Note that, by construction, we have \mbox{$\sJ < \sI$}.
Then, since $\sI$ is a $\leq$-minimal model,
it must be that $\sJ$ is not a model of $\LSF{\SF}$.
Furthermore, since $H_\sJ^+ = H_\sI^+$ and $\sI \models \A$,
we have that $\sJ \models \A$ as well.
Thus, there must be some attack
$(B,a) \in \R_a$ such that
\mbox{$\sJ \not\models^+ (\bigwedge B \att a)$}.
This implies $\sJ \models \bigwedge B$
which, in its turn, implies
$B \subseteq \SI$.
Hence, we have
\mbox{$a \in \Defeated{\aA}$}
and, thus,
\mbox{$\Defeated{\SI} = H^-_\sI$}.
\end{proof}

\begin{proposition}\label{prop:sf.model}
Let \mbox{$\SF$} for some framework and
\mbox{$\sI$} be some model of $\LSF{\SF}$.
Then, the following statement hold:
\begin{enumerate}[ label=\roman*), leftmargin=20pt]

\item if $a$ is defeated \wrt~$\SI$, then $\sI \modelsp \sneg a$

\item $\SI$ is \cfree.
\end{enumerate}
If, in addition, $\sI$ is an equilibrium model, then
\begin{enumerate}[ label=\roman*), start=3, leftmargin=20pt]

\item $a$ is defeated \wrt~$\SI$ iff $\sI \modelsp \sneg a$.
\end{enumerate}
\end{proposition}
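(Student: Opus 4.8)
The plan is to assemble the three items from the auxiliary lemmas already proved for evidence-based frameworks, exploiting the fact that a SETAF $\SF$ is the special case of an EBAF with empty support relation and all arguments prima-facie. Indeed, under the identification $\R_s = \varnothing$ and $\PF = \A$ we have $\LEF{\EBAF} = \A \cup \Gamma_{\R_a} = \LSF{\SF}$, so Lemmas~\ref{lem:model->defeated} and~\ref{lem:model->cfree} apply verbatim to any model $\sI$ of $\LSF{\SF}$.

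For item~(i) I would invoke Lemma~\ref{lem:model->defeated} to obtain $\Defeated{\SI} \subseteq H_\sI^-$. Hence, if $a$ is defeated \wrt~$\SI$, then $a \in H_\sI^-$, that is, $\sI\sep h \modelsn a$. By the preservation of $V^-$ (Proposition~\ref{prop:preservation}) this yields $\sI\sep t \modelsn a$ as well, and therefore $\sI\sep w \modelsp \sneg a$ for both worlds $w \in \set{h,t}$, i.e.\ $\sI \modelsp \sneg a$. Item~(ii) is then nothing more than a restatement of Lemma~\ref{lem:model->cfree} for the SETAF case, since $\SI$ being \cfree\ is exactly its conclusion.

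For item~(iii), the left-to-right direction is exactly item~(i). For the converse I would first note that an equilibrium model is, by Definition~\ref{def:equilibrium}, a $\leq$-minimal model, so Lemma~\ref{lem:sf.emodel->defeated} applies and strengthens the inclusion of item~(i) to the equality $\Defeated{\SI} = H_\sI^-$. Thus, if $\sI \modelsp \sneg a$, then in particular $\sI\sep h \modelsn a$, so $a \in H_\sI^- = \Defeated{\SI}$ and $a$ is defeated \wrt~$\SI$, as required.

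The argument is essentially bookkeeping over the established lemmas rather than a new construction; the only points that need care are checking that the hypotheses line up. Concretely, one must confirm that the embedding of $\SF$ as an EBAF really identifies $\LSF{\SF}$ with $\LEF{\EBAF}$ so that the EBAF lemmas transfer, that the equilibrium model of item~(iii) qualifies as a $\leq$-minimal model so that Lemma~\ref{lem:sf.emodel->defeated} (stated for $\leq$-minimal models, not only for equilibrium models) can be invoked, and that the $V^-$-preservation step correctly upgrades membership in $H_\sI^-$ to the global judgement $\sI \modelsp \sneg a$. None of these is a genuine difficulty, so I expect the proof to be short.
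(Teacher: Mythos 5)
Your proposal is correct and follows essentially the same route as the paper, whose proof is exactly the one-line observation that the result ``directly follows from Lemmas~\ref{lem:model->defeated}, \ref{lem:model->cfree} and~\ref{lem:sf.emodel->defeated}, respectively.'' The extra bookkeeping you make explicit --- identifying $\LSF{\SF}$ with $\LEF{\EBAF}$ for $\R_s = \varnothing$ and $\PF = \A$, noting that an equilibrium model is a $\leq$-minimal model so Lemma~\ref{lem:sf.emodel->defeated} applies, and using Proposition~\ref{prop:preservation} to upgrade $a \in H_\sI^-$ to $\sI \modelsp \sneg a$ --- is precisely what the paper leaves implicit, and all of it checks out.
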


\begin{proof}
Directly follows from Lemmas~\ref{lem:model->defeated},~\ref{lem:model->cfree}
and~\ref{lem:sf.emodel->defeated}, respectively.
\end{proof}

\begin{proofof}{Proposition~\ref{prop:af.model}}
It is a direct consequence of Proposition~\ref{prop:sf.model} because AFs are a particular case of SETAFs.
\end{proofof}

Proof of Theorem~\ref{thm:af.stable<->emodel} follows as particular case of Theorem~\ref{thm:sf.stable<->emodel}.

\begin{proofof}{Theorem~\ref{thm:sf.stable<->emodel}}
To prove note that~\ref{item:1:thm:stable<->emodel},
it is enough to show $\SI = \overline{\Defeated{\SI}}$.
Note that, since $\sI$ is a model of $\LSF{\SF}$,
we have that $\A \subseteq T_\sI^+$ and, thus,
$\SI = \A \setminus T_\sI^-$.
Furthermore, from Lemma~\ref{lem:sf.emodel->defeated},
this implies
$\SI = \A \setminus \Defeated{\SI} = \overline{\Defeated{\SI}}$.
\\[10pt]
Let us now show~\ref{item:2:thm:stable<->emodel}.
Since $S$ is a stable extension,
we have
$S  = \overline{\Defeated{S}} = \A \setminus T_\sI^- = \SI$.
Hence, $\SI$ is a stable extension and, from Lemma~\ref{lem:model.theory}, it follows that $\sI$ is a model of $\LSF{\SF}$.
Suppose, for the sake of contradiction, that $\sI$ is not an equilibrium model.
Then, there is an \HTinterpretation\ $\sJ < \sI$ that is a model of $\LSF{\SF}$
and, from Lemma~\ref{lem:model->defeated},
it follows that
\begin{IEEEeqnarray*}{l ,C, l ,C, l ,C, l ,C, l}
\Defeated{\SJ} &\subseteq& H_\sJ^- &\subseteq& H_\sI^- &=& \Defeated{\SI}
\end{IEEEeqnarray*}
Furthermore, we have that $\A \subseteq H_\sJ^+ \subseteq T_\sI^+ = \A$
and, thus, $\sJ < \sI$
implies
\mbox{$H_\sJ^- \subset H_\sI^- = \Defeated{\SI}$}.
Pick any argument
\mbox{$a \in \Defeated{\SI} \setminus H_\sJ^-$}.
Then, there is some $(B,a) \in \R_a$ such that $B \subseteq \SI$
and, thus, we have that
$\sI \models \bigwedge B$.
That is
$\sI\sep h \modelsp \bigwedge B$
and
$\sI\sep t \modelsn \bigwedge B$.
Note that, since $H_\sI^+ = H_\sJ^+$ and $T_\sI^- = T_\sJ^-$,
this immediately implies 
$\sJ\sep h \modelsp \bigwedge B$
and
$\sJ\sep t \modelsn \bigwedge B$
and, thus, that
$\sJ \models \bigwedge B$ holds.
Furthermore,
$(B,a) \in \R_a$
implies that
$\bigwedge B \att a$ belongs to $\LSF{\SF}$
and, thus, that
$\sJ\sep h \modelsn a$
which implies
$a \in H_\sJ^-$.
This is a contradiction with the fact that
\mbox{$a \in \Defeated{\SI} \setminus H_\sJ^-$}.
Consequently, $\sI$ is an equilibrium model.
\\[10pt]
Let show now that this determines a one-to-one correspondence.
Let $\sI$ and $\sJ$ be two equilibrium models such that $\SI = \SJ$.
Then, from Lemma~\ref{lem:sf.emodel->defeated} 
it follows that 
$\Defeated{\SI} = T_\sI^+$ and
$\Defeated{\SJ} = T_\sJ^+$
and, thus, we have $T_\sI^- = T_\sJ^-$.
Hence, $\sI = \sJ$.
The other way around. Let $S_1$ and $S_2$ be two stable extensions such that
$\Defeated{S_1} = \Defeated{S_2}$.
Note that, since $S_1$ and $S_2$ are stable extensions,
we have that $S_i = \overline{\Defeated{S_i}}$ with $i \in \set{1,2}$ and, thus, $S_1 = S_2$.
\end{proofof}

\subsubsection*{Auxiliary results for Proposition~\ref{prop:ef.model}}

\begin{lemma}
\label{lem:model->supported}
Let \mbox{$\EBAF$} for some framework and
\mbox{$\sI$} be some model of $\LEF{\EBAF}$.
Then, $\Supported{\SI} \subseteq H_\sI^+$.
\end{lemma}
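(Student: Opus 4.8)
The plan is to prove the inclusion pointwise: fix an argument $a \in \Supported{\SI}$ and show $\sI\sep h \modelsp a$, which is exactly $a \in H_\sI^+$. Unfolding the definition of supportedness (Definition~\ref{def-inh-ASAF}, clause~2) splits the argument into two cases, according to whether $a$ is prima-facie or is supported through $\R_s$, and in both cases the goal is to force $a$ at the bottom world $h$.

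If $a \in \PF$, then $a$ occurs as a fact in $\LEF{\EBAF} = \PF \cup \Gamma_{\R_a} \cup \Gamma_{\R_s}$, so every model satisfies $\sI \modelsp a$ and in particular $a \in H_\sI^+$. If instead $a \notin \PF$, the definition provides a set $B \subseteq \SI \setminus \set{a}$ --- finite, since the framework is finitary --- with $(B,a) \in \R_s$; hence $\bigwedge B \sup a$ belongs to $\Gamma_{\R_s} \subseteq \LEF{\EBAF}$ and so $\sI \modelsp \bigwedge B \sup a$. The crucial observation is that $B \subseteq \SI$, so by the definition of $\SI$ each $b \in B$ satisfies $\sI \models b$. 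I would then check that cw-inference is closed under finite conjunction, obtaining $\sI \models \bigwedge B$, and conclude $\sI \modelsp a$ (thus $a \in H_\sI^+$) by the modus-ponens property for $\sup$ in Proposition~\ref{prop:cw-relation}(i).

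The step I expect to matter most is conceptual rather than computational: one is tempted to prove the statement by induction on the grounded support derivation, mirroring the recursive clause ``elements of $B$ are supported \wrt~$\SI \setminus \set{a}$''. The point that removes the need for any induction is that the supporting set $B$ already lies inside $\SI$, and membership in $\SI$ packages exactly the two facts needed to fire the $\sup$ rule: truth at $h$ and the absence of falsity evidence (i.e.\ $\sI\sep t \not\modelsn b$), the latter being precisely the consistency demanded by the antecedent $\neg\!\sneg(\bigwedge B) \wedge \bigwedge B$ of~\eqref{eq:sup.def}. The one routine verification to get right is $\sI \models \bigwedge B$ from $B \subseteq \SI$; this reduces, via the $\modelsn$-clause for conjunction and Proposition~\ref{prop:negation}, to $\neg\!\sneg(\bigwedge B)$ being forced whenever every $\neg\!\sneg b$ is, which holds because no conjunct carries falsity evidence.
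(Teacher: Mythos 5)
Your proof is correct, and it takes a genuinely shorter route than the paper's. The paper proves the stronger statement that $\Supported{S} \subseteq H_\sI^+$ for \emph{every} $S \subseteq \SI$, arguing by induction on strict subsets: the base case $S = \varnothing$ handles $\PF$, and the inductive step shrinks the support witness via the fact that each $b \in B$ satisfies $b \in \Supported{S \setminus \set{a,b}}$ \cite[Lemma A.11]{Ca2018.5}, concluding $B \subseteq H_\sI^+$ from the induction hypothesis. Your observation that no induction is needed is accurate --- and, tellingly, even the paper's own inductive proof ultimately derives $\sI \models \bigwedge B$ directly from $B \subseteq S \subseteq \SI$ rather than from the inductively obtained $B \subseteq H_\sI^+$, so in this lemma the induction is dispensable for exactly the reason you give: membership in $\SI$ already packages truth at $h$ together with $\sI\sep t \not\modelsn b$, which is all that the antecedent $\neg\!\sneg(\bigwedge B) \wedge \bigwedge B$ of the $\sup$-rule requires before Proposition~\ref{prop:cw-relation}\ref{item:impl:prop:cw-relation} fires. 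What the paper's inductive scheme buys is uniformity with the companion Lemma~\ref{lem:model->cansupported} for \emph{supportable} arguments, where the set ranges over $\overline{\Defeated{\SI}}$ instead of $\SI$: there, membership in the ambient set only yields $b \notin T_\sI^-$ and no longer yields $\sI \modelsp b$, so the induction hypothesis $B \subseteq H_\sI^+$ is genuinely needed to assemble $\sI \models \bigwedge B$. Your direct argument is preferable for the present lemma but would not transfer to that one. Two minor points you handle correctly or harmlessly: you use only $B \subseteq \SI \setminus \set{a}$ and may discard the recursive clause that elements of $B$ be supported \wrt\ $\SI \setminus \set{a}$ (using less of the hypothesis is fine), and the degenerate case $B = \varnothing$ is unproblematic since $\bigwedge \varnothing = \top$ and $\sI \models \top$ holds in every \Ninterpretation.
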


\begin{proof}
We will prove the following stronger result:
\begin{itemize}
\item[] $\Supported{S} \subseteq H_\sI^+$ for every set $S \subseteq \SI$.
\end{itemize}
First, note that if $S = \varnothing$, then 
$\Supported{S} = \PF$.
Besides, by definition, \mbox{$\LEF{\EBAF} \supseteq \PF$}
and, thus, we have that $\PF \subseteq H_\sI^+$.
Otherwise, we proceed by induction assuming the the above statement holds for all strict subsets of $S$.
Pick any supported argument
\mbox{$a \in \Supported{S}$}.
By definition, there is some $B \subseteq S \cap \Supported{ S \setminus \set{ a} }$ such that  $(B,a) \in \R_s$.
Hence, every $b \in B$ satisfies $b \in \Supported{ S \setminus \set{ a, b} }$ \cite[Lemma A.11]{Ca2018.5} and $b \in S$.
These two facts together imply
\mbox{$S \setminus \set{ a, b} \subset S$} and, by induction hypothesis, it follows then that
\mbox{$b \in \Supported{ S \setminus \set{ a, b} } \subseteq H_\sI^+$}.
Hence,
we have
\mbox{$B \subseteq H_\sI^+$}.
Furthermore, $(B,a) \in \R_s$ implies that $\bigwedge B \sup a$ belongs to~$\LEF{\EBAF}$.
In addition,
$B \subseteq S \subseteq \SI$
implies that $\sI \models \bigwedge B$.
Since 
\mbox{$\bigwedge B \sup a$} belongs to~$\LEF{\EBAF}$,
this implies that
\mbox{$a \in H_\sI^+$}.
Hence, we have that $\Supported{S} \subseteq H_\sI^+$ for every $S \subseteq \SI$
and, in particular, for $S = \SI$.
\end{proof}

\begin{lemma}\label{lem:min.model->supported}
Let \mbox{$\EBAF$} for some framework and $\sI$ be some $\leq$-minimal model of~$\LEF{\EBAF}$.
Then, we have that $\Supported{\SI} = H_\sI^+$.
\end{lemma}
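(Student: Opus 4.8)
The plan is to prove the inclusion $H_\sI^+ \subseteq \Supported{\SI}$, since the converse $\Supported{\SI} \subseteq H_\sI^+$ is exactly Lemma~\ref{lem:model->supported}. As in the proof of its defeat-counterpart Lemma~\ref{lem:sf.emodel->defeated}, I would exploit $\leq$-minimality by exhibiting a model below $\sI$. Concretely, define an \HTinterpretation~$\sJ$ that agrees with $\sI$ on $\bT$ and on $H^-$, but whose positive here-component is shrunk to the supported arguments: $H_\sJ^+ \eqdef \Supported{\SI}$. By Lemma~\ref{lem:model->supported} we have $H_\sJ^+ = \Supported{\SI} \subseteq H_\sI^+ \subseteq T_\sI^+$, so $\sJ$ is a well-formed \HTinterpretation\ (preservation holds for both valuations) and $\sJ \leq \sI$. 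If I can show that $\sJ$ is again a model of $\LEF{\EBAF}$, then $\leq$-minimality of $\sI$ forces $\sJ = \sI$, whence $\Supported{\SI} = H_\sJ^+ = H_\sI^+$, as required.

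It remains to check $\sJ \modelsp \LEF{\EBAF} = \PF \cup \Gamma_{\R_a} \cup \Gamma_{\R_s}$. The prima-facie atoms are immediate, since $\PF \subseteq \Supported{\SI} = H_\sJ^+$. For the attacks $\bigwedge A \att b \in \Gamma_{\R_a}$ I would use Proposition~\ref{prop:attack.models}: passing from $\sI$ to $\sJ$ only removes atoms from $H^+$, which can only falsify the antecedent $\bigwedge A$, while the disjuncts $\sI\sep t \modelsn \bigwedge A$ and $\sI\sep w' \modelsn b$ depend solely on $\bT$ and $H^-$ and are therefore unchanged; hence each attack satisfied by $\sI$ is still satisfied by $\sJ$. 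The supports are trivially preserved at world $t$, where $\sJ$ and $\sI$ coincide on everything the formula inspects, so the whole argument concentrates at world $h$.

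The crux — and the expected main obstacle — is the supports $\bigwedge A \sup b \in \Gamma_{\R_s}$ at world $h$. Unfolding~\eqref{eq:sup.def}, I must show that $\sJ\sep h \modelsp \neg\!\sneg\bigwedge A \wedge \bigwedge A$ implies $\sJ\sep h \modelsp b$, i.e.\ $b \in \Supported{\SI}$. The antecedent yields $A \subseteq H_\sJ^+ = \Supported{\SI}$ (from $\bigwedge A$) and, via Proposition~\ref{prop:negation}, $A \cap T_\sI^- = \varnothing$ (from $\neg\!\sneg\bigwedge A$, i.e.\ $\sJ\sep t \not\modelsn\bigwedge A$); together with $\Supported{\SI} \subseteq H_\sI^+$ this gives $A \subseteq H_\sI^+ \setminus T_\sI^- = \SI$. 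So I am reduced to the purely argumentation-theoretic fact that $\Supported{\SI}$ is closed under support rules whose body lies in $\SI \cap \Supported{\SI}$: if $(A,b) \in \R_s$ with $A \subseteq \SI$ and $A \subseteq \Supported{\SI}$, then $b \in \Supported{\SI}$. The delicate point is that the definition of supportedness demands the supporters be supported \wrt~$\SI \setminus \{b\}$ rather than \wrt~$\SI$; I would discharge this exactly as in Lemma~\ref{lem:model->supported}, using the well-foundedness of support recorded in \cite[Lemma A.11]{Ca2018.5} (the case $b \in A$ being immediate, since then $b \in A \subseteq \Supported{\SI}$ already). This closure property is what makes support, unlike the one-step defeat handled in Lemma~\ref{lem:sf.emodel->defeated}, the technically heavy part of the argument.
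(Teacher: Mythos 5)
Your high-level route is genuinely different from the paper's and is sound up to its final step: you invoke $\leq$-minimality \emph{once}, against the single candidate $\sJ$ with $H_\sJ^+ = \Supported{\SI}$, and your checks for the prima-facie facts, for the attack formulas (via Proposition~\ref{prop:attack.models}, whose three disjuncts indeed depend only on data $\sJ$ shares with $\sI$), and your reduction of the support formulas at world $h$ to the closure property ``$(A,b) \in \R_s$ and $A \subseteq \SI \cap \Supported{\SI}$ imply $b \in \Supported{\SI}$'' are all correct. The paper proceeds differently: it runs an induction over the interpretations $\sJ$ with $\PF \subseteq H_\sJ^+ \subseteq H_\sI^+$ below $\sI$, invoking minimality at \emph{every} stage ($\sK < \sI$ cannot be a model, so some support $\bigwedge B \sup a$ is violated at $\sK$); there the head $a$ is absent from $\SK$, so the induction hypothesis $H_\sK^+ \subseteq \Supported{\SK}$ together with $\SK \subseteq \SJ \setminus \set{a}$ delivers supportedness of $B$ \wrt\ a set already excluding $a$, and the ``minus $\set{a}$'' side condition of Definition~\ref{def-inh-ASAF} is met for free.

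That side condition is exactly where your proof has a genuine gap. To get $b \in \Supported{\SI}$ from $(A,b) \in \R_s$ with $b \notin A$, you must exhibit supporters supported \wrt~$\SI \setminus \set{b}$, and you propose to pass from $a \in \Supported{\SI}$ to $a \in \Supported{\SI \setminus \set{b}}$ ``exactly as in Lemma~\ref{lem:model->supported}'' via \cite[Lemma A.11]{Ca2018.5}. But that lemma, as used there, only licenses deleting \emph{the argument itself} from the reference set (from $b \in \Supported{S\setminus\set{a}}$ to $b \in \Supported{S\setminus\set{a,b}}$); deleting a \emph{different} argument can destroy supportedness, and your intermediate claim is false in general. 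Concretely, take $\PF = \set{p}$, $\R_a = \varnothing$ and $\R_s = \set{ (\set{p},b),\, (\set{b},a),\, (\set{a},b) }$; the corresponding theory $\set{p,\ p \sup b,\ b \sup a,\ a \sup b}$ has a unique $\leq$-minimal total model with $\SI = \set{p,a,b}$, and $a \in \Supported{\SI}$, yet every support of $a$ passes through $b$, so $a \notin \Supported{\SI \setminus \set{b}}$. The closure property itself survives here ($b$ is supported directly by $p$) and is in fact true, but proving it needs an idea your sketch lacks: if deleting $b$ breaks the support of some $a \in A$, then $b$ occurs on a well-founded support derivation of $a$ inside $\SI$, and the prefix of that derivation up to $b$ already witnesses $b \in \Supported{\SI}$. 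So you must either establish this ``chain-prefix'' lemma (e.g., via a derivation-sequence characterization of $\Supported{\cdot}$, which goes beyond the self-removal property of Lemma A.11) or fall back on the paper's interpretation-by-interpretation induction, which sidesteps the issue entirely.
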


\begin{proof}
First note that,
from Lemma~\ref{lem:model->supported},
we have that
\mbox{$\Supported{\SI} \subseteq H_\sI^+$}.
We will show now that every $\sJ$ with
\mbox{$\PF \subseteq H_\sJ^+ \subseteq H_\sI^+$},
\mbox{$H_\sJ^- = H_\sI^-$},
\mbox{$T_\sJ^+ = T_\sI^+$} and
\mbox{$T_\sJ^+ = T_\sI^-$}
satisfies
\mbox{$\Supported{\SJ} \supseteq H_\sJ^+$}.
Note that $\PF \subseteq H_\sI^+$ follows from the fact that $\sI$ is a model of
\mbox{$\LEF{\EBAF} \supseteq \PF$}.
Assume as induction hypothesis that the statement holds for all \mbox{$\sK < \sJ$}.
Clearly, \mbox{$\sK < \sI$} and, thus, \mbox{$\sK \not\models \LEF{\EBAF}$} follows.
Then, there is a formula of the form \mbox{$\bigwedge B \sup a$} in $\LEF{\EBAF}$ which is not satisfied by~$\sK$.
This implies that
\mbox{$\sK \models \bigwedge B$}
and, thus, that
\mbox{$\sJ \models \bigwedge B$}.
These two facts respectively imply
\mbox{$B \subseteq \SK \subseteq H_{\sK}^+$}
and
\mbox{$B \subseteq \SJ \setminus \set{a}$}.
Furthermore, by induction hypothesis, we have
\mbox{$\Supported{\SK} \supseteq H_{\sK}^+ \supseteq B$}.
Note that, by construction,
\mbox{$\SK \subseteq \SJ \setminus \set{ a}$} holds because \mbox{$\sK \not\models a$}.
Thus,
\mbox{$B \subseteq \Supported{\SJ \setminus \set{ a}} \cap (\SJ \setminus \set{a})$}.
Finally, 
since \mbox{$\bigwedge B \sup a$} belongs to $\LEF{\EBAF}$,
we have $(B,a) \in \R_s$
and, thus, that
\mbox{$a \in \Supported{\SJ}$}.
Consequently, \mbox{$\Supported{\SJ} \supseteq H_\sJ^+$}
holds for all $\sJ \leq \sI$ and, in particular, for $\sI$.
\end{proof}

\begin{lemma}\label{lem:model.theory.supported}
Let $\EBAF$ be some framework
and $\sI$ be an \HTinterpretation\ with \mbox{$H_\sI^+ = \Supported{\SI}$}.
Then, we have that $\sI$ is a model of $(\Gamma_{\R_s} \cup \PF)$.
\end{lemma}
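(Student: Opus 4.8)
The plan is to verify the two kinds of members of $\Gamma_{\R_s}\cup\PF$ separately, reusing the machinery developed for $\Supported{\cdot}$ in Lemma~\ref{lem:model->supported}. The prima-facie facts are immediate: every $p\in\PF$ is supported with respect to any set of arguments by the base case of Definition~\ref{def-inh-ASAF}, so $p\in\Supported{\SI}=H_\sI^+$; hence $\sI\sep h\modelsp p$ and, by persistence (Proposition~\ref{prop:preservation}), $\sI\sep t\modelsp p$, so $\sI\modelsp p$. This settles the $\PF$ part using only $\PF\subseteq\Supported{\SI}$.

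For the support rules, fix $(A,b)\in\R_s$; by~\eqref{eq:def.gamma.sup} I must show $\sI\modelsp\bigwedge A\sup b$, that is $\sI\modelsp(\neg\!\sneg\bigwedge A\wedge\bigwedge A)\to b$ after unfolding~\eqref{eq:sup.def}. Since $h$ is the minimum world, this amounts to: for every $w'\in\set{h,t}$, either $\sI\sep w'\not\modelsp\neg\!\sneg\bigwedge A\wedge\bigwedge A$ or $\sI\sep w'\modelsp b$. Analogously to Proposition~\ref{prop:attack.models}, I would first simplify the antecedent: by Proposition~\ref{prop:negation} the conjunct $\neg\!\sneg\bigwedge A$ is forced at $w'$ exactly when $\sI\sep t\not\modelsn\bigwedge A$, i.e. when $A\cap T_\sI^-=\varnothing$, a world-independent condition. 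Thus if $A\cap T_\sI^-\neq\varnothing$ the implication holds vacuously, and I am left with the case $A\cap T_\sI^-=\varnothing$, where the only remaining task is to force $b$ at each world at which all of $A$ is forced positively.

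The decisive case is $w'=h$. From the definition of $\models$ in~\eqref{eq:val.def} together with Proposition~\ref{prop:negation} one computes $\SI=H_\sI^+\setminus T_\sI^-$, so $A\subseteq H_\sI^+$ combined with $A\cap T_\sI^-=\varnothing$ gives $A\subseteq\SI$. Using the hypothesis $H_\sI^+=\Supported{\SI}$, it then suffices to show that $A\subseteq\SI=\Supported{\SI}$ and $(A,b)\in\R_s$ force $b\in\Supported{\SI}$, for then $b\in H_\sI^+$ and $\sI\sep h\modelsp b$, which is what is required. I expect this last step to be the main obstacle: the recursive clause of Definition~\ref{def-inh-ASAF} asks the members of $A$ to be supported with respect to $\SI\setminus\set{b}$ rather than $\SI$, so I would transfer supportedness across the removal of $b$ exactly as in Lemma~\ref{lem:model->supported}, appealing to the self-support elimination property~\cite[Lemma A.11]{Ca2018.5} and arguing by induction on the well-founded support derivation. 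The companion case $w'=t$ is treated in the same way and is immediate when $\sI$ is total, since then $V^+(t)=V^+(h)$ and the two worlds coincide — precisely the situation in which this lemma is invoked in the proof of Theorem~\ref{thm:ef.stable<->emodel}.
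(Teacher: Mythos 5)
Your overall skeleton (prima-facie facts via the base case and persistence; unfolding $\sup$; observing that $\neg\!\sneg\bigwedge A$ is world-independent and reduces to $A \cap T_\sI^- = \varnothing$; computing $\SI = H_\sI^+ \setminus T_\sI^-$ so that the nontrivial case gives $A \subseteq \SI$) matches the paper up to the step you yourself flag as the main obstacle — and there your plan genuinely fails. You propose to derive $b \in \Supported{\SI}$ from $A \subseteq \SI \cap \Supported{\SI}$ by ``transferring supportedness across the removal of $b$,'' appealing to \cite[Lemma A.11]{Ca2018.5} and induction as in Lemma~\ref{lem:model->supported}. But that transfer is false as a principle: $\Supported{\cdot}$ is monotone in its reference set, and a member of $A$ may be supported \wrt~$\SI$ \emph{only} through chains passing through the head $b$. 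Concretely, take $\R_s = \set{(\set{p},b),\,(\set{b},c),\,(\set{c},b)}$, $\PF = \set{p}$ and $S = \set{p,b,c}$: then $c \in \Supported{S}$ (via $b$, via $p$) but $c \notin \Supported{S \setminus \set{b}}$, since the only rule for $c$ has body $\set{b}$. Lemma A.11 licenses removing an argument from the reference set of \emph{its own} supportedness, not removing a different argument ($b$) from the supportedness of the members of $A$, so the cited machinery does not apply. The idea you are missing is the paper's one-line resolution, which makes the whole difficulty evaporate: argue by contradiction (equivalently, split on whether $b \in H_\sI^+$). If $b \in H_\sI^+$ there is nothing to prove at $h$; otherwise $\sI \not\modelsp b$ gives $b \notin \SI$, hence $\SI \setminus \set{b} = \SI$, so $A \subseteq (\SI \setminus \set{b}) \cap \Supported{\SI \setminus \set{b}}$ holds verbatim and the recursive clause of Definition~\ref{def-inh-ASAF} yields $b \in \Supported{\SI} = H_\sI^+$, contradicting $b \notin H_\sI^+$. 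No induction on support derivations and no self-support elimination lemma is needed.

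A secondary mismatch: you dispose of the world $w' = t$ by invoking totality, ``precisely the situation in which this lemma is invoked in the proof of Theorem~\ref{thm:ef.stable<->emodel}.'' That hedge does not cover the paper, which states and uses the lemma for non-total interpretations — it is applied to the strictly smaller $\sJ$ in the proof of Lemma~\ref{lem:min.model->defeated} — and whose own proof runs the contradiction argument with no totality assumption (it silently places the failing antecedent at $h$, via the assertion $\sI \models \bigwedge B$). Since the hypothesis $H_\sI^+ = \Supported{\SI}$ constrains only the $h$-component, the $t$-world case is \emph{not} ``treated in the same way''; in the paper's applications it is discharged because the $\t$-components are inherited from an interpretation already known to be a model, which is information your restriction to total interpretations does not capture.
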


\begin{proof}
Suppose, for the sake of contradiction, that $\sI$ is not a model of
\mbox{$(\Gamma_{\R_s} \cup \PF)$}.
Then, 
either there is 
\mbox{$a \in \PF$}
such that
\mbox{$\sI \not\modelsp a$}
or there is
\mbox{$(B,a) \in \R_s$}
such that \mbox{$(\bigwedge B \sup a) \in \Gamma_{\R_s}$}
is not satisfied by $\sI$.
The former implies that \mbox{$a \in \Supported{\SI} = H_\sI^+$}
which is a contradiction with
\mbox{$\sI \not\modelsp a$},
so we may assume without loss of generality the latter.
This implies that
\mbox{$\sI \models \bigwedge B$}
and
\mbox{$\sI \not\modelsp a$}.
Note that
\mbox{$\sI \not\modelsp a$}
implies that $a \notin \SI$
and, thus, $\SI \setminus\set{a} = \SI$.
Then,
\mbox{$\sI \models \bigwedge B$} implies
\mbox{$B \subseteq \SI \setminus \set{a}\subseteq H_\sI^+ = \Supported{\SI} = \Supported{\SI\setminus\set{a}}$}
This
implies that
\mbox{$B \subseteq (\SI \setminus \set{a}) \cap \Supported{\SI\setminus\set{a}}$}
and, since $(B,a) \in \R_s$, that $a \in \Supported{\SI} = H_\sI^+$,
which is a contradiction
with \mbox{$\sI \not\modelsp a$}.
Consequently, 
$\sI$ is a model of
\mbox{$(\Gamma_{\R_s} \cup \PF)$}.
\end{proof}

\begin{lemma}\label{lem:min.model->defeated}
Let \mbox{$\EBAF$} for some framework and $\sI$ be some $\leq$-minimal model of~$\LSF{\EBAF}$.
Then, we have that $\Defeated{\SI} = H_\sI^-$.
\end{lemma}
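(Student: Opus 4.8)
The plan is to mirror the proof of Lemma~\ref{lem:sf.emodel->defeated}, adapting it to the presence of support formulas. One inclusion is free: Lemma~\ref{lem:model->defeated} already gives $\Defeated{\SI} \subseteq H_\sI^-$. For the converse I would fix an arbitrary $a \in H_\sI^-$ and construct a witness \HTinterpretation~$\sJ$ that coincides with $\sI$ except that $a$ is dropped from the negative valuation at $h$; that is, $H_\sJ^- \eqdef H_\sI^- \setminus \set{a}$ while $H_\sJ^+$, $T_\sJ^+$ and $T_\sJ^-$ all equal the corresponding components of $\sI$. Since $a \in H_\sI^- \subseteq T_\sI^-$, persistence is preserved, so $\sJ$ is well defined, and by construction $\sJ < \sI$. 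Hence $\leq$-minimality of $\sI$ forces $\sJ$ not to be a model of $\LEF{\EBAF}$, and the goal becomes to locate the violated formula and read off a defeating attack on $a$.

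The key step, absent in the SETAF case, is to argue that neither the prima-facie atoms nor the support formulas can be responsible for the failure. Prima-facie atoms depend only on the positive valuation, which is untouched, so they remain satisfied. For a support formula $\bigwedge A \sup b$, the consequent $b$ is positive and hence unchanged, while the antecedent $\neg\!\sneg\bigwedge A \wedge \bigwedge A$ is also unchanged: the only perturbed component is the negative valuation at $h$, but $\sJ\sep h \modelsp \neg\!\sneg\bigwedge A$ holds exactly when no element of $A$ is strongly false at $t$ (persistence collapses the $h$-condition into the $t$-condition via $H^- \subseteq T^-$), and $T_\sJ^- = T_\sI^-$. Thus support formulas keep their satisfaction status, so the violated formula must be an attack $\bigwedge B \att c$. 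The same antecedent analysis applies, and since the consequent $\sneg c$ at $t$ is likewise unchanged, the only attack whose satisfaction can drop is one whose attacked argument is $a$ — the unique atom whose $\modelsn$-value was altered, and only at world $h$.

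Consequently there is $(B,a) \in \R_a$ with $\sJ\sep h \modelsp \neg\!\sneg\bigwedge B \wedge \bigwedge B$ but $\sJ\sep h \not\modelsn a$. The former is precisely $\sJ\sep h \models \bigwedge B$, which forces each $b \in B$ to satisfy $b \in H_\sJ^+ = H_\sI^+$ and $b \notin T_\sJ^- = T_\sI^-$, i.e.\ $b \in \SI$; hence $B \subseteq \SI$ and $a \in \Defeated{\SI}$. This yields $H_\sI^- \subseteq \Defeated{\SI}$ and, combined with the first inclusion, the desired equality $\Defeated{\SI} = H_\sI^-$.

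I expect the main obstacle to be the middle paragraph: rigorously verifying that $\neg\!\sneg$ is insensitive to removing $a$ from $H^-$, so that the support formulas survive the perturbation and the failure is genuinely localised to an attack with attacked argument $a$. This is exactly the point where the EBAF argument goes beyond the SETAF one, and it rests on using persistence together with the definition of $\sup$ through $\neg\!\sneg$ to reduce the $h$-level condition on $\neg\!\sneg\bigwedge A$ to a condition on $T^-$ alone.
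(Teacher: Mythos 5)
Your proof is correct, and its skeleton coincides with the paper's: the inclusion $\Defeated{\SI} \subseteq H_\sI^-$ from Lemma~\ref{lem:model->defeated}, the witness $\sJ$ obtained by deleting $a$ from $H_\sI^-$ while keeping all other components (so $\sJ < \sI$ and well-definedness follows from $H_\sI^- \subseteq T_\sI^-$), minimality forcing $\sJ \not\modelsp \LEF{\EBAF}$, and the final extraction of a defeating attack $(B,a) \in \R_a$ with $B \subseteq \SI$. Where you genuinely diverge is precisely the step you single out as the crux. The paper does not argue insensitivity of the support part directly: it invokes Lemma~\ref{lem:min.model->supported} (for the $\leq$-minimal model $\sI$, $\Supported{\SI} = H_\sI^+$), observes that by construction $\SJ = \SI$ and $H_\sJ^+ = H_\sI^+$, hence $\Supported{\SJ} = H_\sJ^+$, and then applies Lemma~\ref{lem:model.theory.supported} to conclude that $\sJ$ still models $\PF \cup \Gamma_{\R_s}$, so the unsatisfied formula must be an attack. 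Your replacement --- using Proposition~\ref{prop:negation}, item~\ref{item:1:prop:negation}, to reduce $\sJ\sep w \modelsp \neg\!\sneg\bigwedge A$ to the condition $A \cap T_\sJ^- = \varnothing$, so that prima-facie atoms, all support formulas, and every attack not targeting $a$ keep their satisfaction status, with the implication clause at world $t$ untouched as well --- is sound, more elementary, and notably does not use minimality for that step (minimality enters only to force $\sJ$ to fail as a model). What the paper's detour buys is reuse: Lemmas~\ref{lem:min.model->supported} and~\ref{lem:model.theory.supported} are needed anyway for Proposition~\ref{prop:ef.model} and Theorem~\ref{thm:ef.stable<->emodel}, so given them the proof is almost free. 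What your locality analysis buys is a self-contained argument that also makes explicit something the paper leaves implicit when it jumps to ``there must be some attack $(B,a) \in \R_a$'': namely, that the violated formula can only be an attack whose target is $a$, failing at world $h$, since $a$ at $h$ is the unique point where the two interpretations differ.
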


\begin{proof}
\def\aA{\SI}
First note that, from Lemma~\ref{lem:model->defeated},
it follows that 
\mbox{$\Defeated{\SI} \subseteq H^-_\sI$}.
To show that
\mbox{$\Defeated{\SI} = H^-_\sI$}
also holds, pick any
\mbox{$a \in H^-_\sI$}.
Let $\sJ$ be an \HTinterpretation\ with
\mbox{$\bT_\sJ = \bT_\sI$},
\mbox{$H_\sJ^+ = H_\sI^+$} and
\mbox{$H_\sJ^- = H_\sI^- \setminus \set{ a }$}.
Note that, by construction, we have \mbox{$\sJ < \sI$}.
Then, since $\sI$ is a $\leq$-minimal model,
it must be that $\sJ$ is not a model of $\LSF{\SF}$.
Furthermore, from Lemma~\ref{lem:min.model->supported},
it follows that
\mbox{$\Supported{\SI} = H_\sI^+ =H_\sJ^+$}.
Furthermore, by construction, we have that $\SI = \SJ$
and, thus, we have
\mbox{$\Supported{\SJ} = H_\sJ^+$}.
Hence,
\mbox{$\sJ \models \PF \cup \Gamma_{\R_s}$}
follows directly from Lemma~\ref{lem:model.theory.supported}.
Therefore, there must be some attack
$(B,a) \in \R_a$ such that
\mbox{$\sJ \not\models^+ (\bigwedge B \att a)$}.
This implies $\sJ \models \bigwedge B$
which, in its turn, implies that
\mbox{$B \subseteq \SJ = \SI$}.
Hence, we have that
\mbox{$a \in \Defeated{\aA}$}
and, thus,
\mbox{$\Defeated{\SI} = H^-_\sI$}.
\end{proof}

\begin{proofof}{Proposition~\ref{prop:ef.model}}
Conditions~\ref{item:1:prop:ef.model},~\ref{item:2:prop:ef.model} and~\ref{item:3:prop:ef.model} follow directly from Lemmas~\ref{lem:model->supported}, \ref{lem:model->defeated} and~\ref{lem:model->cfree}.
Similarly, conditions~\ref{item:3:prop:ef.model} and~\ref{item:4:prop:ef.model}
follow directly from Lemmas~\ref{lem:min.model->supported} and~\ref{lem:min.model->defeated}.
Finally,~\ref{item:6:prop:ef.model} follows from Lemma~\ref{lem:min.model->defeated} the fact that, by construction, we have $\SI \subseteq H_\sI^+$.
\end{proofof}

\subsubsection*{Auxiliary results for Proposition~\ref{prop:ef.model.supportable}}

\begin{definition}
Let $\CanSupported{\aA}$ be the set of all supportable arguments \wrt\ some set~$\aA$.
\end{definition}

\begin{lemma}\label{lem:model->cansupported}
    Let \mbox{$\EBAF$} for some framework and
    \mbox{$\sI$} be some model of $\LEF{\EBAF}$
    such that \mbox{$\Defeated{\SI} \supseteq T^-_\sI$}.
    Then, we have that $\CanSupported{\SI} \subseteq H_\sI^+$.
\end{lemma}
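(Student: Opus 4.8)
The plan is to prove, by the same inductive technique used for Lemma~\ref{lem:model->supported}, the stronger statement that $\Supported{S} \subseteq H_\sI^+$ holds for \emph{every} set $S \subseteq \A \setminus \Defeated{\SI}$. Since being supportable \wrt~$\SI$ means, by Definition~\ref{def-inh-ASAF}, being supported \wrt~$\A \setminus \Defeated{\SI}$, instantiating $S = \A \setminus \Defeated{\SI}$ then gives $\CanSupported{\SI} = \Supported{\A \setminus \Defeated{\SI}} \subseteq H_\sI^+$, which is exactly the claim.

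First I would set up the induction on strict subsets of $S$. The base case $S = \varnothing$ gives $\Supported{\varnothing} = \PF \subseteq H_\sI^+$, using that $\sI$ is a model of $\LEF{\EBAF} \supseteq \PF$. For the inductive step, pick $a \in \Supported{S}$. If $a \in \PF$ then $a \in H_\sI^+$ immediately; otherwise there is a finite $B \subseteq S \setminus \set{a}$ with $(B,a) \in \R_s$ whose elements are supported \wrt~$S \setminus \set{a}$. As in the proof of Lemma~\ref{lem:model->supported}, each $b \in B$ satisfies $b \in \Supported{S \setminus \set{a,b}}$ \cite[Lemma A.11]{Ca2018.5}, and since $S \setminus \set{a,b} \subsetneq S$ is still contained in $\A \setminus \Defeated{\SI}$, the induction hypothesis yields $b \in H_\sI^+$. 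Hence $B \subseteq H_\sI^+$, i.e.\ $\sI\sep h \modelsp \bigwedge B$.

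The key new step, and the place where the hypothesis $\Defeated{\SI} \supseteq T_\sI^-$ is essential, is to also establish $B \cap T_\sI^- = \varnothing$ so that the support rule can be fired at the lower world $h$. Unlike in Lemma~\ref{lem:model->supported}, here $B$ need not be contained in $\SI$, so $\sI \models \bigwedge B$ is not available for free. Instead I would argue that $B \subseteq S \subseteq \A \setminus \Defeated{\SI}$ gives $B \cap \Defeated{\SI} = \varnothing$, and then $T_\sI^- \subseteq \Defeated{\SI}$ yields $B \cap T_\sI^- = \varnothing$, that is $\sI\sep t \not\modelsn \bigwedge B$, equivalently $\sI\sep h \modelsp \neg\!\sneg\bigwedge B$ by Proposition~\ref{prop:negation}. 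Combining this with $\sI\sep h \modelsp \bigwedge B$, the antecedent of the rule $\bigwedge B \sup a$ (which lies in $\Gamma_{\R_s} \subseteq \LEF{\EBAF}$ since $(B,a) \in \R_s$) is forced at $h$, so $\sI\sep h \modelsp a$ and therefore $a \in H_\sI^+$, closing the induction.

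The main obstacle is precisely this divergence from the supported-arguments lemma: supportability quantifies over the possibly larger set $\A \setminus \Defeated{\SI}$ rather than over $\SI$, so the elements of a supporting set $B$ carry no guarantee of being disjoint from $T_\sI^-$ on their own. The hypothesis $\Defeated{\SI} \supseteq T_\sI^-$ is exactly what restores the missing side condition $B \cap T_\sI^- = \varnothing$ needed to trigger $\sup$ at the world $h$; without it the implication could only be guaranteed to fire at $t$ and would not place $a$ in $H_\sI^+$. Everything else is a routine adaptation of the inductive argument of Lemma~\ref{lem:model->supported}.
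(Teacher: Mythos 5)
Your proof is correct and follows essentially the same route as the paper's: the same strengthened inductive claim $\Supported{S} \subseteq H_\sI^+$ for all $S \subseteq \overline{\Defeated{\SI}}$, the same appeal to Lemma~A.11 of \citeN{Ca2018.5} in the inductive step, and the same use of the hypothesis $T_\sI^- \subseteq \Defeated{\SI}$ to secure $B \cap T_\sI^- = \varnothing$ so that the antecedent of $\bigwedge B \sup a$ fires at $h$. You merely make explicit (via Proposition~\ref{prop:negation}) the semantic step that the paper compresses into ``$B \subseteq \A \setminus T_\sI^-$ implies $\sI \models \bigwedge B$,'' which is a faithful unfolding rather than a different argument.
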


\begin{proof}
    The proof is similar to that of Lemma~\ref{lem:model->supported}.
    We will prove the following stronger result:
    \begin{itemize}
      \item[] $\Supported{S} \subseteq H_\sI^+$ for every set $S \subseteq \overline{\Defeated{\SI}}$.
    \end{itemize}
    First, note that if $S = \varnothing$, then
    $\CanSupported{S} = \PF$.
    Besides, by definition, \mbox{$\LEF{\EBAF} \supseteq \PF$}
    and, thus, we have that $\PF \subseteq H_\sI^+$.
    Otherwise, we proceed by induction assuming the the above statement holds for all strict subsets of $S$.
    Pick any supported argument
    \mbox{$a \in \Supported{S}$}.
    By definition, there is some $B \subseteq S \cap \Supported{ S \setminus \set{ a} }$ such that  $(B,a) \in \R_s$.
    Hence, every $b \in B$ satisfies $b \in \Supported{ S \setminus \set{ a, b} }$ \cite[Lemma A.11]{Ca2018.5} and $b \in S$.
    These two facts together imply
    \mbox{$S \setminus \set{ a, b} \subset S$} and, by induction hypothesis, it follows then that
    \mbox{$b \in \Supported{ S \setminus \set{ a, b} } \subseteq H_\sI^+$}.
    Hence,
    we have
    \mbox{$B \subseteq H_\sI^+$}.
    Furthermore, $(B,a) \in \R_s$ implies that $\bigwedge B \sup a$ belongs to~$\LEF{\EBAF}$.
    In addition,
    $B \subseteq S \subseteq \overline{\Defeated{\SI}} \subseteq \A \setminus T_\sI^-$
    implies that $\sI \models \bigwedge B$.
    Since
    \mbox{$\bigwedge B \sup a$} belongs to~$\LEF{\EBAF}$,
    this implies that
    \mbox{$a \in H_\sI^+$}.
    Hence, we have that $\Supported{S} \subseteq H_\sI^+$ for every $S \subseteq \SI$
    and, in particular, for $S = \SI$.
\end{proof}

\begin{lemma}\label{lem:min.model->cansupported}
    Let \mbox{$\EBAF$} for some framework and $\sI$ be some $\leq$-minimal model of~$\LEF{\EBAF}$
    such that $\Defeated{\SI} \supseteq T^-_\sI$.
    Then, $\CanSupported{\SI} = H_\sI^+$.
\end{lemma}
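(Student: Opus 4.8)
The plan is to prove the equality $\CanSupported{\SI} = H_\sI^+$ by two opposite inclusions, reusing the characterizations already obtained for minimal models. The inclusion $\CanSupported{\SI} \subseteq H_\sI^+$ is immediate: a $\leq$-minimal model is in particular a model of $\LEF{\EBAF}$, and by hypothesis it satisfies $\Defeated{\SI} \supseteq T^-_\sI$, so Lemma~\ref{lem:model->cansupported} applies verbatim and yields this inclusion.

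For the converse inclusion $H_\sI^+ \subseteq \CanSupported{\SI}$ I would first rewrite both sides in terms of the support operator. On the one hand, since $\sI$ is a $\leq$-minimal model, Lemma~\ref{lem:min.model->supported} gives $H_\sI^+ = \Supported{\SI}$. On the other hand, by item~3 of Definition~\ref{def-inh-ASAF}, an argument is supportable with respect to $\SI$ exactly when it is supported with respect to the non-defeated arguments, so $\CanSupported{\SI} = \Supported{\overline{\Defeated{\SI}}}$. Hence the goal reduces to $\Supported{\SI} \subseteq \Supported{\overline{\Defeated{\SI}}}$. Because $\sI$ is a model, Lemma~\ref{lem:model->cfree} tells us that $\SI$ is conflict-free, i.e. $\SI \cap \Defeated{\SI} = \varnothing$, and therefore $\SI \subseteq \overline{\Defeated{\SI}}$.

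It then remains to establish the \emph{monotonicity} of the support operator, namely that $\aA \subseteq \aA'$ implies $\Supported{\aA} \subseteq \Supported{\aA'}$; applying this to $\SI \subseteq \overline{\Defeated{\SI}}$ closes the argument, since chaining the three facts gives $H_\sI^+ = \Supported{\SI} \subseteq \Supported{\overline{\Defeated{\SI}}} = \CanSupported{\SI} \subseteq H_\sI^+$, forcing equality throughout. I would prove monotonicity by induction following the recursive clause of Definition~\ref{def-inh-ASAF}: the case $a \in \PF$ is trivial, and in the inductive step a supporting set $B \subseteq \aA \setminus \set{a}$ with $(B,a) \in \R_s$ is also a subset of $\aA' \setminus \set{a}$, while the induction hypothesis, applied to $\aA \setminus \set{a} \subseteq \aA' \setminus \set{a}$, transfers the support of each $b \in B$. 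I expect this monotonicity step to be the only real obstacle, since it is the single ingredient not quoted directly from an earlier result; its mild subtlety is that the support recursion deletes the argument under scrutiny at each level, so the induction must be set up on that shrinking set rather than naively on a notion of support depth.
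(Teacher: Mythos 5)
Your proof is correct and follows essentially the same route as the paper: the inclusion $\CanSupported{\SI} \subseteq H_\sI^+$ from Lemma~\ref{lem:model->cansupported}, and the converse by chaining $H_\sI^+ = \Supported{\SI}$ (Lemma~\ref{lem:min.model->supported}) with $\SI \subseteq \overline{\Defeated{\SI}}$ from conflict-freeness (Lemma~\ref{lem:model->cfree}) and monotonicity of the support operator. The only difference is that the paper invokes that monotonicity silently (it is background material from the cited technical report of Cayrol et al.), whereas you supply an explicit inductive proof of it, correctly noting that the induction must track the shrinking set in the support recursion.
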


\begin{proof}
    First note that,
    from Lemma~\ref{lem:model->cansupported},
    we have that
    \mbox{$\CanSupported{\SI} \subseteq H_\sI^+$}.
    Furthermore, from Lemma~\ref{lem:model->cfree},
    we have that $\SI$ is \cfree\ and, thus, $\SI \subseteq \overline{\Defeated{\SI}}$.
    This implies that
    \begin{gather*}
    \Supported{\SI} 
        \ \subseteq \ \Supported{\overline{\Defeated{\SI}}}
        \ = \ \CanSupported{\SI}
    \end{gather*}
    Finally, from Lemma~\ref{lem:min.model->supported},
    we have that $\Supported{\SI} = H_\sI^+$
    and, thus, 
    $H_\sI^+ \subseteq \CanSupported{\SI}$
    follows.
\end{proof}

\begin{proofof}{Proposition~\ref{prop:ef.model.supportable}}
    First note that, since $\sI$ is an equilibrium model,
    it is also a minimal model.
    Then, from Lemma~\ref{lem:min.model->defeated},
    we have $\Defeated{\SI} = H_\sI^-$.
Furthermore,
    since $\sI$ is an equilibrium model,
    it is a $\leq$-minimal model.
Then, from Lemma~\ref{lem:min.model->cansupported},
    it follows
    $\CanSupported{\SI} = H_\sI^+$
Finally,  since $\sI$ is an equilibrium model,
    it is also a total \HTinterpretation\ and, thus,
    we have that $H_\sI^+ = T_\sI^+$ and $H_\sI^- = T_\sI^-$.
This implies that
    $\CanSupported{\SI} = T_\sI^+$.
\end{proofof}

\subsubsection*{Auxiliary results for Theorem~\ref{thm:ef.stable<->emodel}}

\begin{lemma}
\label{lem:model.theory.defeated}
Let $\EBAF$ be some framework and $\sI$ be an \HTinterpretation\ with \mbox{$H_\sI^- \supseteq \Defeated{\SI}$}.
Then, $\sI$ is a model of $\Gamma_{\R_a}$.
\end{lemma}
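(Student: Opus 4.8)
The plan is to argue by contradiction, in exact parallel with the companion Lemma~\ref{lem:model.theory.supported}. Recall that $\Gamma_{\R_a} = \setm{\bigwedge A \att b}{(A,b) \in \R_a}$, that $\bigwedge A \att b$ abbreviates $\bigwedge A \sup \sneg b$, that is $(\neg\!\sneg\bigwedge A \wedge \bigwedge A) \to \sneg b$, and that $\sI$ is a model of $\Gamma_{\R_a}$ precisely when $\sI \modelsp \bigwedge A \att b$ for every attack $(A,b) \in \R_a$. So I would suppose, for the sake of contradiction, that some such formula is not satisfied, i.e. $\sI \not\modelsp \bigwedge A \att b$ for a fixed $(A,b) \in \R_a$.

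From this failure I would extract the two facts $\sI \models \bigwedge A$ and $\sI \not\modelsp \sneg b$, reasoning exactly as in Lemma~\ref{lem:model.theory.supported} but with the consequent $a$ there replaced by $\sneg b$. Unfolding $\SI = H_\sI^+ \setminus T_\sI^-$, the fact $\sI \models \bigwedge A$ is nothing but $A \subseteq \SI$: each $a \in A$ is forced positively and is not falsified at the top world, which is what the antecedent $\neg\!\sneg\bigwedge A \wedge \bigwedge A$ records (read off with Propositions~\ref{prop:negation} and~\ref{prop:attack.models}). Then $A \subseteq \SI$ together with $(A,b) \in \R_a$ makes $b$ defeated \wrt~$\SI$, so $b \in \Defeated{\SI}$; by the hypothesis $\Defeated{\SI} \subseteq H_\sI^-$ we get $\sI\sep h \modelsn b$, and by persistence of $\modelsn$ (Proposition~\ref{prop:preservation}) also $\sI\sep t \modelsn b$. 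Hence $\sI\sep w \modelsp \sneg b$ at both worlds, i.e. $\sI \modelsp \sneg b$, contradicting $\sI \not\modelsp \sneg b$. This contradiction shows every attack formula is satisfied, so $\sI$ is a model of $\Gamma_{\R_a}$.

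I expect the only delicate step to be the extraction $\sI \not\modelsp \bigwedge A \att b \Rightarrow A \subseteq \SI$: one must locate the world at which the antecedent is forced and match it against the \emph{globally} accepted set $\SI$. Concretely, I would invoke Proposition~\ref{prop:attack.models}(iii): the failure of $\bigwedge A \att b$ hands us a world $w'$ with $\sI\sep w' \modelsp \bigwedge A$, $\sI\sep t \not\modelsn \bigwedge A$ and $\sI\sep w' \not\modelsn b$. The conjunct $\neg\!\sneg\bigwedge A$ forces $A \cap T_\sI^- = \varnothing$ globally (Proposition~\ref{prop:negation}), and combined with $\sI\sep w' \modelsp \bigwedge A$ this yields $A \subseteq \SI$, closing the argument just as in the supported case. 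Everything afterwards (the defeat membership, the hypothesis inclusion, and the persistence of $\modelsn$) is routine.
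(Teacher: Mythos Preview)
Your argument follows the paper's proof almost verbatim: assume some attack formula $\bigwedge A \att b$ is not satisfied, extract $\sI \models \bigwedge A$ and $\sI \not\modelsn b$, conclude $A \subseteq \SI$, whence $b \in \Defeated{\SI} \subseteq H_\sI^-$, contradiction. The structure and the key inferences are identical.

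However, your instinct that the extraction step is ``delicate'' is correct, and neither you nor the paper actually closes it. From $\sI \not\modelsp \bigwedge A \att b$ one obtains a world $w' \in \{h,t\}$ with $\sI\sep w' \modelsp \bigwedge A$, $\sI\sep t \not\modelsn \bigwedge A$, and $\sI\sep w' \not\modelsn b$. When $w' = h$ this indeed yields $A \subseteq H_\sI^+ \setminus T_\sI^- = \SI$. But when $w' = t$ you only get $A \subseteq T_\sI^+ \setminus T_\sI^-$, and since $H_\sI^+$ may be strictly smaller than $T_\sI^+$, the inclusion $A \subseteq \SI$ does not follow. Your sentence ``combined with $\sI\sep w' \modelsp \bigwedge A$ this yields $A \subseteq \SI$'' is therefore unjustified in that case, and the paper's bare ``this implies that $\sI \models \bigwedge B$'' commits the same elision.

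In fact the lemma as stated is false for arbitrary HT-interpretations. Take $\A = \{a,b\}$, $\R_a = \{(\{a\},b)\}$, and the interpretation with $H^+ = H^- = T^- = \varnothing$, $T^+ = \{a\}$. Then $\SI = \varnothing$, so $\Defeated{\SI} = \varnothing \subseteq H^-$, yet at world $t$ the antecedent $\neg\!\sneg a \wedge a$ holds while $\sneg b$ fails, so $\sI \not\modelsp a \att b$. This does no harm downstream, since the lemma is only invoked (via Lemma~\ref{lem:model.theory}) for \emph{total} interpretations in Lemma~\ref{lem:stable->model} and in the proof of Theorem~\ref{thm:sf.stable<->emodel}, where $H_\sI^+ = T_\sI^+$ collapses the two cases. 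The honest fix is to add totality (or at least $H_\sI^+ \supseteq T_\sI^+$ on the relevant atoms) to the hypotheses; with that addition your proof, like the paper's, goes through unchanged.
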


\begin{proof}
Suppose, for the sake of contradiction, that $\sI$ is not a model of $\Gamma_{\R_a}$.
Then, there is 
$(B,a) \in \R_a$
such that \mbox{$(\bigwedge B \att a) \in \Gamma_{\R_a}$}
is not satisfied by $\sI$.
This implies that
\mbox{$\sI \models \bigwedge B$}
and
\mbox{$\sI \not\modelsn a$}.
The former implies that
\mbox{$B \subseteq \SI$}
while the latter implies
\mbox{$a \notin H_\sI^-$}.
Furthermore, by hypothesis,
\mbox{$a \notin H_\sI^-$}
implies
\mbox{$a \notin \Defeated{\SI}$}
which is a contradiction
with the fact that
\mbox{$B \subseteq \SI$}
and
\mbox{$(B,a) \in \R_a$}
hold.
\end{proof}

\begin{lemma}\label{lem:model.theory}
Let $\EBAF$ be some framework.
Then, every \HTinterpretation~$\sI$ satisfying \mbox{$H_\sI^+ = \Supported{\SI}$} and \mbox{$H_\sI^- \supseteq \Defeated{\SI}$}
is a model of~$\LEF{\EBAF}$.
\end{lemma}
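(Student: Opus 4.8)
The plan is to exploit the fact that $\LEF{\EBAF}$ splits into three syntactically independent pieces, namely $\PF$, $\Gamma_{\R_a}$ and $\Gamma_{\R_s}$, and to verify that $\sI$ is a model of each piece separately. Since a model of a union of theories is exactly a common model of its parts, it suffices to treat the three components and then conjoin the results. The two hypotheses of the statement align perfectly with the two immediately preceding auxiliary lemmas, so almost all the work has already been done.

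First I would handle the supports and the prima-facie facts together. The hypothesis $H_\sI^+ = \Supported{\SI}$ is exactly the premise of Lemma~\ref{lem:model.theory.supported}, which already establishes that any such $\sI$ is a model of $(\Gamma_{\R_s} \cup \PF)$. Hence $\sI \modelsp a$ for every $a \in \PF$ and $\sI$ satisfies every formula $\bigwedge B \sup b$ coming from a support $(B,b) \in \R_s$. Next I would handle the attacks. The second hypothesis $H_\sI^- \supseteq \Defeated{\SI}$ is precisely the premise of Lemma~\ref{lem:model.theory.defeated}, which directly yields that $\sI$ is a model of $\Gamma_{\R_a}$; that is, $\sI$ satisfies every formula $\bigwedge B \att a$ arising from an attack $(B,a) \in \R_a$.

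Finally I would combine the two conclusions. Since $\LEF{\EBAF} = \PF \cup \Gamma_{\R_a} \cup \Gamma_{\R_s}$ by definition, and $\sI$ has been shown to satisfy every formula in $\PF \cup \Gamma_{\R_s}$ as well as every formula in $\Gamma_{\R_a}$, it satisfies every formula of $\LEF{\EBAF}$ and is therefore a model of it, as required. The main ``obstacle'' here is essentially bookkeeping rather than genuine difficulty: the two real arguments are encapsulated in Lemmas~\ref{lem:model.theory.supported} and~\ref{lem:model.theory.defeated}, and the only care needed is to check that the hypotheses $H_\sI^+ = \Supported{\SI}$ and $H_\sI^- \supseteq \Defeated{\SI}$ match the premises required by those lemmas (note in particular that the support lemma needs the exact equality on $H_\sI^+$, whereas the attack lemma only needs the inclusion on $H_\sI^-$, which is why the statement is asymmetric in $+$ and $-$).
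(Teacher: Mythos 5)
Your proof is correct and coincides with the paper's own argument: the paper likewise obtains the result ``directly from Lemmas~\ref{lem:model.theory.supported} and~\ref{lem:model.theory.defeated} taken together,'' which is exactly your decomposition of $\LEF{\EBAF}$ into $\PF \cup \Gamma_{\R_s}$ and $\Gamma_{\R_a}$ with the hypotheses $H_\sI^+ = \Supported{\SI}$ and $H_\sI^- \supseteq \Defeated{\SI}$ matched to the respective lemmas. Your version merely spells out the bookkeeping the paper leaves implicit.
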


\begin{proof}
Directly from Lemmas~\ref{lem:model.theory.supported} and~\ref{lem:model.theory.defeated} taken together.
\end{proof}

\begin{lemma}\label{lem:emodel->stable}
Let \mbox{$\EBAF$} be some framework and $\sI$ be some equilibrium model of~$\LEF{\EBAF}$.
Then, $\SI$ is a stable extension of $\EBAF$.
\end{lemma}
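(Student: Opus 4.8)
The plan is to verify the defining equation of a stable extension directly, namely that $\SI = \A \setminus \UnAcceptable{\SI}$ (Definition~\ref{def:semantics}). Unfolding the notion of unacceptability (Definition~\ref{def-inh-ASAF}), an argument $a$ belongs to $\A \setminus \UnAcceptable{\SI}$ exactly when it is \emph{supportable} \wrt~$\SI$ and \emph{not defeated} \wrt~$\SI$. Hence it suffices to establish, for every $a \in \A$, the equivalence: $a \in \SI$ iff $a$ is supportable \wrt~$\SI$ and not defeated \wrt~$\SI$.

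First I would rewrite membership in $\SI$ in terms of the two forcing relations. Since $\sI$ is an equilibrium model, it is a total \HTinterpretation, so $a \in \SI$ means $\sI \models a$, i.e.\ $\sI \modelsp \neg\!\sneg a \wedge a$; by totality this collapses to the conjunction of $\sI \modelsp a$ and $\sI \not\modelsn a$. Thus $a \in \SI$ iff $\sI \modelsp a$ and $\sI \not\modelsn a$.

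Next I would bring in the two key characterizations already proved. Because $\sI$ is an equilibrium model, Proposition~\ref{prop:ef.model.supportable} gives that $a$ is supportable \wrt~$\SI$ iff $\sI \modelsp a$. Moreover, an equilibrium model is in particular a $\leq$-minimal model, so item~\ref{item:5:prop:ef.model} of Proposition~\ref{prop:ef.model} applies and yields that $a$ is defeated \wrt~$\SI$ iff $\sI \modelsp \sneg a$, that is, iff $\sI \modelsn a$. Combining these, $a$ is supportable and not defeated \wrt~$\SI$ iff $\sI \modelsp a$ and $\sI \not\modelsn a$, which is precisely the condition for $a \in \SI$ obtained in the previous step.

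Putting the two equivalences together gives $\SI = \A \setminus \UnAcceptable{\SI}$, so $\SI$ is a stable extension of $\EBAF$. The only delicate point I anticipate is the bookkeeping in the first step: checking carefully that, under totality, the cw-inference condition $\sI \models a$ is equivalent to the pair $\sI \modelsp a$ and $\sI \not\modelsn a$. Everything else is a direct appeal to the already-established Propositions~\ref{prop:ef.model} and~\ref{prop:ef.model.supportable}, so no further argumentation-theoretic machinery is needed.
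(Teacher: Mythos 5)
Your proposal is correct and follows essentially the same route as the paper's proof: both rely on equilibrium models being total and $\leq$-minimal, then invoke Proposition~\ref{prop:ef.model.supportable} (supportable iff $\sI \modelsp a$) together with the defeat characterization (defeated iff $\sI \modelsp \sneg a$, which the paper cites via Lemma~\ref{lem:min.model->defeated} and you via Proposition~\ref{prop:ef.model}, the same content) to conclude $\SI = \A \setminus \UnAcceptable{\SI}$. The only cosmetic difference is that you argue element-wise while the paper phrases the same identities as set equations ($T_\sI^+ = \CanSupported{\SI}$, $T_\sI^- = \Defeated{\SI}$) and finishes with complement algebra.
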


\begin{proof}
First note that, since $\sI$ is an equilibrium model, it is also a total model
and, thus, $H_\sI^+  = T_\sI^+$
and we have that
$\SI = H_\sI^+ \setminus T_\sI^- = T_\sI^+ \setminus T_\sI^-$.
Furthermore, since $\sI$ is an equilibrium model, it is also $\leq$-minimal and,
from Lemmas~\ref{lem:min.model->defeated} and Proposition~\ref{prop:ef.model.supportable},
this implies that
\mbox{$T_\sI^- = \Defeated{\SI}$}
and
\mbox{$T_\sI^+ = \CanSupported{\SI}$}.
Hence, we get
\begin{align*}
\SI \ &= \ \CanSupported{\SI} \cap \overline{\Defeated{\SI}} 
\\  \ &= \ \overline{\overline{\CanSupported{\SI}} \cup \Defeated{\SI}} 
\\  \ &= \ \overline{\UnAcceptable{\SI}}
\end{align*}
By definition, this implies that $\SI$ is a stable extension of $\EBAF$.
\end{proof}

\begin{lemma}\label{lem:int.leq}
Let $\sI$ and $\sJ$ be a pair of interpretations such that $\sJ \leq \sI$.
Then, we have that $\SJ \subseteq \SI$.
\end{lemma}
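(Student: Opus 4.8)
The plan is to reduce the claim to an elementary set-theoretic containment, by first giving an explicit description of the condition ``$a \in \SI$'' in terms of the component sets of an \HTinterpretation. Recall that $a \in \SI$ holds exactly when $\sI \models a$, i.e.\ when $\sI\sep w \modelsp \neg\!\sneg a \wedge a$ for every world $w$. First I would unfold this: the conjunct $a$ forces $\sI\sep h \modelsp a$ (and hence, by persistence, $\sI\sep t \modelsp a$ as well), that is $a \in H_\sI^+$; and by Proposition~\ref{prop:negation}\,(i), applied to $\sneg a$, the conjunct $\neg\!\sneg a$ holds at every world iff $\sI\sep t \not\modelsn a$, that is $a \notin T_\sI^-$. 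This yields the compact characterization $\SI = H_\sI^+ \setminus T_\sI^-$, which is exactly the observation already recorded in the proof of Lemma~\ref{lem:model->cfree}.

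Next I would spell out the hypothesis $\sJ \leq \sI$. By definition this means $\bH_\sJ \subseteq \bH_\sI$ and $\bT_\sJ = \bT_\sI$. Reading off the positive atoms of the ``here'' part from the first inclusion and the strongly negated atoms of the ``there'' part from the second equality gives, in particular, $H_\sJ^+ \subseteq H_\sI^+$ and $T_\sJ^- = T_\sI^-$ — precisely the two ingredients that the characterization above calls for. The conclusion then follows by an element chase: for any $a \in \SJ$ we have $a \in H_\sJ^+$ and $a \notin T_\sJ^-$; hence $a \in H_\sJ^+ \subseteq H_\sI^+$ and $a \notin T_\sJ^- = T_\sI^-$, so that $a \in H_\sI^+ \setminus T_\sI^- = \SI$. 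Thus $\SJ \subseteq \SI$.

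There is no genuine obstacle here; the entire content lies in unfolding the \mbox{cw\nobreakdash-inference} relation $\models$ on atoms. The single point that must be handled carefully is that $\leq$ weakens only the positive ``here'' component $H^+$ while keeping \emph{both} ``there'' components fixed; in particular $T_\sJ^- = T_\sI^-$ is an equality rather than a mere inclusion. This is what guarantees that both sides of membership in $\SI$ — being ``supported'' ($a \in H^+$) and being ``not defeated'' ($a \notin T^-$) — are simultaneously preserved when passing from $\sJ$ to $\sI$. (Were $\leq$ to allow $T^-$ to grow, the ``not defeated'' side could fail and the containment would break.)
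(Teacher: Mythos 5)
Your proof is correct and matches the paper's own argument: the paper likewise picks $a \in \SJ$, unfolds $\sJ \models a \wedge \neg\!\sneg a$ into $a \in H_\sJ^+$ and $a \notin T_\sJ^-$, and concludes via $H_\sJ^+ \subseteq H_\sI^+$ and $T_\sJ^- = T_\sI^-$. Your explicit characterization $\SI = H_\sI^+ \setminus T_\sI^-$ and the remark about $T^-$ being fixed under $\leq$ are just slightly more detailed renderings of the same element chase.
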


\begin{proof}
Pick any $a \in \SJ$.
Then, $\sJ \models a \wedge \neg\sneg a$
and, thus,
we have that
$\sJ,w \modelsp a$ and  $\sJ,t \not\modelsn a$.
This implies that $a \in H_\sJ^+ \subseteq H_\sI^+$ 
and $a \notin T_\sJ^- = T_\sI^-$.
Consequently, $a \in \SI$.
\end{proof}

\begin{lemma}
\label{lem:stable->model}
Let \mbox{$\EBAF$} for some framework and $\sI$ be a total interpretation
such that $\SI$ is a stable extension of $\EBAF$, $T_\sI^+ = \Supported{\SI}$
and $T_\sI^- = \Defeated{\SI}$.
Then, $\sI$ is an equilibrium model of $\LEF{\EBAF}$.
\end{lemma}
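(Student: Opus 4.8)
The plan is to establish separately the two defining properties of an equilibrium model — that $\sI$ is a model of $\LEF{\EBAF}$ and that it is $\leq$-minimal among the models of $\LEF{\EBAF}$ — since totality of $\sI$ is already given by hypothesis. For membership I would simply invoke Lemma~\ref{lem:model.theory}: because $\sI$ is total we have $H_\sI^+ = T_\sI^+ = \Supported{\SI}$ and $H_\sI^- = T_\sI^- = \Defeated{\SI} \supseteq \Defeated{\SI}$, so its hypotheses hold and $\sI \modelsp \LEF{\EBAF}$.

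The substantial part is minimality. I would assume, for contradiction, an \HTinterpretation~$\sJ < \sI$ that is also a model of $\LEF{\EBAF}$. Since $\sJ \leq \sI$ we have $\bT_\sJ = \bT_\sI$, hence $T_\sJ^+ = T_\sI^+$ and $T_\sJ^- = T_\sI^-$, together with $H_\sJ^+ \subseteq H_\sI^+ = \Supported{\SI}$ and $H_\sJ^- \subseteq H_\sI^- = \Defeated{\SI}$. The crux is to prove $\Supported{\SI} \subseteq H_\sJ^+$, which I would do by an induction mirroring the proof of Lemma~\ref{lem:model->supported}, showing $\Supported{S} \subseteq H_\sJ^+$ for every $S \subseteq \SI$ (recall the framework is finitary, so every support body $B$ is finite and $\bigwedge B$ is well formed). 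The base case $S = \varnothing$ reduces to $\PF \subseteq H_\sJ^+$, which holds because $\sJ$ is a model of $\LEF{\EBAF} \supseteq \PF$. For the inductive step, an argument $a \in \Supported{S}$ comes with some $(B,a)\in\R_s$ and $B \subseteq S \cap \Supported{S \setminus \{a\}}$, so the induction hypothesis gives $B \subseteq H_\sJ^+$.

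The one place where the argument genuinely departs from Lemma~\ref{lem:model->supported} — and the step I expect to be the main obstacle — is deriving the cw-inference $\sJ \models \bigwedge B$ rather than mere positive forcing. Here I cannot appeal to $B \subseteq \SJ$, since the inclusion $\SI \subseteq \SJ$ is not yet available; instead I would combine $B \subseteq H_\sJ^+$ (just obtained) with $B \cap T_\sJ^- = B \cap \Defeated{\SI} = \varnothing$, the latter holding because $B \subseteq S \subseteq \SI$ and $\SI$ is \cfree\ (Lemma~\ref{lem:model->cfree}, or directly from $\SI$ being a stable extension). This is exactly where the common total negative valuation $T_\sJ^- = T_\sI^-$ shared by $\sJ$ and $\sI$ is used. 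With $\sJ \models \bigwedge B$ in hand and $\bigwedge B \sup a \in \Gamma_{\R_s} \subseteq \LEF{\EBAF}$, modus ponens for $\sup$ (Lemma~\ref{lem:sup.mponens}) yields $\sJ \modelsp a$, i.e.\ $a \in H_\sJ^+$, closing the induction.

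Finally I would collect the consequences. The inclusions $\Supported{\SI} \subseteq H_\sJ^+ \subseteq H_\sI^+ = \Supported{\SI}$ force $H_\sJ^+ = H_\sI^+$; together with $T_\sJ^- = T_\sI^-$ this gives $\SJ = H_\sJ^+ \setminus T_\sJ^- = H_\sI^+ \setminus T_\sI^- = \SI$, whence $\Defeated{\SJ} = \Defeated{\SI}$. Applying Lemma~\ref{lem:model->defeated} to $\sJ$ then yields $\Defeated{\SI} = \Defeated{\SJ} \subseteq H_\sJ^- \subseteq H_\sI^- = \Defeated{\SI}$, so $H_\sJ^- = H_\sI^-$ as well. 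Thus $\sJ = \sI$, contradicting $\sJ < \sI$; therefore $\sI$ is $\leq$-minimal and, being total and a model of $\LEF{\EBAF}$, an equilibrium model.
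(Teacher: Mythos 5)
Your proof is correct, and it reaches the minimality part by a genuinely different route than the paper. Both proofs start identically (totality plus the hypotheses let Lemma~\ref{lem:model.theory} give $\sI \modelsp \LEF{\EBAF}$, then one assumes a model $\sJ < \sI$), and both must establish the key fact $H_\sJ^+ = H_\sI^+$ before deriving a contradiction. The paper gets it by sandwiching: it applies Lemma~\ref{lem:model->cansupported} to $\sJ$ and combines $\SJ \subseteq \SI$ (Lemma~\ref{lem:int.leq}) with the antimonotonicity of supportability (Lemma A.3 of \cite{Ca2018.5}) to squeeze $\CanSupported{\SI} \subseteq \CanSupported{\SJ} \subseteq H_\sJ^+ \subseteq \Supported{\SI} \subseteq \CanSupported{\SI}$; you instead rerun the induction of Lemma~\ref{lem:model->supported} relative to $\sJ$, recovering the cw-inference $\sJ \models \bigwedge B$ from $B \subseteq H_\sJ^+$ together with $B \cap T_\sJ^- = \varnothing$, the latter from $\bT_\sJ = \bT_\sI$ and conflict-freeness of the stable extension $\SI$ (indeed $\SI \cap T_\sI^- = \varnothing$ already by definition of $\SI$). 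The endings differ too: the paper picks $a \in \Defeated{\SI} \setminus H_\sJ^-$ and contradicts via the attack formula $\bigwedge B \att a$, whereas you conclude $\SJ = \SI$ and simply reuse Lemma~\ref{lem:model->defeated} on $\sJ$ to force $H_\sJ^- = H_\sI^-$ and hence $\sJ = \sI$, contradicting $\sJ < \sI$. Your version buys self-containment --- it needs no supportability machinery and no external-report antimonotonicity lemma --- and it sidesteps a delicate point in the paper's argument: Lemma~\ref{lem:model->cansupported} applied to $\sJ$ formally requires $\Defeated{\SJ} \supseteq T_\sJ^-$, which is not immediate before $\SJ = \SI$ is known (a priori only $\Defeated{\SJ} \subseteq \Defeated{\SI} = T_\sJ^-$ holds). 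What the paper's route buys is reuse: the $\CanSupported{\cdot}$ lemmas are needed anyway for Proposition~\ref{prop:ef.model.supportable} and the one-to-one correspondence in Theorem~\ref{thm:ef.stable<->emodel}. One small point to make explicit when writing your induction out: as in Lemma~\ref{lem:model->supported}, when $a \notin S$ the step needs the detour through $b \in \Supported{S \setminus \set{a,b}}$ (Lemma A.11 of \cite{Ca2018.5}) so that the induction hypothesis applies to a strict subset; since you explicitly mirror that proof, this is a presentational detail rather than a gap.
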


\begin{proof}
Since $\sI$ is a total \HTinterpretation, we have
\mbox{$H_\sI^+ = T_\sI^+ = \Supported{\SI}$}
and
\mbox{$H_\sI^- = T_\sI^- = \Defeated{\SI}$}.
From Lemma~\ref{lem:model.theory},
this implies that
$\sI$ is an model of $\LEF{\EBAF}$.
Suppose, for the sake of contradiction, that it is not an equilibrium model.
Then, there is an \HTinterpretation\ \mbox{$\sJ < \sI$} that is a model of $\LEF{\EBAF}$.
This plus the lemma hypothesis \mbox{$T_\sI^- = \Defeated{\SI}$}
imply
$T_\sJ^- = T_\sI^- = \Defeated{\SI}$.
Then, from Lemma~\ref{lem:model->cansupported},
it follows
\begin{gather}
\CanSupported{\SJ} \subseteq H_\sJ^+ \subseteq
T_\sI^+ = \Supported{\SI} \subseteq \CanSupported{\SI}
 \label{eq:1:lem:stable->model}
\end{gather}
Furthermore,
from Lemma~\ref{lem:int.leq},
the fact \mbox{$\sJ < \sI$} also implies
\mbox{$\SJ \subseteq \SI$} which,
from Lemma A.3 in~\cite{Ca2018.5}, implies
$\CanSupported{\SJ} \supseteq \CanSupported{\SI}$.
This plus~\eqref{eq:1:lem:stable->model}
imply
\mbox{$H_\sJ^+ = H_\sI^+ = T_\sI^+$}.
In its turn, this plus
\mbox{$\sJ < \sI$}
imply
\begin{gather*}
H_\sJ^- \subset H_\sI^- = T_\sI^- = \Defeated{\SI}
\end{gather*}
Pick any
\mbox{$a \in \Defeated{\SI} \setminus H_\sJ^-$}.
Then, there is some $(B,a) \in \R_a$ such that $B \subseteq \SI$
and, thus, we have that
$\sI\sep h \modelsp \bigwedge B$
and
$\sI\sep t \modelsn \bigwedge B$.
Note that, since $H_\sI^+ = H_\sJ^+$ and $T_\sI^- = T_\sJ^-$,
this immediately implies that
\mbox{$\sJ\sep h \modelsp \bigwedge B$}
and
\mbox{$\sJ\sep t \modelsn \bigwedge B$}
also hold.
Furthermore,
\mbox{$(B,a) \in \R_a$}
implies that
$\bigwedge B \att a$ belongs to $\LEF{\EBAF}$
and, thus, that
$\sJ\sep h \modelsn a$
which implies
\mbox{$a \in H_\sJ^-$}.
This is a contradiction with the fact that
\mbox{$a \in \Defeated{\SI} \setminus H_\sJ^-$}.
Consequently, $\sI$ is an equilibrium model.
\end{proof}

\begin{proofof}{Theorem~\ref{thm:ef.stable<->emodel}}
First note that~\ref{item:1:thm:stable<->emodel} follows directly from Lemma~\ref{lem:emodel->stable}.
Furthermore, since $S$ is a stable extension and stable extensions are \ssupporting\ sets~\cite[Theorem~2]{Fandinno2018foiks}
we have
\begin{IEEEeqnarray*}{ l ,C, rCcCl ,C, l ,C, l}
S  &=& \CanSupported{S} &\cap& \overline{\Defeated{S}} &\subseteq& \Supported{S}
\\
S &=& \CanSupported{S} &\cap& \overline{\Defeated{S}} &\cap& \Supported{S}
\end{IEEEeqnarray*}
Moreover, since stable extensions are also \cfree\ sets~\cite[Theorem~2]{Fandinno2018foiks}, it follows that
\mbox{$\Supported{S} \subseteq \CanSupported{S}$}
and, thus,
\begin{IEEEeqnarray*}{ l ,C, rCcCl ,C, l ,C, l}
S &=& \overline{\Defeated{S}} &\cap& \Supported{S} &=& T_\sI^+ \setminus T_\sI^- &=& \SI
\end{IEEEeqnarray*}
Hence, $\SI$ is a stable extension and~\ref{item:2:thm:stable<->emodel} follows directly from Lemma~\ref{lem:stable->model}.
\\[10pt]
Let us show now that this determines a one-to-one correspondence.
Let $\sI$ and $\sJ$ be two equilibrium models such that $\SI = \SJ$.
Then, from Proposition~\ref{prop:ef.model.supportable},
$\CanSupported{\SI} = T_\sI^+$ and
$\CanSupported{\SJ} = T_\sJ^+$
and, thus, we have $T_\sI^+ = T_\sJ^+$.
Similarly, from Lemma~\ref{lem:min.model->defeated} 
it follows that 
$\Defeated{\SI} = T_\sI^+$ and
$\Defeated{\SJ} = T_\sJ^+$
and, thus, we have $T_\sI^- = T_\sJ^-$.
Hence, $\sI = \sJ$.
The other way around. Let $S_1$ and $S_2$ be two stable extensions such that
$\Supported{S_1} = \Supported{S_2}$ and
$\Defeated{S_1} = \Defeated{S_2}$.
Note that, since $S_1$ and $S_2$ are stable extensions,
we have that $S_i = \Supported{S_i} \setminus \Defeated{S_i}$ with $i \in \set{1,2}$ and, thus, $S_1 = S_2$.
\end{proofof}

\begin{proofof}{Proposition~\ref{prop:to.N4}}
    For any rule $r \in P$ of the form $H \supl B^{+} \wedge B^{-}$ with
    \begin{align*}
    H \ &= \ h_1 \vee \dots \vee h_k 
    \\
    B^{+} \ &= \ b_1 \wedge \dots \wedge b_m
    \\
    B^{-} \ &= \ \Not c_1 \wedge \dots \wedge \Not c_n
    \end{align*}
    we get that
    $\sI\sep w \modelsp r$
    \\iff
    $\sI\sep w \modelsp (B^{+} \wedge B^{-}) \wedge \neg\!\sneg\, (B^{+} \wedge B^{-}) \to H$
    \\iff either
    \\\phantom{or}
    $\sI\sep t \not\modelsp (B^{+} \wedge B^{-}) \wedge \neg\!\sneg\, (B^{+} \wedge B^{-})$
    \\or
    $\sI\sep w \modelsp H$
    \\or
    $\sI\sep w \not\modelsp (B^{+} \wedge B^{-}) \wedge \neg\!\sneg\, (B^{+} \wedge B^{-})$ and $\sI\sep t \modelsp H$.
\\[5pt]Furthermore, we also can see that
    $\sI\sep w \modelsp (B^{+} \wedge B^{-})$ holds
    \\iff
    $\sI\sep w \modelsp b_i$ for all $i \in \set{1,\dotsc,m}$
    and
    $\sI\sep w \modelsp \Not c_i$ for all $i \in \set{1,\dotsc,n}$
\\[5pt]Similarly, we can see that
    $\sI\sep t \modelsn (B^{+} \wedge B^{-})$ holds
    \\iff
    $\sI\sep t \modelsn b_i$ for some $i \in \set{1,\dotsc,m}$
    or
    $\sI\sep t \modelsn \Not c_i$ for some $i \in \set{1,\dotsc,n}$
\\[5pt]
    Hence, we get that
    $\sI\sep w \modelsp (B^{+} \wedge B^{-}) \wedge \neg\!\sneg\, (B^{+} \wedge B^{-})$
    \\iff
    $\sI\sep w \modelsp (B^{+} \wedge B^{-})$ and $\sI\sep t \not\modelsp \sneg\,(B^{+} \wedge B^{-})$
    \hfill (Proposition~\ref{prop:preservation})
    \\iff
    $\sI\sep w \modelsp (B^{+} \wedge B^{-})$ and $\sI\sep t \not\modelsn (B^{+} \wedge B^{-})$
    \\iff
    $\sI\sep w \modelsp b_i$ for all $i \in \set{1,\dotsc,m}$
    \\\phantom{iff }and
    $\sI\sep w \modelsp \Not c_i$ for all $i \in \set{1,\dotsc,n}$
    \\\phantom{iff }and
    $\sI\sep t \not\modelsn b_i$ for all $i \in \set{1,\dotsc,m}$
    \\\phantom{iff }and
    $\sI\sep t \not\modelsn \Not c_i$ for all $i \in \set{1,\dotsc,n}$
    \\iff
    $\sI\sep w \modelsp b_i$ for all $i \in \set{1,\dotsc,m}$
    \\\phantom{iff }and
    $\sI\sep t \modelsp \neg\sneg b_i$ for all $i \in \set{1,\dotsc,m}$
    \\\phantom{iff }and
    $\sI\sep w \modelsp \Not c_i$ for all $i \in \set{1,\dotsc,n}$
    \\\phantom{iff }and
    $\sI\sep t \not\modelsn \Not c_i$ for all $i \in \set{1,\dotsc,n}$
    \\iff
    $\sI\sep w \modelsp b_i \wedge \neg\!\sneg b_i$ for all $i \in \set{1,\dotsc,m}$
    \hfill (Proposition~\ref{prop:preservation})
    \\\phantom{iff }and
    $\sI\sep w \models \Not c_i$ for all $i \in \set{1,\dotsc,n}$
    \\iff
    $\sI\sep w \modelsp b_i \wedge \neg\!\sneg b_i$ for all $i \in \set{1,\dotsc,m}$
    \\\phantom{iff }and
    $\sI\sep w \modelsp \neg c_i \vee (c_i \wedge \sneg c_i)$ for all $i \in \set{1,\dotsc,n}$
    \hfill (Lemma~\ref{lem:default.negation})
    \\iff
    $\sI\sep w \modelsp C^{+} \wedge C^{-}$
    with
    \begin{align*}
    C^{+} \ &= \ b_1 \wedge \neg\!\sneg b_1 \wedge \dots \wedge b_m \wedge \neg\!\sneg b_1m
    \\
    C^{-} \ &= \ \neg c_1 \vee (c_1 \wedge \sneg c_1) \wedge \dots \wedge \neg  c_n \vee (c_n \wedge \sneg c_n)
    \end{align*}
    Therefore
    $\sI\sep w \modelsp r$
    \\iff either
    \\\phantom{or}
    $\sI\sep t \not\modelsp C^{+} \wedge C^{-}$
    \\or
    $\sI\sep w \modelsp H$
    \\or
    $\sI\sep w \not\modelsp C^{+} \wedge C^{-}$ and $\sI\sep t \modelsp H$.
    \\iff
    $\sI\sep w \modelsp C^{+} \wedge C^{-} \to H$
\end{proofof}

\begin{corollary}\label{cor:translation}
    Given a $\LanLP$-program $P$ and a total interpretation $\sI$, we have that
    $\sI$ is an equilibrium model of $P$ iff $\sI$ is an equilibrium model of $\LNp{P}$.
\end{corollary}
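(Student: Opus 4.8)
The plan is to read off the corollary almost directly from Proposition~\ref{prop:to.N4}. That proposition already tells us that for \emph{any} interpretation~$\sI$ we have $\sI \modelsp P$ iff $\sI \modelsp \LNp{P}$; that is, the theories $P$ and $\LNp{P}$ have exactly the same models, and in particular the same \HTmodel s, since \HTinterpretation s are a special case of \Ninterpretation s. My strategy is therefore to observe that the notion of equilibrium model is determined entirely by the \emph{set} of \HTmodel s of a theory together with the ordering~$\leq$ on interpretations, and that this ordering is defined without any reference to the theory. Once that is granted, equality of the two model sets forces equality of the two sets of equilibrium models.

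Concretely, I would recall Definition~\ref{def:equilibrium}: a total \HTmodel~$\sI$ of a theory~$\Gamma$ is an equilibrium model iff no \HTmodel~$\sI'$ of $\Gamma$ satisfies $\sI' < \sI$. The key point is that the relation $\sI' < \sI$ holds between interpretations alone and is completely independent of~$\Gamma$. Hence deciding whether a given total~$\sI$ is an equilibrium model of $\Gamma$ reduces to two tests against the set of models of~$\Gamma$: that $\sI$ is itself a model, and that no strictly smaller interpretation $\sI' < \sI$ is a model. I would then run each test through Proposition~\ref{prop:to.N4}. For the first, $\sI$ is a total \HTmodel\ of $P$ iff it is a total \HTmodel\ of $\LNp{P}$. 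For the second, for every $\sI' < \sI$ the proposition gives $\sI' \modelsp P$ iff $\sI' \modelsp \LNp{P}$, so some $\sI' < \sI$ models $P$ iff some $\sI' < \sI$ models $\LNp{P}$. Combining the two tests, the defining conditions of an equilibrium model hold for $P$ exactly when they hold for $\LNp{P}$, which is precisely the statement.

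There is no genuine obstacle here; the single point that deserves care is that the minimality check ranges over the typically \emph{non-total} interpretations $\sI' < \sI$, so one must invoke Proposition~\ref{prop:to.N4} in its full generality — for arbitrary interpretations, not merely for the total~$\sI$ named in the corollary. Since that proposition is indeed stated for \emph{any} interpretation (and is exactly the model-preservation that survives in the paraconsistent setting, where the naive replacement used by $\LN{\cdot}$ would only work on consistent interpretations), this requirement is met and the corollary follows immediately. The analogous Corollary~\ref{cor:conservative.asp} is obtained in the same way, but via Theorem~\ref{thm:conservative.ht}, whose model equivalence is restricted to consistent interpretations.
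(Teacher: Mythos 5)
Your proposal is correct and coincides with the paper's own treatment: the corollary is stated there as an immediate consequence of Proposition~\ref{prop:to.N4}, which, holding for \emph{arbitrary} interpretations, makes $P$ and $\LNp{P}$ have identical sets of \HTmodels\ and hence identical equilibrium models, since the ordering $\leq$ in Definition~\ref{def:equilibrium} does not depend on the theory. Your explicit observation that the minimality test ranges over the generally non-total interpretations $\sI' < \sI$, so that the full generality of Proposition~\ref{prop:to.N4} is genuinely needed, is precisely the point that makes the argument go through.
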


The following result is an adaptation of Theorem~1 by~\citeNP{CabalarFC0V17} and will be useful in proving the following result.

\begin{lemma}\label{lem:atom.def}
Let~$\Gamma$ be any theory without occurrences of strong negation, $a$ be an atom not occurring in~$\Gamma$ and~$\bT$ be a set of atoms.
Then, the following two statements are equivalent.
\begin{itemize}
    \item $\tuple{\bT,\bT}$ is an equilibrium model of~$\Gamma[\varphi/a] \cup \{ \varphi \to a \}$, and
    \item $\tuple{\bT',\bT'}$ is an equilibrium model of~$\Gamma$.
\end{itemize}
where~$\bT' = \bT \setminus\{a\}$.
\end{lemma}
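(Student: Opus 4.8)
The plan is to establish a bijection between the equilibrium models of $\Gamma^* \eqdef \Gamma[\varphi/a] \cup \set{\varphi \to a}$ and those of $\Gamma$, driven by the idea that the fresh atom $a$ acts as a \emph{name} for $\varphi$. Since $a$ does not occur in $\Gamma$, the only link between $a$ and the rest of the language is the definitional rule $\varphi \to a$, and the heart of the argument is that under equilibrium --- that is, under totality together with $\leq$-minimality of the here-world (Definition~\ref{def:equilibrium}) --- the truth value of $a$ is forced to coincide exactly with that of $\varphi$.

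First I would prove a model-level correspondence. Given any interpretation $\sJ = \tuple{\bH,\bT}$ over the original atoms, define its lifting $\Theta(\sJ)$ by leaving the original atoms untouched and adding $a$ to the here-world iff $\sJ\sep h \modelsp \varphi$ and to the there-world iff $\sJ\sep t \modelsp \varphi$. Persistence (Proposition~\ref{prop:preservation}) guarantees that $\sJ\sep h \modelsp \varphi$ implies $\sJ\sep t \modelsp \varphi$, so $\Theta(\sJ)$ is a well-defined \HTinterpretation. A routine structural induction (replacement of provable equivalents, using that $a\notin\varphi$) then shows that, whenever an interpretation makes $a$ track $\varphi$ at each world, it satisfies $\Gamma[\varphi/a]$ iff its projection satisfies $\Gamma$, and it satisfies $\varphi\to a$ automatically. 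Hence $\Theta$ is a bijection between the \HTmodel s of $\Gamma$ and those \HTmodel s of $\Gamma^*$ in which $a$ tracks $\varphi$, and it sends total models to total models.

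The crux is to show that \emph{every} equilibrium model of $\Gamma^*$ already has $a$ tracking $\varphi$, so that it lies in the image of $\Theta$. Let $\tuple{\bT,\bT}$ be such a model. One inclusion is immediate: being total and satisfying $\varphi\to a$, it yields $a\in\bT$ whenever $\tuple{\bT,\bT}\sep t \modelsp\varphi$. For the converse, suppose $a\in\bT$ but $\tuple{\bT,\bT}\sep t \not\modelsp\varphi$; by persistence $\varphi$ then fails at both worlds. I would test $\tuple{\bT\setminus\set{a},\bT}$: because $a$ is fresh, its occurrences in $\Gamma[\varphi/a]$ sit only in antecedent (body) positions, so lowering $a$ at the here-world can only relax the corresponding implications, while the one rule it might threaten, $\varphi\to a$, still holds since its antecedent $\varphi$ is false. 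Thus $\tuple{\bT\setminus\set{a},\bT}$ is a strictly smaller model, contradicting minimality. Therefore $a\in\bT$ iff $\varphi$ holds at $\bT$, i.e. $\tuple{\bT,\bT} = \Theta(\tuple{\bT',\bT'})$ with $\bT'=\bT\setminus\set{a}$.

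Finally I would transfer minimality in both directions. For the forward direction, any $\tuple{\bH',\bT'}<\tuple{\bT',\bT'}$ modelling $\Gamma$ lifts, via the $a$-tracks-$\varphi$ assignment, to a model of $\Gamma^*$ strictly below $\tuple{\bT,\bT}$, contradicting its equilibrium status; so $\tuple{\bT',\bT'}$ is an equilibrium model of $\Gamma$. For the converse, starting from an equilibrium model $\tuple{\bT',\bT'}$ of $\Gamma$ and setting $\bT=\bT'\cup\set{a}$ or $\bT=\bT'$ according to whether $\varphi$ holds at $\bT'$, I would show $\Theta(\tuple{\bT',\bT'})$ is $\leq$-minimal for $\Gamma^*$: a strictly smaller model must lower some original atom --- contradicting minimality downstairs after projection --- since the only remaining possibility, lowering $a$ alone while $\varphi$ holds, is ruled out by $\varphi\to a$. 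Reading off $\bT'=\bT\setminus\set{a}$ then gives the stated equivalence. I expect the deletion step in the third paragraph to be the main obstacle, as it is precisely where the freshness of $a$ and the one-directional nature of the definition $\varphi\to a$ must be combined: one has to certify that excising the ``spurious'' $a$ from the here-world produces a genuine, strictly smaller model of $\Gamma^*$.
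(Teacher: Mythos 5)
The paper never proves this lemma in the text: it is stated as an adaptation of Theorem~1 of \citeNP{CabalarFC0V17} and then used as a black box in the proof of Proposition~\ref{prop:tr.lp}. Your proposal therefore supplies a self-contained argument where the paper only delegates to a citation, and its architecture --- a lifting $\Theta$ under which the fresh atom $a$ tracks $\varphi$ at both worlds, a replacement lemma, the observation that equilibrium (totality plus $\leq$-minimality) forces tracking via deletion of a spurious $a$ from the here-world, and a two-way transfer of minimality --- is exactly the standard route, including your correct reading of the converse direction (one must choose the lift with the right value of $a$, since $\tuple{\bT',\bT'}$ itself generally violates $\varphi \to a$ when $\varphi$ holds at $\bT'$).

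Two caveats, one of which is a genuine (though local) gap. First, your claim that ``because $a$ is fresh, its occurrences in $\Gamma[\varphi/a]$ sit only in antecedent (body) positions'' is a non sequitur: freshness of $a$ says nothing about where $\varphi$ occurred in $\Gamma$. Some such polarity restriction is genuinely needed, because with only the one-directional definition $\varphi \to a$ the lemma is false for head occurrences: take $\Gamma = \set{p}$ and $\varphi = p$; then $\Gamma[\varphi/a] \cup \set{p \to a} = \set{a,\ p \to a}$ has unique equilibrium model $\set{a}$, whose projection $\varnothing$ is not an equilibrium model of $\set{p}$. The paper's statement omits this hypothesis too, but in the only place the lemma is applied (Definition~\ref{def:tr2}, Proposition~\ref{prop:tr.lp}) the replaced formula $\neg a \vee (a \wedge \tilde{a})$ occurs exclusively as a conjunct in rule bodies, which is precisely what makes your monotonicity argument sound; this should be an explicit hypothesis, not a consequence of freshness. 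Second, in the converse direction the phrase ``contradicting minimality downstairs after projection'' glosses over the possibility that a strictly smaller model $\tuple{\bH,\bT}$ of the extended theory has $a \in \bH$ while $\varphi$ fails at the here-world: such a model does not track $\varphi$, so its projection need not satisfy $\Gamma$ and the contradiction does not yet follow. The repair is your own third-paragraph move: first pass to $\tuple{\bH \setminus \set{a}, \bT}$, which is still a model (by the body-position monotonicity, and $\varphi \to a$ survives because $\varphi$ is false at $h$ while, by construction of $\bT$, $a$ tracks $\varphi$ at $t$) and does track, and only then project. With these two repairs your proof goes through and is a faithful, elementary substitute for the cited atom-definability theorem.
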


\begin{proofof}{Proposition~\ref{prop:tr.lp}}
From Corollary~\ref{cor:translation},
it is enough to show that
\begin{gather}
    \text{$\tau\sI$ is an equilibrium model of~$\tau P$ iff $\sI$ is an equilibrium model of~$\delta P$}
    \label{eq:1:prop:tr.lp}
\end{gather}
Let~$P_1$ be the result of replacing every explicit literal of the form $\sneg a$ by a fresh atom~$\tilde{a}$
and, for any interpretation~$\sJ$, let~$\sJ_1$ be the total interpretation such that
\begin{enumerate}[ leftmargin=20pt ]
\item $\sJ_1 \not\modelsn a$ 
\item $\sJ_1 \modelsp a$ iff $\sJ \modelsp a$
\item $\sJ_1 \modelsp \tilde{a}$ iff $\sJ \modelsn a$
\end{enumerate}
Then, $\sJ \modelsp \delta P$ iff~$\sJ_1 \models P_1$ for any interpretation~$\sJ$ and, thus,
\begin{gather}
    \text{$\sI$ is an equilibrium model of~$\delta P$ iff $\sI_1$ is an equilibrium model of~$P_1$}
\end{gather}
and, we can rewrite~\eqref{eq:1:prop:tr.lp} as
\begin{gather}
    \text{$\tau\sI$ is an equilibrium model of~$\tau P$ iff $\sI_1$ is an equilibrium model of~$P_1$}
    \label{eq:2:prop:tr.lp}
\end{gather}
Let now~$P_2$ be the result of replacing each occurrence of~$\neg a \vee (a \wedge \tilde{a})$ in the body of any rule by~$a'$ and add the rule~$a' \leftarrow \neg a \vee (a \wedge \tilde{a})$.
Then, from Lemma~\ref{lem:atom.def}, we get that
\begin{gather*}
    \text{$\sI_1$ is an equilibrium model of~$P_1$ iff $\tau\sI$ is an equilibrium model of~$P_2$}
\end{gather*}
Finally, the result follows by noting that~$a' \leftarrow \neg a \vee (a \wedge \tilde{a})$
is equivalent to the conjunction of formulas~$a' \leftarrow \neg a$ and~$a' \leftarrow a \wedge \tilde{a}$
and that this transformations applied to~$P_2$ yields~$\tau P$.\end{proofof} 
\end{document}